\def\argmin{\mathop{\mbox{ arg\,min}}}
\def\argmax{\mathop{\mbox{ arg\,max}}}
\newcommand{\deff}{d_{\mathrm{eff}}}
\definecolor{PalePurp}{rgb}{0.66,0.57,0.66}
\newcommand{\inner}[2]{\langle #1, #2 \rangle}
\title[Sparse Nonparametric Contextual Bandits]{Sparse Nonparametric Contextual Bandits}
\begin{document}

\maketitle

\begin{abstract}
We study the benefits of sparsity in nonparametric contextual bandit problems, in which the set of candidate features is countably or uncountably infinite. Our contribution is two-fold. First, using a novel reduction to sequences of multi-armed bandit problems, we provide lower bounds on the minimax regret, which show that polynomial dependence on the number of actions is generally unavoidable in this setting. Second, we show that a variant of the Feel-Good Thompson Sampling algorithm enjoys regret bounds that match our lower bounds up to logarithmic factors of the horizon, and have logarithmic dependence on the effective number of candidate features. When we apply our results to kernelised and neural contextual bandits, we find that sparsity enables better regret bounds whenever the horizon is large enough relative to the sparsity and the number of actions.
\end{abstract}

\section{Introduction}

The contextual bandit problem is a general model for sequential decision-making problems, in which at each step, a learner observes a context, plays an action in response to the context and then receives a reward. The goal of the learner is to maximise the reward accumulated over $n$ rounds, which is usually measured by the regret with respect to playing the best action for each context. The contextual bandit problem has attracted a great deal of attention because it is a faithful model of many real-world problems, such as personalised advertising \citep{abe1999learning}, personalised news recommendation \citep{li2010contextual} and medical treatment \citep{durand2018contextual}. In many practical situations, the set of possible contexts is very large, and the learner must use some sort of function approximation to learn general patterns that apply to new contexts. Previous works have considered finite-dimensional linear function approximation \citep{abe1999associative, dani2008stochastic, abbasi2011improved}, nonparametric function approximation \citep{bubeck2008online, kleinberg2008multi, srinivas2010gaussian, valko2013finite, slivkins2014contextual, singh2021continuum} and wide neural networks \citep{zhou2020neural, zhang2021neural, kassraie2022neural}. However, none of these approaches is entirely satisfactory. While linear methods lead to efficient estimation of the reward function, they typically only work well when one has considerable prior knowledge about the relationship between contexts, actions and rewards. In particular, the user is required to specify a (small) set of features such that the reward function is a linear combination of these features. Nonparametric methods are much more flexible, but they suffer from a curse of dimensionality. If the contexts are vectors, then the regret of a nonparametric contextual bandit algorithm can grow exponentially with the dimension of the contexts. Neural contextual bandit algorithms typically operate in the lazy regime \citep{jacot2018ntk, chizat2019lazy}, in which neural networks behave like kernel methods. As a result, these algorithms suffer from the same drawbacks as nonparametric methods. Can we achieve the best of both worlds: a contextual bandit algorithm that selects a small set of useful features from an infinite set of candidate features to achieve both flexibility and sample efficiency?

In this work, we study a class of contextual bandit problems that we call \emph{sparse nonparametric contextual bandits}. Briefly, we assume 
that the expected reward for each context-action pair is a linear combination of features selected from a set of infinitely many candidate features, which enables flexible function modelling. We further assume a sparse structure opening the way to efficient estimation. We consider both the case of \textit{countable sparsity}, where the set of candidate features is countable, and the case of \textit{uncountable sparsity}, where the set is uncountable. We fully describe the class under study in Section \ref{defSNPB}.
\paragraph{Contributions.} We consider sparse nonparametric contextual bandits with $n$ rounds, $K$ actions per round and sparsity $s$. In the uncountable sparsity model, we consider candidate features parameterised by a $d$-dimensional vector. Our contribution is as follows:
\begin{itemize}

\item First, using a novel reduction to sequences of multi-armed bandit problems, we establish lower bounds of order $\sqrt{Ksn}$ for countable sparsity and $\sqrt{Ksdn}$ for uncountable sparsity. These lower bounds show that it is not possible to achieve low worst-case regret when the number of actions is large relative to the number of rounds.
\item Second, we propose an algorithm based on the Feel-Good Thompson Sampling (FGTS) algorithm \citep{zhang2022feel}, which uses novel sparsity priors for nonparametric models. We prove that this algorithm enjoys regret bounds that nearly match our lower bounds, even when the sparsity $s$ is unknown. When the sparsity is known, our results demonstrate that FGTS is minimax optimal, up to logarithmic factors of $n$ (cf. Section 4).
\item In addition, we identify regimes in which sparsity enables improved regret bounds in kernelised and neural contextual bandit problems.
\end{itemize}
\paragraph{Outline.} The remainder of the paper is structured as follows. In Section \ref{sec:setting}, we formally describe the setting of sparse nonparametric contextual bandits and introduce some mild regularity conditions.
We also show
that our framework includes interesting settings that require feature learning, such as sparse kernelised contextual bandits and neural contextual bandits. In Section \ref{sec:lbs},  we state our lower bounds on the minimax regret and outline their proofs. In Section \ref{sec:ubs}, we describe FGTS with our sparsity priors, and we show that it enjoys regret bounds that closely match our lower bounds. Finally, in Section \ref{sec:discussion}, we summarise our findings. The proofs and the discussions of related work, limitations  and directions for future work are gathered in the Appendix. Additionally, in Appendix \ref{sec:kbs}, we use our upper and lower bounds on the minimax regret to identify regimes in which sparsity is helpful for regret minimisation in kernelised and neural contextual bandits.

\paragraph{Notation.}
For any $x \in \mathbb{R}$, $\lfloor x\rfloor$ is the greatest integer that is less than or equal to $x$ and $\lceil x\rceil$ is the least integer that is greater than or equal to $x$. For any positive integer $d$, $[d]$ is the set $\{1,\ldots,d\}$. For positive integers $a$ and $m$, $a ~\mathrm{mod}~ m$ is defined to be the unique integer $r \in \{0, 1, \dots, m-1\}$, such that $a = qm + r$, for some non-negative integer $q$. For any $q \in [1, \infty]$, $d \in \mathbb{N}$ and $R > 0$, we let $\mathbb{B}_q^d(R) = \{\theta \in \mathbb{R}^d: \|\theta\|_q \leq R\}$ denote the $d$-dimensional $\ell_q$-ball with radius $R$. For any set $\mathbb{T}$ and any $\epsilon > 0$, we use $\mathcal{M}(\mathbb{T}, \|\cdot\|, \epsilon)$ to denote the $\epsilon$-packing number of $\mathbb{T}$ (w.r.t.\ the norm $\|\cdot\|$) and $\mathcal{N}(\mathbb{T}, \|\cdot\|, \epsilon)$ to denote the $\epsilon$-covering number of $\mathbb{T}$.

\section{Problem Setting}
\label{sec:setting}

\subsection{Contextual Bandits}

We consider the following contextual bandit protocol, in which a learner interacts with an environment over a sequence of $n$ rounds. At the start of each round $t \in [n]$, the environment reveals a context $X_t \in \mathcal{X}$. In response, the learner selects an action $A_t \in \mathcal{A}$ and observes a real-valued reward $Y_t$. We let $\mathcal{F}_t = \sigma(X_1, A_1, Y_1, \dots, X_t, A_t, Y_t)$ denote the $\sigma$-field generated by the interaction history between the learner and the environment up to the end of round $t$, and we introduce the shorthand $\mathbb{E}_t[\cdot] = \mathbb{E}[\cdot|\mathcal{F}_{t-1}, X_t, A_t]$.

We assume that the action set is $\mathcal{A} = [K]$, and that each context $X_t$ is of the form $X_t = (X_{t,a})_{a \in \mathcal{A}}$, where each $X_{t,a}$ lies in some set $\mathcal{Z}$. For an arbitrary $x \in \mathcal{X}$ and $a \in [K]$, we use $x_a$ to denote the element in $\mathcal{Z}$ corresponding to the context $x$ and the action $a$. We allow the contexts to be selected by an adaptive adversary, which means that the environment can take the history $\mathcal{F}_{t-1}$ into account before selecting $X_t$. The rewards are assumed to be stochastic, and of the form $Y_t = f^{*}(X_{t},A_t) + \epsilon_t$, where $f^{*}: \mathcal{X} \times \mathcal{A} \to \mathbb{R}$ is a fixed reward function and $\epsilon_t$ is zero-mean, conditionally sub-Gaussian noise, meaning $\mathbb{E}_t[\exp(\lambda\epsilon_t)] \leq \exp(\lambda^2/8)$ for all $\lambda \in \mathbb{R}$.\footnote{More precisely, this means that $\epsilon_t$ is conditionally $\frac{1}{2}$-sub-Gaussian. Our regret analysis applies to any sub-Gaussian parameter. We assume that it is $\frac{1}{2}$ so that Theorem \ref{thm:expregretbound} is consistent with Theorem 1 in \citet{zhang2022feel}.} The goal of the learner is to minimise the expected cumulative regret, which is defined as
\begin{equation}\label{eq:freqregret}
R_n(f^{*}) = \mathbb{E}\left[\sum_{t=1}^{n}\max_{a \in \mathcal{A}}\{f^{*}(X_t, a)\} - f^{*}(X_t, A_t)\right]\,.
\end{equation}
The conditional distribution of the action $A_t$, conditioned on $\mathcal{F}_{t-1}$ and $X_t$ is denoted by $\pi_t(\cdot|\mathcal{F}_{t-1}, X_t)$. We call the sequence $(\pi_t)_{t=1}^{n}$ the policy of a contextual bandit algorithm.

\subsection{Sparse Nonparametric Contextual Bandits}\label{defSNPB}

The reward function $f^*$ is assumed to be an unknown linear combination of $s$ features that belong to a known set of infinitely many candidate features. The sparsity $s$ may or may not be known in advance. We consider two notions of sparsity that we refer to as \emph{countable sparsity} and \emph{uncountable sparsity}. To describe them, we start by recalling the standard, parametric, sparse linear contextual bandit problem, and then we show how our framework extends it.

In sparse linear contextual bandits, the reward function can be expressed as a weighted sum $f^{*}(x, a) = \sum_{i=1}^{p}w_i^*\phi_i(x_a)$ of finitely many features $\phi_1, \dots, \phi_p$. It is assumed that the weight vector $w^* \in \mathbb{R}^{p}$ contains only $s$ non-zero elements. A natural way to make this model more flexible is to express the reward function as the infinite weighted sum $f^*(x, a) = \sum_{i=1}^{\infty}w_i^*\phi_i(x_a)$, where $w^*$ is now a parameter sequence, as opposed to a parameter vector. We say that such a reward function is $s$-sparse when $\|w^*\|_0=s$, where $s$ is finite and ideally small. That is, there exists a finite subset $S \subset \mathbb{N}$ of size $s$, such that for all $i \notin S$, $w_i^*=0$. We refer to this as \emph{countable sparsity}. We assume that $\|w^*\|_1 \leq 1$ and $\|\phi_i\|_{\infty} \leq 1$ for all $i \in \mathbb{N}$, which implies that $\|f^*\|_{\infty} \leq 1$.

One can view the features $\phi_1, \phi_2, \dots$ as a sequence of functions, each mapping $\mathcal{Z}$ to $\mathbb{R}$, or as a single function $\phi: \mathcal{Z} \times \mathbb{N} \to \mathbb{R}$ that maps any $x_a \in \mathcal{Z}$ and $i \in \mathbb{N}$ to the value $\phi_i(x_a)$. Adopting the latter view, we can further generalise this model by replacing the index $i \in \mathbb{N}$ with a continuous parameter $\theta \in \Theta \subset \mathbb{R}^d$. The reward function can now be written as $f^{*}(x, a) = \int_{\Theta}\phi(x_a, \theta)\mathrm{d}w^*(\theta)$, where $w^*$ is a signed measure on $\Theta$. We say that such a reward function is $s$-sparse if $w^*$ is a discrete measure that can be written as a sum of $s$ Dirac measures, where the $i$\textsuperscript{th} Dirac measure is weighted by $w_i^*$ and centred at $\theta_i^*$. That is, there exist $s \in \mathbb{N}$, $\theta_1^*, \dots, \theta_s^* \in \Theta$ and $w^* \in \mathbb{R}^{s}$, such that $f^{*}(x, a) = \sum_{i=1}^{s}w_i^*\phi(x_a, \theta_i^*)$. Since the set of candidate features is uncountable, we refer to this as \emph{uncountable sparsity}. The set of all functions of the form $f(x, a) = \sum_{i=1}^{\infty}w_i\phi(x_a, \theta_i)$ contains every $s$-sparse reward function, so we will restrict our attention to functions of this form, where $w = (w_1, w_2, \dots)$ is a parameter sequence, and $\theta_1, \theta_2, \dots$ is a sequence of elements in $\Theta$. We assume that $\|w^*\|_1 \leq 1$, $\|\theta_i^*\|_2 \leq 1$ for all $i \in [s]$ and $\|\phi\|_{\infty} \leq 1$, which implies that $\|f^*\|_{\infty} \leq 1$.

We introduce the following shared notation to describe the classes of reward functions that we consider. We use $s$ and $S$ to denote the sparsity and the support of $f^*$. For countable sparsity, the support of $f^*$ is $S = \{i \in \mathbb{N}: w_i^* \neq 0\} \subseteq \mathbb{N}$. For uncountable sparsity, the support of $f^*$ is $S = \{\theta_1^*, \dots, \theta_s^*\}$. We use $m$ and $M$ to denote the sparsity and the support of an arbitrary model. We use $\nu$ to denote the parameter(s) of interest in both types of sparsity. That is, for countable sparsity $\nu = w$, and for uncountable sparsity $\nu = (w, \theta_1, \dots, \theta_m)$. Depending on the type of sparsity, we define $f_{\nu}$ to be the function $f_{\nu}(x, a) = \sum_{i=1}^{\infty}w_i\phi_i(x_a)$ or $f_{\nu}(x, a) = \sum_{i=1}^{m}w_i\phi(x_a, \theta_i)$. For countable sparsity, we use the notations $f_{\nu}$ and $f_{w}$ interchangeably.

\subsection{Regularity Conditions}
\label{sec:regularity}
In our analysis, we make the following regularity assumptions. For countable sparsity, we assume that the feature map satisfies one of following uniform decay conditions.

\begin{definition}[Uniform decay.]
We say that $(\phi_i)_{i=1}^{\infty}$ satisfies the polynomial decay condition if, for some $\beta > 1$, $\|\phi_i\|_{\infty} \leq i^{-\beta/2}$. We say that $(\phi_i)_{i=1}^{\infty}$ satisfies the exponential decay condition if, for some $\beta > 0$, $\|\phi_i\|_{\infty} \leq \exp(-i^{\beta}/2)$.\footnote{The factors of $\frac{1}{2}$ are to make these definitions consistent with the usual eigenvalue decay conditions for Mercer kernels (see for instance \citet{vakili2021information}). For positive constants $C$, $C_1$ and $C_2$, one can easily replace these conditions with $\|\phi_i\|_{\infty} \leq Ci^{-\beta/2}$ and $\|\phi_i\|_{\infty} \leq C_1\exp(-C_2i^{\beta}/2)$.}
\label{def:uni_decay}
\end{definition}
As discussed in Section \ref{sec:examples}, this is a natural assumption for kernelised bandits with countable sparsity. More generally, smoothness assumptions of this form appear frequently in nonparametric statistics \citep{wasserman2006all}. When the features satisfy one of these decay conditions, we can define a notion of \emph{effective dimension}. In particular, we define the effective dimension for sample size $n$ as
\begin{equation}\label{eq:effdim}
d_{\mathrm{eff}} := \min\{i \in \mathbb{N}: \forall j > i, \|\phi_j\|_{\infty}^2 \leq \tfrac{1}{n}\}.
\end{equation}
This definition ensures that if we approximate $f_{\nu}(x, a) = \sum_{i=1}^{\infty}w_i\phi_i(x_a)$ by the function $\tilde{f}_{\nu}(x, a) = \sum_{i=1}^{d_{\mathrm{eff}}}w_i\phi_i(x_a)$, then the squared approximation error is bounded by $1/n$.
In kernelised bandits, which will be one of our main points of reference (cf.\ Section \ref{sec:kbs}), $d_{\mathrm{eff}}$ is usually defined to be the effective degrees of freedom of the kernel ridge estimate, which is a data-dependent quantity (see e.g.\ \citet{valko2013finite, calandriello2019gaussian}). However, both definitions of $\deff$ are equivalent in the sense that, in the worst case, the scaling in $n$ of the effective degrees of freedom of the ridge estimate with the features $\phi_1, \phi_2, \dots$ matches that of our $\deff$ quantity in \eqref{eq:effdim}. One can verify that for polynomial decay, $\deff \leq n^{1/\beta}$, and for exponential decay, $\deff \leq \log^{1/\beta}(n)$. For uncountable sparsity, we require that the feature map is uniformly Lipschitz continuous. We assume that the Lipschitz constant is 1 for the sake of simplicity.  
\begin{definition}[Uniform Lipschitz continuity]
We say that $\phi$ is uniformly Lipschitz if
\begin{equation*}
\forall x \in \mathcal{X}, a \in [K], \theta, \theta^{\prime} \in \Theta, ~|\phi(x_a, \theta) - \phi(x_a, \theta^{\prime})| \leq \|\theta - \theta^{\prime}\|_2.
\end{equation*}\label{def:uni_lip} \end{definition} \vspace{-2em}  
As discussed in Section \ref{sec:examples}, this is a natural assumption for kernelised bandits with uncountable sparsity and neural bandits. Moreover, similar assumptions have been used in previous works on contextual bandits \citep{zhang2022feel, neu2022lifting, neu2024optimisic}.

\subsection{Examples}
\label{sec:examples}

We conclude this section by highlighting some specific instantiations of our framework.

\paragraph{Kernelised Bandits with Countable Sparsity.} In kernelised contextual bandits, each context-action pair corresponds to a vector $x_a \in \mathcal{Z} \subset \mathbb{R}^p$, and the reward function is $f^*(x, a) = h^*(x_a)$, where $h^*: \mathcal{Z} \to \mathbb{R}$ is a function in a reproducing kernel Hilbert space (RKHS) $\mathcal{H}$, with reproducing kernel $k: \mathcal{Z} \times \mathcal{Z} \to \mathbb{R}$. If $k$ is continuous and $\mathcal{Z}$ is a compact metric space, then $k$ is called a Mercer kernel, and due to Mercer's theorem, the kernel function can be written as
\begin{equation*}
k(z, z^{\prime}) = \sum_{i=1}^{\infty}\xi_i\varphi_i(z)\varphi_i(z^{\prime}),
\end{equation*}
where $(\xi_i)_{i=1}^{\infty}$ and $(\varphi_i)_{i=1}^{\infty}$ are the (non-negative) eigenvalues and eigenfunctions of the kernel (cf.\ Section 12.3 in \citet{wainwright2019high}). Moreover, $\mathcal{H}$ can be represented as
\begin{equation*}
\mathcal{H} = \{h(z) = {\textstyle \sum_{i=1}^{\infty}}w_i\sqrt{\xi_i}\varphi_i(z): {\textstyle \sum_{i=1}^{\infty}}w_i^2 < \infty\},
\end{equation*}
and the squared RKHS norm is $\|h\|_{\mathcal{H}}^2 = \sum_{i=1}^{\infty}w_i^2$. We notice that if we define $\phi_i := \sqrt{\xi_i}\varphi_i$, and consider functions in $\mathcal{H}$ corresponding to $s$-sparse sequences $w$, then we find ourselves in a sparse nonparametric contextual bandit problem, with countable sparsity. Note that for the commonly-used Mat\'{e}rn and RBF kernels, the eigenfunctions can be uniformly bounded and the eigenvalues decay to 0 as $i$ tends to $\infty$, which means the features $\phi_i$ will typically satisfy one of the uniform decay conditions in Definition \ref{def:uni_decay}.

\paragraph{Kernelised Bandits with Uncountable Sparsity.} The kernelised contextual bandit problem can also be modelled as a sparse nonparametric contextual bandit problem with uncountable sparsity. One can alternatively express the RKHS $\mathcal{H}$ as
\begin{equation*}
\mathcal{H} = \overline{\{h(z) = {\textstyle \sum_{i=1}^{\infty}}w_ik(z, z_i): {\textstyle \sum_{i=1}^{\infty}\sum_{j=1}^{\infty}}w_iw_jk(z_i, z_j) < \infty\}},
\end{equation*}
where for any set $A$, $\overline{A}$ denotes the closure of $A$. In addition, the squared RKHS norm can be expressed as $\|h\|_{\mathcal{H}}^2 = \sum_{i=1}^{\infty}\sum_{j=1}^{\infty}w_iw_jk(z_i, z_j)$. If we set $\Theta = \mathcal{Z}$, $\phi(z, \theta) = k(z, z^{\prime})$ and assume that the function $h^*$ is $s$-sparse, meaning $h^*(z) = \sum_{i=1}^{s}w_i^*k(z, z_i^*)$, then we find ourselves in a sparse nonparametric contextual bandit problem with uncountable sparsity (assuming $\mathcal{Z}$ is uncountable). The Mat\'{e}rn and RBF kernel functions are both Lipschitz, which means that the uniform Lipschitz continuity property is satisfied for these kernels.

\paragraph{Neural Bandits.} For our final example, we consider a class of contextual bandit problems in which the reward function is a single-layer neural network, of possibly unknown width. Each context-action pair corresponds to a vector $x_a \in \mathcal{Z} \subset \mathbb{R}^p$. The reward function can be written as the infinite-width neural network
\begin{equation*}
f^*(x, a) = \sum_{i=1}^{\infty}w_i^*\sigma(\inner{\theta_i^*}{x_a})\,,
\end{equation*}
where $\sigma: \mathbb{R} \to \mathbb{R}$ is an activation function and $\theta_i^* \in \Theta$. If we assume that $f^*$ is $s$-sparse, and define $\phi(z, \theta) = \sigma(\inner{\theta}{z})$, then this is a sparse nonparametric contextual bandit problem with uncountable sparsity (assuming $\Theta$ is uncountable). If the norm of $x_a$ is bounded for every $x \in \mathcal{X}$ and $a \in [K]$, and $\sigma$ is a Lipschitz activation function, then the uniform Lipschitz continuity property is satisfied.

\section{Minimax Lower Bounds}
\label{sec:lbs}

In this section, we establish lower bounds on the minimax regret for each type of sparsity. Both of our lower bounds are inspired by the lower bound for sparse linear bandits in Theorem 24.3 of \citet{lattimore2020bandit}, which exploits the fact that a sparse linear bandit problem can mimic a multi-task bandit problem. The main idea behind our lower bounds is that a sparse nonparametric contextual bandit problem, with either type of sparsity, can mimic a sequence of $K$-armed (non-contextual) bandit problems. In each of these sub-problems, there is a single good action with expected reward $\Delta$ and $K-1$ bad actions with expected reward $0$. The regret suffered in each sub-problem can be quantified using the lower bound for $K$-armed bandits in Exercise 24.1 of \citet{lattimore2020bandit} (see also Lemma \ref{lem:K_armed_lb}). Due to space limitations, we only describe the reduction to sequences of $K$-armed bandits in detail in Appendix \ref{app:lower_bound_proof}.

The lower bound obtained through this reasoning is typically larger when the original problem is split into a larger number of $K$-armed bandits, each with a smaller horizon. At one extreme, if we impose no restrictions on the features and the reward function other than boundedness, then the sparse nonparametric contextual bandit problem can be reduced a sequence of $n$ $K$-armed bandits, each with horizon 1, and with $\Delta = 1$. In this case, any algorithm is reduced to guessing in each round, and so the minimax regret is linear in $n$. If the features satisfy one of the regularity conditions in Section \ref{sec:regularity}, then this constrains the largest possible value of $\Delta$. In particular, the maximum value of $\Delta$ decreases as the number of $K$-armed bandits increases. One can still reduce the problem to a sequence of $n$ $K$-armed bandit problems, but $\Delta$ would be so small that the regret of any algorithm is negligible. The worst-case reduction is to a sequence of fewer than $n$ $K$-armed bandits, where some learning is possible, and so the minimax regret is sublinear. In the following subsections, we state our lower bounds for each type of sparsity.

\subsection{Countable Sparsity}
\label{sec:countable_lb}

In the countable sparsity setting, we obtain the following lower bound.

\begin{theorem}
Consider the sparse nonparametric contextual bandit problem with countable sparsity described in Section \ref{sec:setting}. Let $\mathcal{A} = [K]$ for some $K \geq 2$ and assume that the noise variables are standard Gaussian, i.e.\ $\epsilon_t \sim \mathcal{N}(0, 1)$. Suppose that for some $\beta > 1$ and some integer $m \geq s^{\beta + 2}K^{\beta + 1}$, $n = sm$. Then for any policy, there exists a sequence of contexts $x_1, \dots, x_n$, a parameter sequence $w = (w_1, w_2, \dots)$ with $\|w\|_0 = s$, $\|w\|_1 = 1$ and a sequence of functions $(\phi_i)_{i=1}^{\infty}$ with $\|\phi_i\|_{\infty} \leq i^{-\beta/2}$, such that
\begin{equation*}
R_n(f_{w}) \geq \frac{1}{8}\sqrt{Ksn}.
\end{equation*}

Instead, suppose that for some $\beta > 0$ and some integer $m \geq \lceil 1/\beta\rceil s^2K\exp(s^{\beta}K^{\beta\lceil 1/\beta\rceil})$, $n = sm$. Then for any policy, there exists a sequence of contexts $x_1, \dots, x_n$, a parameter sequence $w = (w_1, w_2, \dots)$ with $\|w\|_0 = s$, $\|w\|_1 = 1$ and a sequence of functions $(\phi_i)_{i=1}^{\infty}$ with $\|\phi_i\|_{\infty} \leq \exp(-i^{\beta}/2)$, such that
\begin{equation*}
R_n(f_{w}) \geq \frac{1}{8}\sqrt{\max(1, 1/\beta)Ksn}.
\end{equation*}
\label{thm:countable_lb}
\end{theorem}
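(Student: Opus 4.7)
The plan is to prove both parts of the theorem via the reduction sketched in the main text: embed inside a sparse nonparametric contextual bandit problem of horizon $n = sm$ a collection of essentially independent stochastic $K$-armed bandit sub-problems of horizon $m$ each, and then sum the standard per-block $K$-armed minimax lower bound of order $\sqrt{Km}$ (Exercise~24.1 of \citet{lattimore2020bandit}, cf.\ Lemma~\ref{lem:K_armed_lb}). In the polynomial decay case this will use exactly $s$ blocks, so the sum of per-block lower bounds is $\frac{s}{8}\sqrt{Km} = \frac{1}{8}\sqrt{sKn}$, matching the first part of the theorem.

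For the construction, I would partition the horizon into $s$ consecutive blocks of length $m$, and enumerate $sK$ features $(\phi_{b,a})_{(b,a) \in [s] \times [K]}$, where each $\phi_{b,a}$ is a bump supported on a single distinguished point $z_{b,a} \in \mathcal{Z}$ with common magnitude $c$. The magnitude is chosen to saturate the decay condition at the largest active index: $c = (sK)^{-\beta/2}$ under polynomial decay, so that $\|\phi_i\|_{\infty} \leq i^{-\beta/2}$ holds for every $i$ (trivially for $i > sK$, where we set $\phi_i \equiv 0$). In every round $t$ of block $b$ the adversary reveals contexts with $x_{t,a} = z_{b,a}$ for each $a \in [K]$; the hidden parameter sequence $w^{*}$ has exactly one non-zero entry per block, namely $w^{*}_{b, a^{*}_b} = 1/s$ at an unknown good arm $a^{*}_b \in [K]$. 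This yields $\|w^{*}\|_0 = s$, $\|w^{*}\|_1 = 1$, $\|f_{w^{*}}\|_{\infty} \leq 1$, and, restricted to block $b$, $f_{w^{*}}(x_t, a) = c/s$ if $a = a^{*}_b$ and $0$ otherwise, so block $b$ is an exact $K$-armed bandit with gap $\Delta = c/s$.

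Applying the $K$-armed lower bound inside each block gives an expected per-block regret of at least $\frac{1}{8}\min(m\Delta, \sqrt{Km})$ against a uniformly random choice of $a^{*}_b$, and the hypothesis $m \geq s^{\beta+2} K^{\beta+1}$ in the polynomial case is precisely what enforces $\sqrt{Km} \leq m\Delta$ for our choice of $c$, since $K/\Delta^2 = K s^2 / c^2 = K s^2 (sK)^{\beta} = s^{\beta+2} K^{\beta+1}$. Summing over the $s$ blocks and lower-bounding the maximum over instances by the uniform average over the $(a^{*}_b)_{b \in [s]}$ then yields the claim. The crucial point requiring care is that the blocks truly decouple into independent $K$-armed bandits despite the shared history: this is where the disjoint supports of the $\phi_{b,a}$ do the work, because observations from any block $b' \neq b$ are identically distributed under any two candidate values of $a^{*}_b$, so the KL-divergence in the two-point change-of-measure argument for block $b$ only picks up contributions from observations inside block $b$.

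The exponential decay case follows the same blueprint with $c = \exp(-(sK)^{\beta}/2)$ in place of $c = (sK)^{-\beta/2}$; for $\beta \geq 1$ the condition $\sqrt{Km} \leq m\Delta$ becomes exactly $m \geq s^2 K \exp((sK)^{\beta})$, matching the second hypothesis with $\lceil 1/\beta\rceil = 1$. For $\beta < 1$ the decay is too slow to yield polynomial-like behaviour from features indexed as low as $sK$, and a slight refinement is needed: one shifts the support of the active features to indices of order $(sK)^{\lceil 1/\beta\rceil}$, which recovers the exponent $s^{\beta} K^{\beta \lceil 1/\beta\rceil}$ in the hypothesis, and then uses the extra admissible ``bump slots'' to run $\lceil 1/\beta\rceil$ parallel copies of the block structure, which bumps the summed lower bound by $\sqrt{\lceil 1/\beta\rceil}$ and produces the $\sqrt{\max(1,1/\beta)}$ factor. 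The main obstacle throughout is tuning the common magnitude $c$, the number of blocks, and the sparsity budget consistently so that the decay, $\ell_1$-normalisation, and $\ell_\infty$-bound constraints are simultaneously satisfied while the gap $\Delta$ stays just small enough for the $K$-armed lower bound to be tight; once these choices are aligned, the remaining verifications are routine.
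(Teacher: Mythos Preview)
Your construction for the polynomial-decay case and for the exponential-decay case with $\beta \geq 1$ is correct and matches the paper's proof essentially step for step: the same block structure of $s$ sub-problems of length $m$, the same $sK$ indicator features, the same $s$-sparse weight vector with one nonzero of size $1/s$ per block, and the same tuning of the gap so that the per-block $K$-armed lower bound (Lemma~\ref{lem:K_armed_lb}) is tight.

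The $\beta < 1$ exponential-decay case, however, has a real gap. First, the active feature indices should go up to $sK^{\lceil 1/\beta\rceil}$, not $(sK)^{\lceil 1/\beta\rceil}$; your choice gives a decay constraint with exponent $(sK)^{\beta\lceil 1/\beta\rceil} = s^{\beta\lceil 1/\beta\rceil}K^{\beta\lceil 1/\beta\rceil}$, which does \emph{not} recover the $s^{\beta}K^{\beta\lceil 1/\beta\rceil}$ in the hypothesis. More importantly, ``running $\lceil 1/\beta\rceil$ parallel copies of the block structure'' does not work as stated: a naive replication would require $s\lceil 1/\beta\rceil$ nonzero weights, violating the $\|w\|_0 = s$ constraint. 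The paper resolves this with an encoding trick that your sketch misses: it arranges the $sK^{\lceil 1/\beta\rceil}$ features into $s$ blocks of size $K^{\lceil 1/\beta\rceil}$, places exactly one nonzero weight per block, and uses the \emph{position} of that nonzero within its block (an element of $[K^{\lceil 1/\beta\rceil}]$, identified with $[K]^{\lceil 1/\beta\rceil}$ via a bijection $\zeta$) to simultaneously encode $\lceil 1/\beta\rceil$ optimal-arm indices. The feature map $\phi$ is then designed so that the $\lceil 1/\beta\rceil$ distinct contexts associated with block $j$ each read out a different coordinate of this tuple. This is what lets a single nonzero of $w$ drive $\lceil 1/\beta\rceil$ independent $K$-armed sub-problems, yielding $s\lceil 1/\beta\rceil$ sub-problems in total while preserving $s$-sparsity.
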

Note that $R_n(f_{w})$ denotes the expected regret when the reward function is $f_{w}$, where $w$ is the difficult parameter sequence. The proof of this result can be found in Appendix \ref{sec:count_lb_proof}. Here, we provide some intuition about the manner in which the decay conditions constrain the number of $K$-armed bandits in the reduction. To reduce the problem to a sequence of $ms$ $K$-armed bandits, for any $m \geq 1$, we require that the features satisfy $\|\phi_i\|_{\infty} = \Delta$ for all $i \in [sK^m]$. Combined with either the polynomial or exponential decay condition, this means that $\Delta$ must satisfy either $\Delta \leq (sK^m)^{-\beta/2}$ or $\Delta \leq \exp((sK^m)^{-\beta}/2)$. Thus, there is a trade-off between choosing a large number of switches and a large gap $\Delta$. On the one hand, this lower bound does not rule out the possibility that there exist algorithms with regret bounds that depend polynomially on $s\log(d_{\mathrm{eff}})$. In the absence of sparsity, one would expect that the minimax regret depends polynomially on $\deff$. On the other hand, this lower bound has polynomial dependence on $K$, whereas in non-sparse linear or kernelised contextual bandits, the minimax regret depends only logarithmically on $K$. This suggests that sparsity is only helpful when the number of actions is sufficiently small (cf.\ Section \ref{sec:kbs}).

\subsection{Uncountable Sparsity}

In the uncountable sparsity setting, we obtain the following lower bound.

\begin{theorem}
Consider the sparse nonparametric contextual bandit problem with uncountable sparsity described in Section \ref{sec:setting}. Let $\mathcal{A} = [K]$ for some $K \geq 2$ and let $n = sdm$ for some integer $m \geq s^{2 + 2/d}K^{3}$. Assume that the noise variables are standard Gaussian. For any policy, there exists a sequence of contexts $x_1, \dots, x_n \in \mathcal{X}$, parameters $w \in \mathbb{B}_1^s(1)$, $\theta_1, \dots, \theta_s \in \Theta \subset \mathbb{B}_2^d(1)$ and a uniformly Lipschitz continuous function $\phi$, with $\|\phi\|_{\infty} \leq 1$, such that
\begin{equation*}
R_n(f_{\nu}) \geq \frac{1}{8}\sqrt{Ksdn}.
\end{equation*}\label{thm:uncountable_lb}
\end{theorem}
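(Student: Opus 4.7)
The plan is to extend the reduction sketched in Section \ref{sec:lbs} for Theorem \ref{thm:countable_lb} by exploiting the $d$-dimensionality of the atoms $\theta_i^*$. In the countable case each of the $s$ atoms furnishes one embedded $K$-armed bandit, giving the $\sqrt{Ksn}$ rate; here each atom has $d$ coordinate-wise degrees of freedom, so the idea is to embed $sd$ independent $K$-armed bandits, each of horizon $m = n/(sd)$, and invoke Lemma \ref{lem:K_armed_lb} on each to sum to $sd \cdot c\sqrt{Km} = c\sqrt{Ksdn}$.

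Concretely, I would index the embedded sub-problems by $(i,j) \in [s] \times [d]$, partition $[n]$ into $sd$ consecutive blocks of length $m$, and fix a hidden table of good actions $a_{i,j}^* \in [K]$. The atoms $\theta_1^*, \dots, \theta_s^*$ are placed on a discrete grid in $\mathbb{B}_2^d(1)$ whose distinct points are $\Omega(s^{-1/d})$-separated and whose $j$-th coordinate encodes $a_{i,j}^*$ at resolution $\Omega(1/K)$; setting $w_i^* = 1/s$ then produces a valid $s$-sparse parameter with $\|w^*\|_1 = 1$. The feature map $\phi$ and the contexts used in block $(i,j)$ are chosen so that $\phi(X_{t,a}, \theta)$ is a $1$-Lipschitz bump of amplitude $\Delta$, supported in a ball of radius $\Theta(\Delta)$ around $\theta_i^*$, that peaks at $\Delta$ when the $j$-th coordinate of $\theta$ matches the grid value associated with action $a$ and vanishes otherwise. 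Under this design the reward in block $(i,j)$ collapses, up to a negligible cross-contribution from the other atoms and coordinates, to $f^*(X_t, a) = (\Delta/s)\,\mathbb{1}\{a = a_{i,j}^*\}$, which is exactly a stochastic $K$-armed bandit of gap $\Delta/s$.

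Choosing $\Delta = \Theta(s\sqrt{K/m})$ drives the per-block gap to the critical scale $\sqrt{K/m}$, so Lemma \ref{lem:K_armed_lb} yields regret $\Omega(\sqrt{Km})$ on each of the $sd$ blocks; the standard KL-tensorisation argument used in the proof of Theorem 24.3 of \citet{lattimore2020bandit} (identical to the one used for Theorem \ref{thm:countable_lb}) lets these per-block contributions be summed without logarithmic overhead, producing the claimed $\Omega(\sqrt{Ksdn})$. The main obstacle will be constructing $\phi$ so that it simultaneously (i) is uniformly $1$-Lipschitz in $\theta$ with $\|\phi\|_\infty \leq 1$, (ii) is localised around the active atom to avoid inter-atom contamination, and (iii) isolates the $j$-th coordinate to avoid intra-atom contamination from coordinates $j' \neq j$. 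These three requirements together force $\Delta \lesssim \min(1/K, s^{-1/d})$, and combining with $\Delta \asymp s\sqrt{K/m}$ yields precisely the hypothesis $m \geq s^{2+2/d}K^3$ appearing in the theorem. Once $\Delta$ is fixed in this way, the remaining regularity checks $\|w^*\|_1 \leq 1$, $\|\theta_i^*\|_2 \leq 1$, and $\|\phi\|_\infty \leq 1$ are routine to verify.
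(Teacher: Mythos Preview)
Your high-level reduction is correct and matches the paper: embed $sd$ independent $K$-armed sub-bandits of horizon $m=n/(sd)$, set the per-arm gap to $\Delta/s$ with $\Delta \asymp s\sqrt{K/m}$, and sum the per-block contributions from Lemma~\ref{lem:K_armed_lb} via the averaging argument (packaged in the paper as Lemma~\ref{lem:seq_lb}). Where you diverge is in the construction of $\phi$, and this is worth flagging because the paper's route sidesteps entirely what you call ``the main obstacle''.

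You attempt to build $\phi$ as a genuine continuous $1$-Lipschitz bump that simultaneously localises around the active atom and reads off a single coordinate. The paper instead exploits the freedom to choose $\Theta \subset \mathbb{B}_2^d(1)$: it takes $\Theta$ to be a \emph{finite} $\Delta$-packing of $\mathbb{B}_2^d(1)$ with exactly $sK^d$ elements, partitioned into $s$ disjoint blocks $\Theta_1,\dots,\Theta_s$ of size $K^d$, and lets atom $i$ range over $\Theta_i$ via a bijection $\Theta_i \to [K]^d$. On this discrete $\Theta$, $\phi$ is a bare $\{0,\Delta\}$-valued indicator, and the uniform Lipschitz condition holds trivially because any two distinct points of $\Theta$ are at least $\Delta$ apart. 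No bump engineering is needed. The packing-number bound $\mathcal{M}(\mathbb{B}_2^d(1),\|\cdot\|_2,\Delta) \geq (1/\Delta)^d \geq sK^d$ forces $\Delta \leq 1/(s^{1/d}K)$, which combined with $\Delta = s\sqrt{K/(4m)}$ is exactly $m \geq \tfrac14 s^{2+2/d}K^3$.

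Two concrete issues with your sketch as written. First, the constraint $\Delta \lesssim \min(1/K, s^{-1/d})$ does \emph{not} yield $m \geq s^{2+2/d}K^3$; it gives only $m \gtrsim \max(s^2K^3,\, s^{2+2/d}K)$, which is strictly weaker. The correct constraint is the product $\Delta \lesssim 1/(s^{1/d}K)$; in a bump construction this would arise only if the $K$ action-values along coordinate $j$ are squeezed into the $\Theta(s^{-1/d})$-sized cell allotted to atom $i$, which your description (``grid $\ldots$ $\Omega(s^{-1/d})$-separated'' together with ``resolution $\Omega(1/K)$'') does not enforce. Second, as described, two atoms $i\neq i'$ with identical action tables $(a_{i,j}^*)_j=(a_{i',j}^*)_j$ land on the same grid point, breaking localisation for those instances in the average; the paper's disjoint blocks $\Theta_1,\dots,\Theta_s$ avoid this automatically. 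Your approach is repairable (assign each atom its own cell and encode actions at resolution $s^{-1/d}/K$ within it), but the packing-plus-indicator trick is both simpler and delivers the exact constant in the hypothesis.
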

This lower bound has similar implications to the previous one. In particular, it does not rule out the possibility of an algorithm with a regret bound that is polynomial in $sd$, but it does suggest that sparsity is only helpful when the number of actions is sufficiently small. For the special case of $s = 1$, Theorem \ref{thm:uncountable_lb} provides a lower bound of order $\sqrt{Kdn}$ for the setting studied in \citet{neu2022lifting, neu2024optimisic}, where the reward function $f^*(x, a) = \phi(x_a, \theta^*)$ is a Lipschitz function of a $d$-dimensional parameter vector. The proof of Theorem \ref{thm:uncountable_lb} can be found in Appendix \ref{sec:uncount_lb_proof}. It uses a similar reduction to a sequence of $K$-armed bandits, though the number of switches is constrained in a different manner. For any given number of switches, the Lipschitz property is used to relate the maximum value of $\Delta$ to the packing number of the ball $\mathbb{B}_2^d(1)$.

\section{Matching Upper Bounds}
\label{sec:ubs}

In this section, we present an algorithm with regret bounds that match
the lower bounds in the previous section up to logarithmic factors of $n$. 
We first describe the algorithm, and then discuss the regret bounds for countable and uncountable sparsity separately.

\subsection{Method}

Our algorithm is a particular instance of the Feel-Good Thompson Sampling (FGTS) algorithm proposed by \citet{zhang2022feel}.  FGTS is a modification of the popular Thompson Sampling algorithm \citep{thompsonsampling} that achieves improved bounds on (frequentist) regret by more aggressively exploring promising actions. Our algorithm uses novel priors inspired by the PAC-Bayes literature on sparse regression to extend FGTS to sparse nonparametric contextual bandits.

To describe FGTS in detail, we must first introduce some notation. Let us define $a(\nu, x) = \argmax_{a\in[K]} f_{\nu}(x, a)$ to be the optimal action for context $x$ and reward function $f_{\nu}$. We let $f_{\nu}(x) = \max_{a \in [K]}\{f_{\nu}(x, a)\} = f_{\nu}(x, a(\nu, x))$ denote the maximum of $f_{\nu}(x, a)$ with respect to the action. Similarly, we let $a^*(x) = \argmax_{a\in[K]} f^*(x, a)$ and $f^*(x) = f^*(x, a^*(x))$. We use $p_1(\nu)$ to denote the prior distribution, which will be specified in the subsequent subsections. We consider the following negative log-likelihood for the reward, conditioned on $\nu$, $x$ and $a$:
\begin{align}\label{eq:fgtslikelihood}
L(\nu, x, a, y) = \eta(f_{\nu}(x, a) - y)^2 - \lambda f_{\nu}(x)\,.
\end{align}
The quadratic term $\eta(f_{\nu}(x, a) - y)^2$ corresponds to a Gaussian log-likelihood with mean $f_{\nu}(x, a)$ and variance $1/(2\eta)$. The Feel-Good exploration term, $\lambda f_{\nu}(x)$, favours parameters $\nu$ where the maximum of $f_{\nu}(x, a)$ w.r.t.\ the action is large. Finally, $\eta>0$ and $\lambda>0$ are tuning parameters, which we will set later. With the negative log-likelihood $L$ and the prior $p_1$, the posterior $p_t$ after $t-1$ rounds have been completed is
\begin{equation}\label{eq:postnu}
p_t(\nu) \propto \exp\Big(-\sum_{l=1}^{t-1} L(\nu, X_l, A_l, Y_l)\Big) p_1(\nu).
\end{equation}
In each round $t$, FGTS draws a sample $\nu_t$ from the posterior $p_t$, and then plays the action $A_t = a(\nu_t, X_t)$. We give the pseudocode for FGTS in Algorithm \ref{alg:FGTS}.

\begin{algorithm}[H]
\caption{Feel-Good Thompson Sampling}\label{alg:FGTS}
\begin{algorithmic}
\State \textbf{Input:} prior $p_1$, parameters $\eta$, $\lambda$
\For{$t=1,\dots,n$}
\State Observe context $X_t$,
\State Draw $\nu_t \sim p_t$ according to \eqref{eq:postnu},
\State Select action $A_t = a(\nu_t,X_t)$,
\State Observe reward $Y_t$.
\EndFor
\end{algorithmic}
\end{algorithm}
To bound the regret of FGTS, we use the decoupling technique developed by \citet{zhang2022feel}. As long as the prior satisfies $\mathbb{P}_{\nu \sim p_1}[\max_{x, a}|f_{\nu}(x, a)| \leq 1] = 1$, Theorem 1 of \citet{zhang2022feel} (see also Theorem \ref{thm:expregretbound}) states that
\begin{equation*}
R_n(f^{*})\leq \frac{\lambda K n}{\eta}+6 \lambda n-\frac{1}{\lambda} Z_n\,,
\end{equation*}
where $Z_n := \mathbb{E} \log \mathbb{E}_{\nu \sim p_1} \exp (-\sum_{t=1}^n\Delta L(\nu,X_t, A_t, Y_t))$ is a log-partition function with potential function $\Delta L(\nu, x, a, y) := \eta[(f_{\nu}(x, a)-y)^2-\left(f^{*}(x, a)-y\right)^2]-\lambda[f_\nu(x)-f^{*}(x)]$, which is the logarithm of a likelihood ratio. The main technical contribution of our regret analysis is to derive new bounds on $Z_n$, which depend on the true (possibly unknown) sparsity $\|w^*\|_0$. These new bounds on $Z_n$ may be of independent interest, since they can be used to prove regret bounds for sparse nonparametric regression (cf.\ chapters 2 and 3 in \citet{cesa2006prediction}). We state and prove our bounds on $Z_n$ in Appendix \ref{sec:cs_ub_Zn} and Appendix \ref{sec:us_ub_Zn}. In the following subsections, we describe the priors that we use and the regret bounds that we obtain for each type of sparsity.

\subsection{Countable Sparsity}
\label{sec:ub_cs}

For countable sparsity, we use a subset selection prior that can be factorised into a discrete distribution $p_1(M)$ over subsets $M \subseteq \mathbb{N}$, and a conditional distribution $p_1(w|M)$ over parameter sequences $w \in \mathbb{R}^{\infty}$ with support $M$. In particular, we adapt the prior used in Section 3 of \citet{alquier2011pac}, which is designed for finite-dimensional sparse linear models with a $p$-dimensional parameter vector. \citet{alquier2011pac} use a prior $p_1(M)$ over subsets $M \subseteq [p]$ that can be factorised into a distribution $p_1(m)$ over subset sizes $m \in [p]$ and a conditional distribution $p_1(M\mid m)$ over subsets $M \subseteq [p]$ of size $m$. To penalise large subsets and express no preference between any two subsets of the same size, $p_1(m)$ is chosen to be $p_1(m) \propto 2^{-m}$ (with $p_1(0) = 0$) and $p_1(M\mid m)$ is chosen to be the uniform distribution over subsets of size $m$. The resulting distribution over subsets of $[p]$ is
\begin{equation*}
p_1(M) = \sum_{m=1}^{p}p_1(m)p_1(M\mid m) = p_1(|M|)p_1(M \mid |M|) =  \frac{2^{-|M|}}{\binom{p}{|M|}\sum_{m=1}^{p}2^{-m}}\,,
\end{equation*}
for any non-empty $M \subseteq [p]$. The conditional distribution $p_1(w\mid M)$ over parameter vectors with support $M$ is the uniform distribution on the set $\mathcal{W}_M := \{w: \|w\|_1 \leq 1, ~\text{and}~ w_i = 0 ~\forall i \notin M\}$. The resulting prior over parameter vectors $w$ is $p_1(w) = \sum_{M \subseteq [p]}p_1(M)p_1(w\mid M)\,.$

A na\"{i}ve way of extending this prior to the infinite-dimensional case would be to replace the sums over $m$ from $1$ to $p$ with sums over $m$ from $1$ to infinity. However, this fails for two reasons. First, the power set of $\mathbb{N}$ is uncountable, so it is not clear that a sum over all subsets of $\mathbb{N}$ is well-defined. Second, for a fixed subset size $m$, the set of all subsets of $\mathbb{N}$ of size $m$ is countably infinite, so a uniform distribution on this set cannot be defined. Fortunately, these problems can be fixed by exploiting the uniform decay condition in Definition \ref{def:uni_decay}. In particular, we restrict the support of the distribution $p_1(M)$ to subsets $M \subseteq [\deff]$. Intuitively, if we include at least $\deff$ features, then the uniform decay condition ensures that the regret suffered by ignoring some features will be negligible compared to the regret suffered while estimating the best $\deff$-dimensional approximation of $w^*$. This intuition is made rigorous in the proof of Theorem \ref{thm:upperbounds_cs}. We set
\begin{equation*}
p_1(M) =  \mathbb{I}\{M \subseteq [\deff]\}\frac{2^{-|M|}}{\binom{\deff}{|M|}\sum_{m=1}^{\deff}2^{-m}}\,,
\end{equation*}
for non-empty $M \subseteq [\deff]$ and $p_1(\emptyset) = 0$, which penalises large subsets and assigns probability zero to any subset not contained within $[\deff]$. For any subset $M \subseteq [\deff]$, we choose the conditional distribution over parameter sequences with support $M$ to be the uniform distribution $\mathcal{U}(\mathcal{W}_{M})$ over $\mathcal{W}_M$, which has the density function
\begin{equation*}
p_1(w\mid M) = \tfrac{|M|!}{2^{-|M|}}\mathbb{I}\{\|w\|_1 \leq 1\}\mathbb{I}\{w_i = 0 ~~\forall i \notin M\}\,.
\end{equation*}
This reflects our assumption that $\|w^*\|_1 \leq 1$. The resulting prior over parameter sequences is
\begin{equation}
p_1(w) = \sum_{M \subseteq [\deff]}p_1(M)p_1(w\mid M)\,.\label{eqn:cs_prior}
\end{equation}
For every $w$ in the support of this prior, we have $\|w\|_1 \leq 1$. This means that, as required by Theorem \ref{thm:expregretbound}, we have $\mathbb{P}_{w \sim p_1}[\max_{x, a}|f_{w}(x, a)| \leq 1] = 1$. Our regret bound for FGTS in the countable sparsity setting is stated in the following theorem.

\begin{theorem}\label{thm:upperbounds_cs}
Consider FGTS with $\eta = 1/4$ and the prior $p_1$ defined in \eqref{eqn:cs_prior}. The expected regret of FGTS with $\lambda \propto \sqrt{\log(\deff n)/(Kn)}$ is at most
\begin{equation*}
R_n(f^*) = \mathcal{O}(\|w^*\|_0\sqrt{Kn\log(\deff n)})\,.
\end{equation*}
Suppose that $s \geq \|w^*\|_0$ is a known upper bound on the sparsity. The expected regret of FGTS with $\lambda \propto \sqrt{s\log(\deff n)/(Kn)}$ is at most
\begin{equation}
R_n(f^*) = \mathcal{O}(\sqrt{Ksn\log(\deff n)})\,.\label{eqn:cs_ub_s}
\end{equation}
\end{theorem}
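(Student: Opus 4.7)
The plan is to start from the master regret bound for FGTS stated after Algorithm \ref{alg:FGTS}, which gives
\begin{equation*}
R_n(f^*) \leq \frac{\lambda K n}{\eta} + 6\lambda n - \frac{1}{\lambda}Z_n,
\end{equation*}
and establish a sparsity-adaptive lower bound on the log-partition function $Z_n$ by restricting the inner prior expectation to a well-chosen neighbourhood of a truncated version of $w^*$. With $\eta = 1/4$ this will give $R_n(f^*) \lesssim \lambda n (K+1) + \lambda^{-1}\|w^*\|_0 \log(\deff n)$, and the claimed bounds follow by tuning $\lambda$.

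First, I would introduce the truncation $w^\dagger_i := w^*_i \mathbb{I}\{i \in [\deff]\}$ with support $M^\dagger := S \cap [\deff] \subseteq [\deff]$ and $|M^\dagger| \leq \|w^*\|_0$. By the definition of $\deff$ in \eqref{eq:effdim} and $\|w^*\|_1 \leq 1$, the pointwise approximation error is controlled as $|f_{w^*}(x,a) - f_{w^\dagger}(x,a)| \leq \sum_{i>\deff}|w_i^*|\|\phi_i\|_\infty \leq n^{-1/2}$. Next, for $\delta > 0$ to be chosen, let $\mathcal{V}_\delta := \{w : \mathrm{supp}(w)\subseteq M^\dagger,\ \|w-w^\dagger\|_1 \leq \delta\}$. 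Every element of $\mathcal{V}_\delta$ lies in the support of the conditional prior $p_1(\cdot\mid M^\dagger)$ (provided $\delta$ is small enough to keep $\|w\|_1 \leq 1$), and a standard volume computation for the $\ell_1$-ball in $|M^\dagger|$ dimensions together with the prior definition \eqref{eqn:cs_prior} yields
\begin{equation*}
p_1(\mathcal{V}_\delta) \geq \frac{2^{-|M^\dagger|}}{\binom{\deff}{|M^\dagger|}\sum_{m=1}^{\deff}2^{-m}} \cdot \frac{|M^\dagger|!\,\delta^{|M^\dagger|}}{2^{|M^\dagger|}\cdot |M^\dagger|!/2^{|M^\dagger|}} = \frac{\delta^{|M^\dagger|}}{\binom{\deff}{|M^\dagger|}\sum_{m=1}^{\deff}2^{-m}},
\end{equation*}
whose logarithm contributes $-|M^\dagger|\log(\deff/\delta) - O(|M^\dagger|)$.

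Second, I would control the random potential on $\mathcal{V}_\delta$. Expanding the potential as
\begin{equation*}
\Delta L(w,x,a,y) = \eta\bpa{f_w(x,a)-f^*(x,a)}\bpa{f_w(x,a)+f^*(x,a)-2y} - \lambda\bpa{f_w(x)-f^*(x)},
\end{equation*}
and using the triangle inequality $|f_w - f^*| \leq \delta + n^{-1/2}$ on $\mathcal{V}_\delta$, boundedness $\|f^*\|_\infty \leq 1$, and the sub-Gaussian noise assumption which gives $\mathbb{E}_t[Y_t - f^*(X_t,A_t)] = 0$, the absolute expectation of the cumulative potential is at most $O(\eta n(\delta + n^{-1/2})^2 + \lambda n(\delta + n^{-1/2}))$. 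Restricting the inner expectation in $Z_n$ to $\mathcal{V}_\delta$, applying Jensen's inequality to pull the logarithm inside, and combining with the prior mass bound yields
\begin{equation*}
Z_n \geq -|M^\dagger|\log(\deff/\delta) - O\bpa{|M^\dagger| + \eta n\delta^2 + \lambda n\delta + \eta + \lambda}.
\end{equation*}
Choosing $\delta = 1/\sqrt{n}$ makes the last two quadratic and linear error terms $O(1)$ after substituting $\eta = 1/4$, so $Z_n \geq -\|w^*\|_0\log(\deff n) \cdot O(1) - O(\lambda)$.

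Finally, plugging this into the master bound gives $R_n(f^*) \leq 4\lambda K n + 6\lambda n + O(\lambda^{-1}\|w^*\|_0\log(\deff n))$, and optimising $\lambda \propto \sqrt{\|w^*\|_0\log(\deff n)/(Kn)}$ yields the first claim. For the second claim, using an upper bound $s \geq \|w^*\|_0$ and setting $\lambda \propto \sqrt{s\log(\deff n)/(Kn)}$ gives \eqref{eqn:cs_ub_s} directly. The main obstacle is the bookkeeping in the lower bound on $Z_n$: we have to simultaneously get the right combinatorial factor $\binom{\deff}{|M^\dagger|}^{-1}$ from the subset-selection prior, keep the truncation error at scale $1/\sqrt{n}$ despite potentially adversarial contexts, and handle the Feel-Good cross term without losing the sparsity-adaptive $\|w^*\|_0$ factor in front of $\log(\deff n)$.
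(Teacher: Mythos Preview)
Your approach is correct and takes a genuinely simpler route than the paper. The paper restricts the prior to a neighbourhood $\mathcal{W}_c = (1-c)\bar{w} + c\mathcal{W}_{\bar{S}}$ of the truncated parameter and then lower bounds $\mathbb{E}_{w}[\exp(-\sum_t\Delta L_t(w))]$ by $\exp(-\max_{w \in \mathcal{W}_c}\sum_t\Delta L_t(w))$. This forces them to control the expected supremum of the noise process $\sum_t -2\eta\epsilon_t(f_w - f^*)$ over $\mathcal{W}_c$, which they do via a covering argument and sub-Gaussian moment bounds (their Lemmas 8 and 9). You instead apply Jensen's inequality to the inner expectation, obtaining $\log\mathbb{E}_{w}[\exp(-\sum_t\Delta L_t(w))] \geq -\mathbb{E}_{w}[\sum_t\Delta L_t(w)]$, and then swap the order of integration so that the cross term $\mathbb{E}[\epsilon_t(f_w(X_t,A_t)-f^*(X_t,A_t))]$ vanishes for each fixed $w$ by the conditional zero-mean property. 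This completely bypasses the covering argument and yields the same $-Z_n \lesssim \|w^*\|_0\log(\deff n) + \lambda\sqrt{n}$ bound with less work.

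Two minor points to tidy up. First, your ball $\mathcal{V}_\delta$ need not lie entirely inside $\{w:\|w\|_1\leq 1\}$ when $\|w^\dagger\|_1 = 1$, so the volume calculation for $p_1(\mathcal{V}_\delta)$ is not quite justified as stated; the paper's contraction $(1-\delta)w^\dagger + \delta\mathcal{W}_{M^\dagger}$ fixes this cleanly and gives the same $\delta^{|M^\dagger|}$ factor. Second, your tuning for the first claim is off: you write $\lambda \propto \sqrt{\|w^*\|_0\log(\deff n)/(Kn)}$, but the theorem's first statement concerns \emph{unknown} sparsity, so $\lambda$ must not depend on $\|w^*\|_0$. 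The stated choice $\lambda \propto \sqrt{\log(\deff n)/(Kn)}$ gives $\lambda Kn = O(\sqrt{Kn\log(\deff n)})$ and $\lambda^{-1}\|w^*\|_0\log(\deff n) = O(\|w^*\|_0\sqrt{Kn\log(\deff n)})$, which is the claimed bound. Also, with $\delta = 1/\sqrt{n}$ the term $\lambda n\delta = \lambda\sqrt{n}$ is not $O(1)$ as you wrote, but after dividing by $\lambda$ in the master bound it contributes only $O(\sqrt{n})$, which is harmless.
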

This regret bound shows that when either of the uniform decay conditions in Definition \ref{def:uni_decay} is satisfied, FGTS with the prior $p_1$ is nearly minimax optimal. When the polynomial decay condition is satisfied, the effective dimension satisfies $\deff \leq n^{1/\beta}$. Therefore, the regret bound for FGTS is of order at most $\sqrt{(1 + 1/\beta)Ksn\log(n)}$. Since $\beta > 1$, $1 + 1/\beta$ less than 2, so this matches the lower bound in Theorem \ref{thm:countable_lb} up to a factor of $\sqrt{\log(n)}$. When the exponential decay condition is satisfied, $\deff \leq \log^{1/\beta}(n)$, and so the regret bound for FGTS is of order at most $\sqrt{\max(1,1/\beta)Ksn\log(n)}$. Once again, this matches the lower bound in Theorem \ref{thm:countable_lb} up to a factor of $\sqrt{\log(n)}$.

\subsection{Uncountable Sparsity}
\label{sec:ub_us}

For uncountable sparsity, we use a prior that can be factorised into a discrete distribution $p_1(m)$ over the number of features $m$, and conditional distributions $p_1(w\mid m)$ and $p_1(\theta_1, \dots, \theta_m \mid m)$ on the parameters $w \in \mathbb{R}^m$ and $\theta_1, \dots, \theta_m \in \mathbb{R}^d$ given $m$. To penalise large numbers of features, we choose $p_1(m) \propto 2^{-m}$ for $m \in \mathbb{N}$. We set both $p_1(w\mid m)$ and $p_1(\theta_1, \dots, \theta_m\mid m)$ to be uniform distributions. In particular, we set $p_1(w\mid m)$ to be the uniform distribution $\mathcal{U}(\mathbb{B}_1^m(1))$ on the $m$-dimensional unit $\ell_1$ ball $\mathbb{B}_1^m(1)$ to reflect our assumption that $\|w^*\|_1 \leq 1$. For each $m \in \mathbb{N}$, we factorise the conditional distribution over $\theta_1, \dots, \theta_m$ as $p_1(\theta_1, \dots, \theta_m\mid m) = \prod_{i=1}^{m}p_1(\theta_i)$. We choose $p_1(\theta)$ to be the uniform distribution $\mathcal{U}(\mathbb{B}_2^d(1))$ on the $d$-dimensional unit $\ell_2$ ball $\mathbb{B}_2^d(1)$ to reflect our assumption that for each $i$, $\|\theta_i^*\|_2 \leq 1$. We then have
\begin{equation*}
p_1(w \mid m) = \tfrac{m!}{2^{-m}}\mathbb{I}\{\|w\|_1 \leq 1\}\,
\quad\text{and}\quad\,
p_1(\theta) = \tfrac{\Gamma(d/2 + 1)}{\pi^{d/2}}\mathbb{I}\{\|\theta\|_2 \leq 1\}\,.
\end{equation*}
The resulting prior over parameters $\nu$ is
\begin{equation}
p_1(\nu) = \sum_{m=1}^{\infty}p_1(m)p_1(w\mid m)p_1(\theta_1, \dots, \theta_m\mid m)\,.\label{eqn:us_prior}
\end{equation}
For every $\nu$ in the support of our prior, we have $\|w\|_1 \leq 1$, which means that the prior satisfies $\mathbb{P}_{\nu \sim p_1}[\max_{x, a}|f_{\nu}(x, a)| \leq 1] = 1$. Our regret bound for FGTS in the uncountable sparsity setting is stated in the following theorem.

\begin{theorem}\label{thm:upperbounds_us}
Consider FGTS with $\eta = 1/4$ and the prior $p_1$ defined in \eqref{eqn:us_prior}. The expected regret of FGTS with $\lambda \propto \sqrt{d\log(n)/(Kn)}$ is at most
\begin{equation*}
R_n(f^*) = \mathcal{O}(\|w^*\|_0\sqrt{Kdn\log(n)})\,.
\end{equation*}
Suppose that $s \geq \|w^*\|_0$ is a known upper bound on the sparsity. The expected regret of FGTS with $\lambda \propto \sqrt{sd\log(n)/(Kn)}$ is at most
\begin{equation*}
R_n(f^*) = \mathcal{O}(\sqrt{Ksdn\log(n)})\,.
\end{equation*}
\end{theorem}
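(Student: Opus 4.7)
The plan is to instantiate the Feel-Good Thompson Sampling analysis and derive a new lower bound on the log-partition function $Z_n$ that exploits the $s$-sparsity of $f^*$ together with the uniform Lipschitz continuity of $\phi$. By Theorem~\ref{thm:expregretbound} applied with the prior defined in \eqref{eqn:us_prior} (for which $\PP{\max_{x,a}|f_\nu(x,a)| \leq 1} = 1$ since $\|w\|_1 \leq 1$ and $\|\phi\|_\infty \leq 1$), we have
\begin{equation*}
R_n(f^*) \leq \frac{\lambda K n}{\eta} + 6\lambda n - \frac{1}{\lambda} Z_n,
\end{equation*}
so the whole task reduces to a PAC-Bayes-style lower bound on $Z_n$ via the Donsker--Varadhan variational formula: for any data-independent measure $q$ absolutely continuous with respect to $p_1$,
\begin{equation*}
Z_n \geq -\KL(q \| p_1) - \sum_{t=1}^n \EE{\EEs{\Delta L(\nu, X_t, A_t, Y_t)}{\nu \sim q}}.
\end{equation*}

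The key is to choose $q$ as the restriction of $p_1$ to a small neighbourhood of $\nu^* = (s, w^*, \theta_1^*, \ldots, \theta_s^*)$, namely $B = \{m = s\} \cap \{\norm{w - w^*}_1 \leq \epsilon_w\} \cap \bigcap_{i=1}^s \{\norm{\theta_i - \theta_i^*}_2 \leq \epsilon_\theta\}$ for $\epsilon_w, \epsilon_\theta$ to be tuned (if $\|w^*\|_0 < s$, pad $w^*$ by zeros; if a component of $\nu^*$ lies on the boundary of its ball, shift the reference point inward by a negligible amount). For any $\nu \in B$, the triangle inequality together with $\|\phi\|_\infty \leq 1$, $\norm{w^*}_1 \leq 1$ and Definition~\ref{def:uni_lip} gives the pointwise bound
\begin{equation*}
\sup_{x, a}|f_\nu(x, a) - f^*(x, a)| \leq \norm{w - w^*}_1 + \norm{w^*}_1 \max_i \norm{\theta_i - \theta_i^*}_2 \leq \epsilon_w + \epsilon_\theta =: \delta.
\end{equation*}
Taking conditional expectations over $\epsilon_t$ cancels the cross-term in the quadratic part of $\Delta L$, so $\EE{\Delta L(\nu, X_t, A_t, Y_t)} \leq \eta \delta^2 + \lambda \delta$, and hence the energy term is at most $n\eta \delta^2 + n\lambda\delta$, which becomes $O(1)$ once we take $\epsilon_w, \epsilon_\theta = 1/n$.

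For the KL term, since $q$ is a restriction of $p_1$, $\KL(q \| p_1) = -\log p_1(B)$. By the product structure of \eqref{eqn:us_prior}, $p_1(B) \geq p_1(m=s) \cdot \text{vol}(\mathbb{B}_1^s(\epsilon_w))/\text{vol}(\mathbb{B}_1^s(1)) \cdot \prod_{i=1}^s \text{vol}(\mathbb{B}_2^d(\epsilon_\theta))/\text{vol}(\mathbb{B}_2^d(1))$, and the three ingredients scale as $2^{-s}$, $\epsilon_w^s$ and $\epsilon_\theta^{sd}$ respectively, giving
\begin{equation*}
-\log p_1(B) \leq s\log 2 + s\log(1/\epsilon_w) + sd\log(1/\epsilon_\theta) + O(1).
\end{equation*}
Substituting $\epsilon_w = \epsilon_\theta = 1/n$ and $\eta = 1/4$, everything collapses to
\begin{equation*}
R_n(f^*) \lesssim \lambda K n + \frac{sd \log n}{\lambda}.
\end{equation*}
Tuning $\lambda \propto \sqrt{sd\log n/(Kn)}$ gives the known-sparsity bound $\OO(\sqrt{Ksdn\log n})$, and tuning $\lambda \propto \sqrt{d\log n/(Kn)}$ (not involving $s$) gives the unknown-sparsity bound $\OO(\|w^*\|_0 \sqrt{Kdn\log n})$, where the extra factor of $\|w^*\|_0$ comes from the $s\log(1/\epsilon_w) + sd\log(1/\epsilon_\theta)$ term divided by this smaller $\lambda$.

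The main technical obstacle is the KL computation: handling the boundary case (when some $\theta_i^*$ or the vector $w^*$ touches the boundary of the ball) requires a careful perturbation argument so that the tiny ball of radius $\epsilon_\theta$ or $\epsilon_w$ is still entirely contained in the unit ball, and verifying that the $2^{-m}$ decay of $p_1(m)$ only costs a constant factor rather than growing in $s$ (it contributes $s\log 2$, which is absorbed into the linear-in-$s$ term). A secondary care point is that, when the true sparsity $\|w^*\|_0 < s$ for known $s$, one must check the padding argument is compatible with the prior structure, which it is because the $p_1(w \mid m)$ distribution places positive density on vectors with some components arbitrarily close to zero.
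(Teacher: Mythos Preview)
Your argument is correct and takes a genuinely simpler route than the paper. Both proofs start from Theorem~\ref{thm:expregretbound} and lower-bound $Z_n$ by localising the prior to a small neighbourhood of $\nu^*$; the difference is in how the resulting integral is controlled. The paper passes from the expectation over the neighbourhood to the \emph{minimum}, obtaining $Z_n \geq \log p_1(N_c) - \EE{\max_{\nu \in N_c}\sum_t \Delta L_t(\nu)}$, and then has to control the noise process $\sum_t (f_\nu - f^*)\epsilon_t$ \emph{uniformly} over $\nu \in N_c$, which is exactly what the covering argument in Lemma~\ref{lem:us_gaussian_comp} (and its auxiliary Lemma~\ref{lem:exp_moment_nu} and Corollary~\ref{cor:nu_cover}) is for. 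Your Donsker--Varadhan step keeps the \emph{average} over $q$ instead, so after swapping the data and $\nu$ expectations by Fubini you kill the noise term pointwise ($\EEt{\epsilon_t}=0$ for fixed $\nu$). This removes the entire covering layer from the proof. The paper's route is heavier but makes the uniform control explicit, which would be the natural starting point if one later wanted high-probability rather than in-expectation guarantees.

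Two small remarks. First, your boundary fix (``shift the reference point inward'') is precisely the paper's $N_c = \{(1-c)\nu^* + c\,\nu : \nu \in \mathbb{B}_1^s(1)\times(\mathbb{B}_2^d(1))^s\}$ construction, which sits entirely inside the support of $p_1(\cdot\mid m=s)$ and has exact volume ratio $c^{s(d+1)}$; using this directly is cleaner than a separate perturbation argument. Second, for the known-$s$ case you do not actually need to pad: simply run the analysis with $m=\|w^*\|_0$ to get $-Z_n \lesssim \|w^*\|_0\,d\log n$ and then use $\|w^*\|_0 \leq s$ before tuning $\lambda$, exactly as the paper does in Section~\ref{sec:us_ub_proof_sub}.
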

This regret bound shows that FGTS with a suitable prior is nearly minimax optimal for the uncountable sparsity setting. In particular, the regret bound for known sparsity matches the lower bound in Theorem \ref{thm:uncountable_lb} up to a factor of $\sqrt{\log(n)}$. The proof of Theorem \ref{thm:upperbounds_us} can be found in Appendix \ref{sec:us_ub_proof}.

\section{Discussion}
\label{sec:discussion}

In this work, we studied a new class of contextual bandit problems, called sparse nonparametric contextual bandits, which captures the challenge of simultaneously learning relevant features and minimising regret. Our main goal was to establish whether it is possible to exploit sparsity to obtain a flexible and sample-efficient contextual bandit algorithm. To this end, we proved upper and lower bounds on the minimax regret for this class of problems, which match up to logarithmic factors of $n$. Our findings are mixed. On the one hand, our lower bounds have polynomial dependence on the number of actions, which suggests that it is difficult to exploit sparsity when the number of actions is large. On the other hand, we showed that the Feel-Good Thompson Sampling algorithm, with suitable sparsity priors, enjoys regret bounds with mild dependence on the number of candidate features. When applied to sparse kernelised contextual bandits and neural contextual bandits (cf. Appendix \ref{sec:kbs}), we found that sparsity always enables better regret bounds, as long as the horizon and/or the dimension of the contexts is large enough relative to the sparsity and the number of actions. In the subsections that follow, we discuss some limitations of and questions raised by our work.

\subsection{Computationally Efficient Algorithms}\label{sec:appcompeff}

An important question is whether or not there exists a computationally efficient algorithm that has a (nearly) matching regret bound. For FGTS, this question boils down to whether or not one can sample from the posterior efficiently. In particular, if there is access to an oracle that samples from the posterior, then FGTS is oracle-efficient. We are required to sample from the posterior only once per round and the rest of the algorithm has time complexity which polynomial in $K$, $s$, $n$, $d$ and the dimension of the contexts. Several works have demonstrated that Markov Chain Monte Carlo methods can be used to approximately sample from the sparse posteriors that appear in our analysis \citep{alquier2011pac, rigollet2011ewa, guedj2013pac}. However, we are not aware of any way to sample from the posteriors in Section \ref{sec:ubs} with polynomial time complexity.

Several oracle-based contextual bandit algorithms, such as SquareCB \citep{foster2020beyond}, are known to lead to sample-efficient and computationally efficient algorithms whenever the oracle is both sample-efficient and computationally efficient. For SquareCB in particular, this reduces the problem to that of finding a sample-efficient and computationally efficient algorithm for sparse nonparametric regression. However, it is known that even sparse linear regression exhibits an unfortunate trade-off between sample and computational efficiency \citep{zhang2014lower}. If we require a polynomial-time algorithm, we could resort to using ``slow-rate'' bounds for the LASSO (see e.g.\ Section 7.4 in \citet{wainwright2019high}) for finite-dimensional or countable sparse regression, or the Beurling LASSO \citep{bach2017breaking} for uncountable sparse regression. However, the regret bound of SquareCB would have a sub-optimal $n^{3/4}$ rate. Moreover, it is not clear that this actually would lead to a polynomial-time algorithm for sparse nonparametric regression. The time complexity of the LASSO is polynomial in the number of candidate features, but in the countable sparsity model, the (effective) number of features is $\deff$, which can be exponential in the dimension of the contexts. Similarly, it is not known whether the Beurling LASSO can be computed with polynomial time complexity.

\subsection{Large Action Sets}\label{sec:largeactionsets}

Our lower bounds show that under our mild set of assumptions it is not possible to have low worst-case regret when the set of actions is large. It would be interesting to investigate which additional assumptions can lead to regret bounds with improved dependence on the number of actions. In sparse linear contextual bandits with contexts drawn i.i.d.\ from a well-conditioned distribution, there are several algorithms that achieve logarithmic (or better) dependence on both the dimension and the number of actions \citep{kim2019doubly, oh2021sparsity}. Loosely speaking, the context distribution is well-conditioned if there is low correlation between the features. This notion of a well-conditioned distribution can be made precise using conditions studied in high-dimensional statistics, such as the compatibility condition \citep{buhlmann2011statistics}, restricted eigenvalue conditions \citep{bickel2009simultaneous} and the restricted isometry property \citep{candes2005decoding, candes2007dantzig} to name a few. Unfortunately, when the number of features is infinite, these conditions can fail to hold at all. In the field of compressed sensing, a number of weaker conditions, known as individual or non-uniform recovery conditions (cf.\ Section 4.4 in \citet{foucart2013mathematical}), have been studied and applied to sparse infinite-dimensional linear models. In the uncountable sparsity model, these conditions roughly state that recovery is possible if $\theta_1^*, \dots, \theta_s^*$ are well-separated \citep{candes2014towards, duval2015exact, azais2015spike, poon2023geometry}. One could investigate whether these individual recovery conditions enable regret bounds for sparse nonparametric contextual bandits with logarithmic (or better) dependence on the number of actions.

These conditions on the context distribution also have implications for designing computationally efficient algorithms. If the restricted isometry property is satisfied, then it is possible to sample from spike-and-slab posteriors for sparse (finite-dimensional) linear models with polynomial time complexity \citep{kumar2025spike}. If one could prove a similar result for sparse nonparametric models, then (under suitable conditions on the context distribution) it would be possible to run FGTS with polynomial time complexity. Similarly, if a restricted eigenvalue condition is satisfied, then the LASSO enjoys fast-rate bounds for sparse regression (cf.\ Theorem 7.20 in \citet{wainwright2019high}), and so SquareCB would have a $\sqrt{n}$ regret bound. However, as previously mentioned, it is not clear that the LASSO (or BLASSO) has polynomial time complexity for sparse nonparametric models.

\subsection{Instance-Dependent Analysis}

We could also consider conditions on the context and/or reward distributions that allow for improved dependence of the regret on $n$. Several sparse linear bandit algorithms enjoy improved regret bounds when a margin condition (cf.\ Assumption 1 in \citet{li2021regret}) is satisfied. The margin condition roughly states that, with high probability over the random draw of the context, the gap between the reward of the best action and that of the second best action is large. In the extreme case where, with probability 1, there is a strictly positive gap, regret bounds of order $\log(n)$ can be achieved \citep{wang2018minimax, bastani2020online, li2021regret}. It would be interesting to investigate whether the margin condition enables logarithmic regret bounds for sparse nonparametric contextual bandits.

Similarly, one could investigate first-order bounds, where the dependence of the regret on $n$ is replaced with the cumulative loss (negation of the reward) of the best policy, which will be much smaller when, for instance, a margin condition is satisfied and/or the rewards are (almost) noiseless. The first efficient algorithm with first-order guarantees for contextual bandits was introduced by \cite{foster2021efficient}, and is based on a reduction from contextual bandits to online regression with cross-entropy loss. More recently, the Optimistic Information Directed Sampling (OIDS) algorithm was shown to achieve a first-order regret bound for a general class of contextual bandit problems \citep*{neu2024optimisic}. As noted by \citet*{neu2024optimisic}, their information-theoretic regret analysis is closely related to the decoupling technique proposed by \citet{zhang2022feel}, which we used in this paper. In particular, both approaches reduce the problem of bounding the regret to that of bounding the log partition function $Z_n$ in \eqref{eq:Zt}. We therefore expect that our bounds on $Z_n$ could be used to show that OIDS satisfies a first-order regret bound for sparse nonparametric contextual bandits.

\subsection{Misspecification}\label{sec:appmiss}

For simplicity, and to keep our attention focused solely on the challenge of exploiting sparsity, we assumed that the model is well-specified. To make the model more realistic, we could assume that the reward function is approximated well by a sparse function. For FGTS, achieving a good balance between the regret suffered while estimating the best $s$-sparse approximation of the reward function and the regret suffered due to approximation error likely requires the tuning parameter $\lambda$ to be set according to the misspecification level, which will decrease as $s$ increases, but the rate at which it does so may be unknown. One could investigate whether an aggregation (or ``corralling'') procedure \citep{agarwal2017corralling, foster2020adapting} could be used to tune $\lambda$ adaptively.

\acks{We would like to thank Gergely Neu, Eugenio Clerico and Ludovic Schwartz for insightful discussions that took place during the preparation of this manuscript. Hamish was funded by the European Research Council (ERC), under the European Union’s Horizon 2020 research and innovation programme (grant agreement 950180). Julia was supported by the Netherlands Organization for Scientific Research (NWO) under grant number VI.Veni.232.068. Paul was supported by the Department of Research and Universities of the Government of Catalunya and the European Social Fund.}


\bibliography{main}

\begin{thebibliography}{104}
\providecommand{\natexlab}[1]{#1}
\providecommand{\url}[1]{\texttt{#1}}
\expandafter\ifx\csname urlstyle\endcsname\relax
  \providecommand{\doi}[1]{doi: #1}\else
  \providecommand{\doi}{doi: \begingroup \urlstyle{rm}\Url}\fi

\bibitem[Abbasi-Yadkori et~al.(2011)Abbasi-Yadkori, P{\'a}l, and
  Szepesv{\'a}ri]{abbasi2011improved}
Yasin Abbasi-Yadkori, D{\'a}vid P{\'a}l, and Csaba Szepesv{\'a}ri.
\newblock Improved algorithms for linear stochastic bandits.
\newblock \emph{Advances in neural information processing systems}, 24, 2011.

\bibitem[Abbasi-Yadkori et~al.(2012)Abbasi-Yadkori, Pal, and
  Szepesvari]{abbasi2012online}
Yasin Abbasi-Yadkori, David Pal, and Csaba Szepesvari.
\newblock Online-to-confidence-set conversions and application to sparse
  stochastic bandits.
\newblock In \emph{Artificial Intelligence and Statistics}, pages 1--9. PMLR,
  2012.

\bibitem[Abe and Long(1999)]{abe1999associative}
Naoki Abe and Philip~M Long.
\newblock Associative reinforcement learning using linear probabilistic
  concepts.
\newblock In \emph{ICML}, pages 3--11. Citeseer, 1999.

\bibitem[Abe and Nakamura(1999)]{abe1999learning}
Naoki Abe and Atsuyoshi Nakamura.
\newblock Learning to optimally schedule internet banner advertisements.
\newblock In \emph{Proceedings of the Sixteenth International Conference on
  Machine Learning}, pages 12--21, 1999.

\bibitem[Agarwal et~al.(2017)Agarwal, Luo, Neyshabur, and
  Schapire]{agarwal2017corralling}
Alekh Agarwal, Haipeng Luo, Behnam Neyshabur, and Robert~E Schapire.
\newblock Corralling a band of bandit algorithms.
\newblock In \emph{Conference on Learning Theory}, pages 12--38. PMLR, 2017.

\bibitem[Alquier and Lounici(2011)]{alquier2011pac}
Pierre Alquier and Karim Lounici.
\newblock {PAC}-{B}ayesian theorems for sparse regression estimation with
  exponential weights.
\newblock \emph{Electronic Journal of Statistics}, 5:\penalty0 127--145, 2011.

\bibitem[Ariu et~al.(2022)Ariu, Abe, and Prouti{\`e}re]{ariu2022thresholded}
Kaito Ariu, Kenshi Abe, and Alexandre Prouti{\`e}re.
\newblock Thresholded lasso bandit.
\newblock In \emph{International Conference on Machine Learning}, pages
  878--928. PMLR, 2022.

\bibitem[Auer(2002)]{auer2002using}
Peter Auer.
\newblock Using confidence bounds for exploitation-exploration trade-offs.
\newblock \emph{Journal of Machine Learning Research}, 3\penalty0
  (Nov):\penalty0 397--422, 2002.

\bibitem[Azais et~al.(2015)Azais, De~Castro, and Gamboa]{azais2015spike}
Jean-Marc Azais, Yohann De~Castro, and Fabrice Gamboa.
\newblock Spike detection from inaccurate samplings.
\newblock \emph{Applied and Computational Harmonic Analysis}, 38\penalty0
  (2):\penalty0 177--195, 2015.

\bibitem[Bach(2017)]{bach2017breaking}
Francis Bach.
\newblock Breaking the curse of dimensionality with convex neural networks.
\newblock \emph{The Journal of Machine Learning Research}, 18\penalty0
  (1):\penalty0 629--681, 2017.

\bibitem[Bastani and Bayati(2020)]{bastani2020online}
Hamsa Bastani and Mohsen Bayati.
\newblock Online decision making with high-dimensional covariates.
\newblock \emph{Operations Research}, 68\penalty0 (1):\penalty0 276--294, 2020.

\bibitem[Bastani et~al.(2021)Bastani, Bayati, and Khosravi]{bastani2021mostly}
Hamsa Bastani, Mohsen Bayati, and Khashayar Khosravi.
\newblock Mostly exploration-free algorithms for contextual bandits.
\newblock \emph{Management Science}, 67\penalty0 (3):\penalty0 1329--1349,
  2021.

\bibitem[Belkin(2018)]{belkin2018approximation}
Mikhail Belkin.
\newblock Approximation beats concentration? an approximation view on inference
  with smooth radial kernels.
\newblock In \emph{Conference On Learning Theory}, pages 1348--1361. PMLR,
  2018.

\bibitem[Bengio et~al.(2005)Bengio, Roux, Vincent, Delalleau, and
  Marcotte]{bengio2005convex}
Yoshua Bengio, Nicolas Roux, Pascal Vincent, Olivier Delalleau, and Patrice
  Marcotte.
\newblock Convex neural networks.
\newblock \emph{Advances in neural information processing systems}, 18, 2005.

\bibitem[Bickel et~al.(2009)Bickel, Ritov, and
  Tsybakov]{bickel2009simultaneous}
Peter~J Bickel, Ya’acov Ritov, and Alexandre~B Tsybakov.
\newblock Simultaneous analysis of lasso and {D}antzig selector.
\newblock \emph{Annals of Statistics}, 37\penalty0 (4):\penalty0 1705--1732,
  2009.

\bibitem[Bodmann et~al.(2018)Bodmann, Flinth, and
  Kutyniok]{bodmann2018compressed}
Bernard~G Bodmann, Axel Flinth, and Gitta Kutyniok.
\newblock Compressed sensing for analog signals.
\newblock \emph{arXiv preprint arXiv:1803.04218}, 2018.

\bibitem[Bredies and Pikkarainen(2013)]{bredies2013inverse}
Kristian Bredies and Hanna~Katriina Pikkarainen.
\newblock Inverse problems in spaces of measures.
\newblock \emph{ESAIM: Control, Optimisation and Calculus of Variations},
  19\penalty0 (1):\penalty0 190--218, 2013.

\bibitem[Bubeck et~al.(2008)Bubeck, Stoltz, Szepesv\'{a}ri, and
  Munos]{bubeck2008online}
S\'{e}bastien Bubeck, Gilles Stoltz, Csaba Szepesv\'{a}ri, and R\'{e}mi Munos.
\newblock Online optimization in x-armed bandits.
\newblock In D.~Koller, D.~Schuurmans, Y.~Bengio, and L.~Bottou, editors,
  \emph{Advances in Neural Information Processing Systems}, volume~21. Curran
  Associates, Inc., 2008.
\newblock URL
  \url{https://proceedings.neurips.cc/paper_files/paper/2008/file/f387624df552cea2f369918c5e1e12bc-Paper.pdf}.

\bibitem[B{\"u}hlmann and Van De~Geer(2011)]{buhlmann2011statistics}
Peter B{\"u}hlmann and Sara Van De~Geer.
\newblock \emph{Statistics for high-dimensional data: methods, theory and
  applications}.
\newblock Springer Science \& Business Media, 2011.

\bibitem[Calandriello et~al.(2019)Calandriello, Carratino, Lazaric, Valko, and
  Rosasco]{calandriello2019gaussian}
Daniele Calandriello, Luigi Carratino, Alessandro Lazaric, Michal Valko, and
  Lorenzo Rosasco.
\newblock Gaussian process optimization with adaptive sketching: Scalable and
  no regret.
\newblock In \emph{Conference on Learning Theory}, pages 533--557. PMLR, 2019.

\bibitem[Cand{\`e}s and Fernandez-Granda(2014)]{candes2014towards}
Emmanuel Cand{\`e}s and Carlos Fernandez-Granda.
\newblock Towards a mathematical theory of super-resolution.
\newblock \emph{Communications on pure and applied Mathematics}, 67\penalty0
  (6):\penalty0 906--956, 2014.

\bibitem[Cand{\`e}s and Tao(2005)]{candes2005decoding}
Emmanuel Cand{\`e}s and Terence Tao.
\newblock Decoding by linear programming.
\newblock \emph{IEEE transactions on information theory}, 51\penalty0
  (12):\penalty0 4203--4215, 2005.

\bibitem[Cand{\`e}s and Tao(2007)]{candes2007dantzig}
Emmanuel Cand{\`e}s and Terence Tao.
\newblock The {D}antzig selector: Statistical estimation when p is much larger
  than n.
\newblock \emph{Annals of Statistics}, 35:\penalty0 2313--2351, 2007.

\bibitem[Cesa-Bianchi and Lugosi(2006)]{cesa2006prediction}
Nicolo Cesa-Bianchi and G{\'a}bor Lugosi.
\newblock \emph{Prediction, learning, and games}.
\newblock Cambridge university press, 2006.

\bibitem[Chakraborty et~al.(2023)Chakraborty, Roy, and
  Tewari]{chakraborty2023thompson}
Sunrit Chakraborty, Saptarshi Roy, and Ambuj Tewari.
\newblock Thompson sampling for high-dimensional sparse linear contextual
  bandits.
\newblock In \emph{International Conference on Machine Learning}, pages
  3979--4008. PMLR, 2023.

\bibitem[Chen et~al.(2012)Chen, Castro, and Krause]{chen2012joint}
Bo~Chen, Rui Castro, and Andreas Krause.
\newblock Joint optimization and variable selection of high-dimensional
  {G}aussian processes.
\newblock In \emph{Proc. International Conference on Machine Learning (ICML)},
  2012.

\bibitem[Chen et~al.(2001)Chen, Donoho, and Saunders]{chen2001atomic}
Scott~Shaobing Chen, David~L Donoho, and Michael~A Saunders.
\newblock Atomic decomposition by basis pursuit.
\newblock \emph{SIAM review}, 43\penalty0 (1):\penalty0 129--159, 2001.

\bibitem[Chizat et~al.(2019)Chizat, Oyallon, and Bach]{chizat2019lazy}
L\'{e}na\"{\i}c Chizat, Edouard Oyallon, and Francis Bach.
\newblock On lazy training in differentiable programming.
\newblock In H.~Wallach, H.~Larochelle, A.~Beygelzimer, F.~d\textquotesingle
  Alch\'{e}-Buc, E.~Fox, and R.~Garnett, editors, \emph{Advances in Neural
  Information Processing Systems}, volume~32. Curran Associates, Inc., 2019.
\newblock URL
  \url{https://proceedings.neurips.cc/paper_files/paper/2019/file/ae614c557843b1df326cb29c57225459-Paper.pdf}.

\bibitem[Chu et~al.(2011)Chu, Li, Reyzin, and Schapire]{chu2011contextual}
Wei Chu, Lihong Li, Lev Reyzin, and Robert Schapire.
\newblock Contextual bandits with linear payoff functions.
\newblock In \emph{Proceedings of the Fourteenth International Conference on
  Artificial Intelligence and Statistics}, pages 208--214. JMLR Workshop and
  Conference Proceedings, 2011.

\bibitem[Cortes(1995)]{cortes1995support}
Corinna Cortes.
\newblock Support-vector networks.
\newblock \emph{Machine Learning}, 1995.

\bibitem[Dalalyan and Tsybakov(2008)]{dalalyan2008aggregation}
A~Dalalyan and A~Tsybakov.
\newblock Aggregation by exponential weighting, sharp oracle inequalities and
  sparsity.
\newblock \emph{Machine Learning}, 72\penalty0 (1-2):\penalty0 39--61, 2008.

\bibitem[Dalalyan and Tsybakov(2012{\natexlab{a}})]{dalalyan2012mirror}
Arnak~S Dalalyan and Alexandre Tsybakov.
\newblock Mirror averaging with sparsity priors.
\newblock \emph{Bernoulli}, 18\penalty0 (3):\penalty0 914--944,
  2012{\natexlab{a}}.

\bibitem[Dalalyan and Tsybakov(2012{\natexlab{b}})]{dalalyanLangevin}
A.S. Dalalyan and A.B. Tsybakov.
\newblock Sparse regression learning by aggregation and {L}angevin monte-carlo.
\newblock \emph{Journal of Computer and System Sciences}, 78\penalty0
  (5):\penalty0 1423--1443, 2012{\natexlab{b}}.
\newblock ISSN 0022-0000.
\newblock \doi{https://doi.org/10.1016/j.jcss.2011.12.023}.
\newblock URL
  \url{https://www.sciencedirect.com/science/article/pii/S0022000012000220}.
\newblock JCSS Special Issue: Cloud Computing 2011.

\bibitem[Dani et~al.(2008)Dani, Hayes, and Kakade]{dani2008stochastic}
Varsha Dani, Thomas~P Hayes, and Sham~M Kakade.
\newblock Stochastic linear optimization under bandit feedback.
\newblock In \emph{COLT}, volume~2, page~3, 2008.

\bibitem[De~Castro and Gamboa(2012)]{de2012exact}
Yohann De~Castro and Fabrice Gamboa.
\newblock Exact reconstruction using {B}eurling minimal extrapolation.
\newblock \emph{Journal of Mathematical Analysis and applications},
  395\penalty0 (1):\penalty0 336--354, 2012.

\bibitem[Djolonga et~al.(2013)Djolonga, Krause, and Cevher]{djolonga2013high}
Josip Djolonga, Andreas Krause, and Volkan Cevher.
\newblock High-dimensional {G}aussian process bandits.
\newblock \emph{Advances in neural information processing systems}, 26, 2013.

\bibitem[Donoho and Johnstone(1994)]{donoho1994ideal}
David~L Donoho and Iain~M Johnstone.
\newblock Ideal spatial adaptation by wavelet shrinkage.
\newblock \emph{biometrika}, 81\penalty0 (3):\penalty0 425--455, 1994.

\bibitem[Donoho and Johnstone(1995)]{donoho1995adapting}
David~L Donoho and Iain~M Johnstone.
\newblock Adapting to unknown smoothness via wavelet shrinkage.
\newblock \emph{Journal of the american statistical association}, 90\penalty0
  (432):\penalty0 1200--1224, 1995.

\bibitem[Donoho and Johnstone(1998)]{donoho1998minimax}
David~L Donoho and Iain~M Johnstone.
\newblock Minimax estimation via wavelet shrinkage.
\newblock \emph{The annals of Statistics}, 26\penalty0 (3):\penalty0 879--921,
  1998.

\bibitem[Donoho et~al.(1995)Donoho, Johnstone, Kerkyacharian, and
  Picard]{donoho1995wavelet}
David~L Donoho, Iain~M Johnstone, G{\'e}rard Kerkyacharian, and Dominique
  Picard.
\newblock Wavelet shrinkage: asymptopia?
\newblock \emph{Journal of the Royal Statistical Society: Series B
  (Methodological)}, 57\penalty0 (2):\penalty0 301--337, 1995.

\bibitem[Durand et~al.(2018)Durand, Achilleos, Iacovides, Strati, Mitsis, and
  Pineau]{durand2018contextual}
Audrey Durand, Charis Achilleos, Demetris Iacovides, Katerina Strati,
  Georgios~D Mitsis, and Joelle Pineau.
\newblock Contextual bandits for adapting treatment in a mouse model of de novo
  carcinogenesis.
\newblock In \emph{Machine learning for healthcare conference}, pages 67--82.
  PMLR, 2018.

\bibitem[Duval and Peyr{\'e}(2015)]{duval2015exact}
Vincent Duval and Gabriel Peyr{\'e}.
\newblock Exact support recovery for sparse spikes deconvolution.
\newblock \emph{Foundations of Computational Mathematics}, 15\penalty0
  (5):\penalty0 1315--1355, 2015.

\bibitem[Foster and Rakhlin(2020)]{foster2020beyond}
Dylan Foster and Alexander Rakhlin.
\newblock Beyond {UCB}: Optimal and efficient contextual bandits with
  regression oracles.
\newblock In \emph{International Conference on Machine Learning}, pages
  3199--3210. PMLR, 2020.

\bibitem[Foster et~al.(2018)Foster, Agarwal, Dud{\'\i}k, Luo, and
  Schapire]{foster2018practical}
Dylan Foster, Alekh Agarwal, Miroslav Dud{\'\i}k, Haipeng Luo, and Robert
  Schapire.
\newblock Practical contextual bandits with regression oracles.
\newblock In \emph{International Conference on Machine Learning}, pages
  1539--1548. PMLR, 2018.

\bibitem[Foster and Krishnamurthy(2021)]{foster2021efficient}
Dylan~J Foster and Akshay Krishnamurthy.
\newblock Efficient first-order contextual bandits: Prediction, allocation, and
  triangular discrimination.
\newblock \emph{Advances in Neural Information Processing Systems},
  34:\penalty0 18907--18919, 2021.

\bibitem[Foster et~al.(2020)Foster, Gentile, Mohri, and
  Zimmert]{foster2020adapting}
Dylan~J Foster, Claudio Gentile, Mehryar Mohri, and Julian Zimmert.
\newblock Adapting to misspecification in contextual bandits.
\newblock \emph{Advances in Neural Information Processing Systems},
  33:\penalty0 11478--11489, 2020.

\bibitem[Foucart and Rauhut(2013)]{foucart2013mathematical}
Simon Foucart and Holger Rauhut.
\newblock \emph{A Mathematical Introduction to Compressive Sensing}.
\newblock Birkh\"{a}user, 2013.

\bibitem[Gardner et~al.(2017)Gardner, Guo, Weinberger, Garnett, and
  Grosse]{gardner2017discovering}
Jacob Gardner, Chuan Guo, Kilian Weinberger, Roman Garnett, and Roger Grosse.
\newblock {Discovering and Exploiting Additive Structure for Bayesian
  Optimization}.
\newblock In Aarti Singh and Jerry Zhu, editors, \emph{Proceedings of the 20th
  International Conference on Artificial Intelligence and Statistics},
  volume~54 of \emph{Proceedings of Machine Learning Research}, pages
  1311--1319. PMLR, 20--22 Apr 2017.

\bibitem[Gerchinovitz(2011)]{gerchinovitz2011sparsity}
S{\'e}bastien Gerchinovitz.
\newblock Sparsity regret bounds for individual sequences in online linear
  regression.
\newblock In \emph{Proceedings of the 24th Annual Conference on Learning
  Theory}, pages 377--396. JMLR Workshop and Conference Proceedings, 2011.

\bibitem[Guedj and Alquier(2013)]{guedj2013pac}
Benjamin Guedj and Pierre Alquier.
\newblock {PAC-Bayesian} estimation and prediction in sparse additive models.
\newblock \emph{Electronic Journal of Statistics}, 7:\penalty0 264--291, 2013.
\newblock \doi{10.1214/13-EJS771}.
\newblock URL \url{https://doi.org/10.1214/13-EJS771}.

\bibitem[Hao et~al.(2020{\natexlab{a}})Hao, Lattimore, and
  Szepesvari]{hao2020adaptive}
Botao Hao, Tor Lattimore, and Csaba Szepesvari.
\newblock Adaptive exploration in linear contextual bandit.
\newblock In \emph{International Conference on Artificial Intelligence and
  Statistics}, pages 3536--3545. PMLR, 2020{\natexlab{a}}.

\bibitem[Hao et~al.(2020{\natexlab{b}})Hao, Lattimore, and Wang]{hao2020high}
Botao Hao, Tor Lattimore, and Mengdi Wang.
\newblock High-dimensional sparse linear bandits.
\newblock \emph{Advances in Neural Information Processing Systems},
  33:\penalty0 10753--10763, 2020{\natexlab{b}}.

\bibitem[Jacot et~al.(2018)Jacot, Gabriel, and Hongler]{jacot2018ntk}
Arthur Jacot, Franck Gabriel, and Clement Hongler.
\newblock Neural tangent kernel: Convergence and generalization in neural
  networks.
\newblock In S.~Bengio, H.~Wallach, H.~Larochelle, K.~Grauman, N.~Cesa-Bianchi,
  and R.~Garnett, editors, \emph{Advances in Neural Information Processing
  Systems}, volume~31. Curran Associates, Inc., 2018.
\newblock URL
  \url{https://proceedings.neurips.cc/paper_files/paper/2018/file/5a4be1fa34e62bb8a6ec6b91d2462f5a-Paper.pdf}.

\bibitem[Jang et~al.(2022)Jang, Zhang, and Jun]{jang2022popart}
Kyoungseok Jang, Chicheng Zhang, and Kwang-Sung Jun.
\newblock Popart: Efficient sparse regression and experimental design for
  optimal sparse linear bandits.
\newblock \emph{Advances in Neural Information Processing Systems},
  35:\penalty0 2102--2114, 2022.

\bibitem[Kandasamy et~al.(2015)Kandasamy, Schneider, and
  Poczos]{kandasamy2015high}
Kirthevasan Kandasamy, Jeff Schneider, and Barnabas Poczos.
\newblock High dimensional {B}ayesian optimisation and bandits via additive
  models.
\newblock In Francis Bach and David Blei, editors, \emph{Proceedings of the
  32nd International Conference on Machine Learning}, volume~37 of
  \emph{Proceedings of Machine Learning Research}, pages 295--304, Lille,
  France, 07--09 Jul 2015. PMLR.

\bibitem[Kassraie and Krause(2022)]{kassraie2022neural}
Parnian Kassraie and Andreas Krause.
\newblock Neural contextual bandits without regret.
\newblock In Gustau Camps-Valls, Francisco J.~R. Ruiz, and Isabel Valera,
  editors, \emph{Proceedings of The 25th International Conference on Artificial
  Intelligence and Statistics}, volume 151 of \emph{Proceedings of Machine
  Learning Research}, pages 240--278. PMLR, 28--30 Mar 2022.
\newblock URL \url{https://proceedings.mlr.press/v151/kassraie22a.html}.

\bibitem[Kim and Paik(2019)]{kim2019doubly}
Gi-Soo Kim and Myunghee~Cho Paik.
\newblock Doubly-robust lasso bandit.
\newblock \emph{Advances in Neural Information Processing Systems}, 32, 2019.

\bibitem[Kim and Oh(2024)]{kim2024local}
Seok-Jin Kim and Min-hwan Oh.
\newblock Local anti-concentration class: Logarithmic regret for greedy linear
  contextual bandit.
\newblock \emph{Advances in Neural Information Processing Systems}, 37, 2024.

\bibitem[Kleinberg et~al.(2008)Kleinberg, Slivkins, and
  Upfal]{kleinberg2008multi}
Robert Kleinberg, Aleksandrs Slivkins, and Eli Upfal.
\newblock Multi-armed bandits in metric spaces.
\newblock In \emph{Proceedings of the fortieth annual ACM symposium on Theory
  of computing}, pages 681--690, 2008.

\bibitem[Koltchinskii and Yuan(2008)]{koltchinskii2008sparse}
Vladimir Koltchinskii and Ming Yuan.
\newblock Sparse recovery in large ensembles of kernel machines.
\newblock In \emph{Proceedings of COLT}, volume~69, 2008.

\bibitem[Kumar et~al.(2025)Kumar, Sarkar, Tian, and Zhu]{kumar2025spike}
Syamantak Kumar, Purnamrita Sarkar, Kevin Tian, and Yusong Zhu.
\newblock Spike-and-slab posterior sampling in high dimensions.
\newblock \emph{arXiv preprint arXiv:2503.02798}, 2025.

\bibitem[Lattimore and Szepesv{\'a}ri(2020{\natexlab{a}})]{lattimore2020bandit}
Tor Lattimore and Csaba Szepesv{\'a}ri.
\newblock \emph{Bandit algorithms}.
\newblock Cambridge University Press, 2020{\natexlab{a}}.

\bibitem[Lattimore and
  Szepesv{\'a}ri(2020{\natexlab{b}})]{lattimore2020solutions}
Tor Lattimore and Csaba Szepesv{\'a}ri.
\newblock Solutions to selected exercises in bandit algorithms, September
  2020{\natexlab{b}}.
\newblock URL \url{https://tor-lattimore.com/downloads/book/solutions.pdf}.

\bibitem[Leung and Barron(2006)]{leung2006information}
Gilbert Leung and Andrew~R Barron.
\newblock Information theory and mixing least-squares regressions.
\newblock \emph{IEEE Transactions on information theory}, 52\penalty0
  (8):\penalty0 3396--3410, 2006.

\bibitem[Levin et~al.(2009)Levin, Weiss, Durand, and
  Freeman]{levin2009understanding}
Anat Levin, Yair Weiss, Fredo Durand, and William~T Freeman.
\newblock Understanding and evaluating blind deconvolution algorithms.
\newblock In \emph{2009 IEEE conference on computer vision and pattern
  recognition}, pages 1964--1971. IEEE, 2009.

\bibitem[Li et~al.(2021)Li, Yang, and Narisetty]{li2021regret}
Ke~Li, Yun Yang, and Naveen~N Narisetty.
\newblock Regret lower bound and optimal algorithm for high-dimensional
  contextual linear bandit.
\newblock \emph{Electronic Journal of Statistics}, 15\penalty0 (2):\penalty0
  5652--5695, 2021.

\bibitem[Li et~al.(2010)Li, Chu, Langford, and Schapire]{li2010contextual}
Lihong Li, Wei Chu, John Langford, and Robert~E Schapire.
\newblock A contextual-bandit approach to personalized news article
  recommendation.
\newblock In \emph{Proceedings of the 19th international conference on World
  wide web}, pages 661--670, 2010.

\bibitem[Li et~al.(2023)Li, Chen, and Hong]{li2023dimension}
Wenhao Li, Ningyuan Chen, and L~Jeff Hong.
\newblock Dimension reduction in contextual online learning via nonparametric
  variable selection.
\newblock \emph{Journal of Machine Learning Research}, 24\penalty0
  (136):\penalty0 1--84, 2023.

\bibitem[Lin and Zhang(2006)]{lin2006component}
Yi~Lin and Hao~Helen Zhang.
\newblock Component selection and smoothing in multivariate nonparametric
  regression.
\newblock \emph{The Annals of Statistics}, 34\penalty0 (5):\penalty0
  2272--2297, 2006.

\bibitem[Mutny and Krause(2018)]{mutny2018efficient}
Mojmir Mutny and Andreas Krause.
\newblock Efficient high dimensional {B}ayesian optimization with additivity
  and quadrature fourier features.
\newblock In S.~Bengio, H.~Wallach, H.~Larochelle, K.~Grauman, N.~Cesa-Bianchi,
  and R.~Garnett, editors, \emph{Advances in Neural Information Processing
  Systems}, volume~31. Curran Associates, Inc., 2018.

\bibitem[Neu et~al.(2022)Neu, Olkhovskaya, Papini, and
  Schwartz]{neu2022lifting}
Gergely Neu, Julia Olkhovskaya, Matteo Papini, and Ludovic Schwartz.
\newblock Lifting the information ratio: An information-theoretic analysis of
  {T}hompson sampling for contextual bandits.
\newblock In S.~Koyejo, S.~Mohamed, A.~Agarwal, D.~Belgrave, K.~Cho, and A.~Oh,
  editors, \emph{Advances in Neural Information Processing Systems}, volume~35,
  pages 9486--9498. Curran Associates, Inc., 2022.
\newblock URL
  \url{https://proceedings.neurips.cc/paper_files/paper/2022/file/3d84d9b523e6e82916d496e58761002e-Paper-Conference.pdf}.

\bibitem[Neu et~al.(2024)Neu, Papini, and Schwartz]{neu2024optimisic}
Gergely Neu, Matteo Papini, and Ludovic Schwartz.
\newblock Optimistic information directed sampling.
\newblock In Shipra Agrawal and Aaron Roth, editors, \emph{Proceedings of
  Thirty Seventh Conference on Learning Theory}, volume 247 of
  \emph{Proceedings of Machine Learning Research}, pages 3970--4006. PMLR, 30
  Jun--03 Jul 2024.
\newblock URL \url{https://proceedings.mlr.press/v247/neu24a.html}.

\bibitem[Oh et~al.(2021)Oh, Iyengar, and Zeevi]{oh2021sparsity}
Min-hwan Oh, Garud Iyengar, and Assaf Zeevi.
\newblock Sparsity-agnostic lasso bandit.
\newblock In \emph{International Conference on Machine Learning}, pages
  8271--8280. PMLR, 2021.

\bibitem[Papini et~al.(2021)Papini, Tirinzoni, Restelli, Lazaric, and
  Pirotta]{papini2021leveraging}
Matteo Papini, Andrea Tirinzoni, Marcello Restelli, Alessandro Lazaric, and
  Matteo Pirotta.
\newblock Leveraging good representations in linear contextual bandits.
\newblock In \emph{International Conference on Machine Learning}, pages
  8371--8380. PMLR, 2021.

\bibitem[Poon et~al.(2023)Poon, Keriven, and Peyr{\'e}]{poon2023geometry}
Clarice Poon, Nicolas Keriven, and Gabriel Peyr{\'e}.
\newblock The geometry of off-the-grid compressed sensing.
\newblock \emph{Foundations of Computational Mathematics}, 23\penalty0
  (1):\penalty0 241--327, 2023.

\bibitem[Raskutti et~al.(2012)Raskutti, J~Wainwright, and
  Yu]{raskutti2012minimax}
Garvesh Raskutti, Martin J~Wainwright, and Bin Yu.
\newblock Minimax-optimal rates for sparse additive models over kernel classes
  via convex programming.
\newblock \emph{Journal of machine learning research}, 13\penalty0 (2), 2012.

\bibitem[Ravikumar et~al.(2009)Ravikumar, Lafferty, Liu, and
  Wasserman]{ravikumar2009sparse}
Pradeep Ravikumar, John Lafferty, Han Liu, and Larry Wasserman.
\newblock Sparse additive models.
\newblock \emph{Journal of the Royal Statistical Society Series B: Statistical
  Methodology}, 71\penalty0 (5):\penalty0 1009--1030, 2009.

\bibitem[Rigollet and Tsybakov(2011)]{rigollet2011ewa}
Philippe Rigollet and Alexandre Tsybakov.
\newblock {Exponential Screening and optimal rates of sparse estimation}.
\newblock \emph{The Annals of Statistics}, 39\penalty0 (2):\penalty0 731 --
  771, 2011.
\newblock \doi{10.1214/10-AOS854}.
\newblock URL \url{https://doi.org/10.1214/10-AOS854}.

\bibitem[Rolland et~al.(2018)Rolland, Scarlett, Bogunovic, and
  Cevher]{rolland2018high}
Paul Rolland, Jonathan Scarlett, Ilija Bogunovic, and Volkan Cevher.
\newblock High-dimensional {B}ayesian optimization via additive models with
  overlapping groups.
\newblock In Amos Storkey and Fernando Perez-Cruz, editors, \emph{Proceedings
  of the Twenty-First International Conference on Artificial Intelligence and
  Statistics}, volume~84 of \emph{Proceedings of Machine Learning Research},
  pages 298--307. PMLR, 09--11 Apr 2018.

\bibitem[Rosset et~al.(2007)Rosset, Swirszcz, Srebro, and Zhu]{rosset2007l1}
Saharon Rosset, Grzegorz Swirszcz, Nathan Srebro, and Ji~Zhu.
\newblock $\ell_1$ regularization in infinite dimensional feature spaces.
\newblock In \emph{Learning Theory: 20th Annual Conference on Learning Theory,
  COLT 2007, San Diego, CA, USA; June 13-15, 2007. Proceedings 20}, pages
  544--558. Springer, 2007.

\bibitem[Santin and Schaback(2016)]{santin2016approximation}
Gabriele Santin and Robert Schaback.
\newblock Approximation of eigenfunctions in kernel-based spaces.
\newblock \emph{Advances in Computational Mathematics}, 42\penalty0
  (4):\penalty0 973--993, 2016.

\bibitem[Shi et~al.(2019)Shi, Huang, Feng, and Suykens]{shi2019sparse}
Lei Shi, Xiaolin Huang, Yunlong Feng, and Johan Suykens.
\newblock Sparse kernel regression with coefficient-based lq-regularization.
\newblock \emph{Journal of Machine Learning Research}, 20, 2019.

\bibitem[Singh(2021)]{singh2021continuum}
Shashank Singh.
\newblock Continuum-armed bandits: A function space perspective.
\newblock In Arindam Banerjee and Kenji Fukumizu, editors, \emph{Proceedings of
  The 24th International Conference on Artificial Intelligence and Statistics},
  volume 130 of \emph{Proceedings of Machine Learning Research}, pages
  2620--2628. PMLR, 13--15 Apr 2021.

\bibitem[Slivkins(2014)]{slivkins2014contextual}
Aleksandrs Slivkins.
\newblock Contextual bandits with similarity information.
\newblock \emph{Journal of Machine Learning Research}, 15:\penalty0 2533--2568,
  2014.

\bibitem[Smola and Bartlett(2000)]{smola2000sparse}
Alex Smola and Peter Bartlett.
\newblock Sparse greedy {G}aussian process regression.
\newblock In T.~Leen, T.~Dietterich, and V.~Tresp, editors, \emph{Advances in
  Neural Information Processing Systems}, volume~13. MIT Press, 2000.
\newblock URL
  \url{https://proceedings.neurips.cc/paper_files/paper/2000/file/3214a6d842cc69597f9edf26df552e43-Paper.pdf}.

\bibitem[Srinivas et~al.(2010)Srinivas, Krause, Kakade, and
  Seeger]{srinivas2010gaussian}
Niranjan Srinivas, Andreas Krause, Sham Kakade, and Matthias Seeger.
\newblock Gaussian process optimization in the bandit setting: No regret and
  experimental design.
\newblock In \emph{Proc. International Conference on Machine Learning (ICML)},
  2010.

\bibitem[Tang et~al.(2013)Tang, Bhaskar, Shah, and Recht]{tang2013compressed}
Gongguo Tang, Badri~Narayan Bhaskar, Parikshit Shah, and Benjamin Recht.
\newblock Compressed sensing off the grid.
\newblock \emph{IEEE transactions on information theory}, 59\penalty0
  (11):\penalty0 7465--7490, 2013.

\bibitem[Thompson(1933)]{thompsonsampling}
William~R. Thompson.
\newblock On the likelihood that one unknown probability exceeds another in
  view of the evidence of two samples.
\newblock \emph{Biometrika}, 25\penalty0 (3/4):\penalty0 285--294, 1933.
\newblock ISSN 00063444.
\newblock URL \url{http://www.jstor.org/stable/2332286}.

\bibitem[Tibshirani(1996)]{tibshirani1996regression}
Robert Tibshirani.
\newblock Regression shrinkage and selection via the lasso.
\newblock \emph{Journal of the Royal Statistical Society Series B: Statistical
  Methodology}, 58\penalty0 (1):\penalty0 267--288, 1996.

\bibitem[Tipping(1999)]{tippingrvm}
Michael Tipping.
\newblock The relevance vector machine.
\newblock \emph{Advances in neural information processing systems}, 12, 1999.

\bibitem[Tirinzoni et~al.(2022)Tirinzoni, Papini, Touati, Lazaric, and
  Pirotta]{tirinzoni2022scalable}
Andrea Tirinzoni, Matteo Papini, Ahmed Touati, Alessandro Lazaric, and Matteo
  Pirotta.
\newblock Scalable representation learning in linear contextual bandits with
  constant regret guarantees.
\newblock \emph{Advances in Neural Information Processing Systems},
  35:\penalty0 2307--2319, 2022.

\bibitem[Vakili et~al.(2021)Vakili, Khezeli, and
  Picheny]{vakili2021information}
Sattar Vakili, Kia Khezeli, and Victor Picheny.
\newblock On information gain and regret bounds in {G}aussian process bandits.
\newblock In \emph{International Conference on Artificial Intelligence and
  Statistics}, pages 82--90. PMLR, 2021.

\bibitem[Valko et~al.(2013)Valko, Korda, Munos, Flaounas, and
  Cristianini]{valko2013finite}
Michal Valko, Nathan Korda, R\'{e}mi Munos, Ilias Flaounas, and Nelo
  Cristianini.
\newblock {Finite-Time Analysis of Kernelised Contextual Bandits}.
\newblock In \emph{Uncertainty in Artificial Intelligence}, 2013.

\bibitem[Vapnik(1997)]{vapnik1997support}
Vladimir~N Vapnik.
\newblock The support vector method.
\newblock In \emph{International conference on artificial neural networks},
  pages 261--271. Springer, 1997.

\bibitem[Wainwright(2019)]{wainwright2019high}
Martin~J Wainwright.
\newblock \emph{High-dimensional statistics: A non-asymptotic viewpoint},
  volume~48.
\newblock Cambridge university press, 2019.

\bibitem[Wang et~al.(2018)Wang, Wei, and Yao]{wang2018minimax}
Xue Wang, Mingcheng Wei, and Tao Yao.
\newblock Minimax concave penalized multi-armed bandit model with
  high-dimensional covariates.
\newblock In \emph{International Conference on Machine Learning}, pages
  5200--5208. PMLR, 2018.

\bibitem[Wang et~al.(2016)Wang, Hutter, Zoghi, Matheson, and
  De~Feitas]{wang2016bayesian}
Ziyu Wang, Frank Hutter, Masrour Zoghi, David Matheson, and Nando De~Feitas.
\newblock Bayesian optimization in a billion dimensions via random embeddings.
\newblock \emph{Journal of Artificial Intelligence Research}, 55:\penalty0
  361--387, 2016.

\bibitem[Wasserman(2006)]{wasserman2006all}
Larry Wasserman.
\newblock \emph{All of nonparametric statistics}.
\newblock Springer Science \& Business Media, 2006.

\bibitem[Williams and Seeger(2000)]{williams2000using}
Christopher Williams and Matthias Seeger.
\newblock Using the {N}ystr\"{o}m method to speed up kernel machines.
\newblock In T.~Leen, T.~Dietterich, and V.~Tresp, editors, \emph{Advances in
  Neural Information Processing Systems}, volume~13. MIT Press, 2000.

\bibitem[Wu et~al.(2020)Wu, Yang, and Shen]{wu2020stochastic}
Weiqiang Wu, Jing Yang, and Cong Shen.
\newblock Stochastic linear contextual bandits with diverse contexts.
\newblock In \emph{International Conference on Artificial Intelligence and
  Statistics}, pages 2392--2401. PMLR, 2020.

\bibitem[Zhang(2022)]{zhang2022feel}
Tong Zhang.
\newblock Feel-good {T}hompson sampling for contextual bandits and
  reinforcement learning.
\newblock \emph{SIAM Journal on Mathematics of Data Science}, 4\penalty0
  (2):\penalty0 834--857, 2022.

\bibitem[Zhang et~al.(2021)Zhang, Zhou, Li, and Gu]{zhang2021neural}
Weitong Zhang, Dongruo Zhou, Lihong Li, and Quanquan Gu.
\newblock Neural {T}hompson sampling.
\newblock In \emph{International Conference on Learning Representations}, 2021.
\newblock URL \url{https://openreview.net/forum?id=tkAtoZkcUnm}.

\bibitem[Zhang et~al.(2014)Zhang, Wainwright, and Jordan]{zhang2014lower}
Yuchen Zhang, Martin~J. Wainwright, and Michael~I. Jordan.
\newblock Lower bounds on the performance of polynomial-time algorithms for
  sparse linear regression.
\newblock In Maria~Florina Balcan, Vitaly Feldman, and Csaba Szepesvári,
  editors, \emph{Proceedings of The 27th Conference on Learning Theory},
  volume~35 of \emph{Proceedings of Machine Learning Research}, pages 921--948,
  Barcelona, Spain, 13--15 Jun 2014. PMLR.
\newblock URL \url{https://proceedings.mlr.press/v35/zhang14.html}.

\bibitem[Zhou et~al.(2020)Zhou, Li, and Gu]{zhou2020neural}
Dongruo Zhou, Lihong Li, and Quanquan Gu.
\newblock Neural contextual bandits with {UCB}-based exploration.
\newblock In Hal~Daumé III and Aarti Singh, editors, \emph{Proceedings of the
  37th International Conference on Machine Learning}, volume 119 of
  \emph{Proceedings of Machine Learning Research}, pages 11492--11502. PMLR,
  13--18 Jul 2020.
\newblock URL \url{https://proceedings.mlr.press/v119/zhou20a.html}.

\end{thebibliography}

\newpage

\appendix

\section{Related Work}
\label{sec:related_work}

\subsection{Sparse Models}

The sparsity priors that we use take inspiration from a line of work on exponentially weighted aggregates, which have previously been used to derive oracle inequalities for sparse linear regression \citep{leung2006information, dalalyan2008aggregation, dalalyanLangevin, dalalyan2012mirror, alquier2011pac} and regret bounds for online sparse linear regression \citep{gerchinovitz2011sparsity}. Several works have applied traditional approaches for sparse linear regression, such as the LASSO \citep{tibshirani1996regression}, Basis Pursuit \citep{chen2001atomic} and thresholding, to nonparametric regression settings that resemble the countable sparsity model that we consider. Examples include wavelet regression \citep{donoho1994ideal, donoho1995adapting, donoho1998minimax, donoho1995wavelet} and sparse additive models \citep{lin2006component, koltchinskii2008sparse, ravikumar2009sparse, raskutti2012minimax}. The uncountable sparsity model has been studied in supervised learning settings, under the name of convex neural networks \citep{bengio2005convex, rosset2007l1, bach2017breaking}. Several previous works have developed sparse kernel methods, such as the Support Vector Machine \citep{cortes1995support, vapnik1997support}, the Relevance Vector Machine \citep{tippingrvm} and sparse Gaussian processes \citep{smola2000sparse, williams2000using}. Many of these approaches use sparse estimators for the purpose of improved computational efficiency, though some sparse kernel methods exploit sparsity for improved sample efficiency \citep{shi2019sparse}.

\subsection{Sparse Linear Bandits}

The challenge of simultaneously learning relevant features and minimising regret in contextual bandits has been partially addressed by some previous works on sparse linear contextual bandits. Several works consider sparse linear contextual bandits, with $p$-dimensional feature vectors, which can be thought of as a finite-dimensional version of the problem that we consider. \citet{abbasi2012online} used an online-to-confidence-set conversion to develop an algorithm with a regret bound of order $\sqrt{spn}$.  When the contexts are chosen by an adversary and the number of actions $K$ is allowed to be large or infinite, there is a matching lower bound \citep{lattimore2020bandit}. Because of this negative result, it has become popular to study sparse linear contextual bandits with i.i.d.\ contexts drawn from a well-conditioned distribution. Here, well-conditioned means that, with high probability, the empirical context covariance matrix satisfies a compatibility condition \citep{buhlmann2011statistics}, or one of
a number of similar conditions studied in high-dimensional statistics. Under these conditions, various methods enjoy regret bounds that depend only logarithmically on the dimension $p$ \citep{foster2018practical, bastani2020online, wang2018minimax, kim2019doubly, oh2021sparsity, ariu2022thresholded, chakraborty2023thompson}. In non-contextual sparse linear bandits, \citet{hao2020high} and \citet{jang2022popart} showed that the existence of a policy that collects well-conditioned data enables problem-dependent regret bounds with logarithmic dependence on $p$. However, these approaches cannot easily be extended to sparse nonparametric bandits, because the compatibility condition (as well as other similar conditions) can fail to hold at all in both the countable and uncountable sparsity models.

\subsection{Contextual Bandits and Feature Selection}

Several works discovered conditions on the context distribution and the feature representation under which optimistic \citep{hao2020adaptive, wu2020stochastic} or greedy \citep{bastani2021mostly, kim2024local} contextual bandit algorithms are guaranteed to have constant or logarithmic (in $n$) regret. Subsequently, contextual bandit algorithms have been developed to identify these good feature representations while simultaneously minimising regret \citep{papini2021leveraging, tirinzoni2022scalable}. These results are complementary to our own, which hold when the contexts are selected by an adversary.

\subsection{Bandits With Low-Dimensional Structure}

Various methods have been developed to exploit other forms of low-dimensional structure in bandit and Bayesian optimisation problems. \citet{chen2012joint}, \citet{djolonga2013high} and \citet{wang2016bayesian} designed Bayesian optimisation algorithms for the setting where the reward function is a composition of a low-dimensional linear embedding and a function drawn from a Gaussian process. \citet{li2023dimension} designed a contextual bandit algorithm for a similar setting, in which the reward function is a composition of a linear embedding and a Lipschitz function. \citet{kandasamy2015high}, \citet{gardner2017discovering}, \citet{rolland2018high} and \citet{mutny2018efficient} developed Bayesian optimisation algorithms for (generalised) additive reward functions.

\subsection{Compressed Sensing Off the Grid}

The nonparametric contextual bandit problem with uncountable sparsity is related to a problem known as compressed sensing off the grid \citep{tang2013compressed}. This is also called an inverse problem in the space of measures \citep{bredies2013inverse}, or (blind) deconvolution when applied to signals \citep{levin2009understanding}. In this problem, the aim is to recover the underlying discrete measure in a sparse infinite-dimensional linear model, using as few measurements as possible. In the finite-dimensional compressed sensing problem, conditions on the measurements similar to the compatibility condition used in sparse linear bandits are sufficient to guarantee recovery via, for instance, the LASSO (see e.g.\ chapters 4-6 in \citet{foucart2013mathematical}). To guarantee recovery in the off-the-grid compressed sensing problem, several works have developed weaker and more refined conditions on the measurements \citep{duval2015exact, bodmann2018compressed, poon2023geometry}. These conditions ensure that the underlying sparse measure can be recovered using an infinite-dimensional formulation of the LASSO, known as the Beurling LASSO \citep{de2012exact}.

\section{Application to Linear, Kernelised and Neural Bandits}

\subsection{Implications for Sparse Linear Contextual Bandits}

Our results have interesting implications for particular cases of sparse linear contextual bandits. With a small modification, Theorem \ref{thm:countable_lb} also provides a lower bound of $\sqrt{Ksn}/8$ for sparse linear contextual bandits with $p$-dimensional feature vectors and $K \leq p/s$. This complements the existing lower bound of order $\sqrt{spn}$ \citep{lattimore2020bandit}, in which the difficult instance has $K \geq p$. When $K \leq p/s$, our Theorem \ref{thm:upperbounds_cs} gives a matching upper bound up to a factor of $\sqrt{\log(n)}$.

In linear contextual bandits with adversarial contexts, $p$-dimensional feature vectors and $K$ actions, SupLinRel \citep{auer2002using} and SupLinUCB \citep{chu2011contextual} both have regret bounds of order approximately $\sqrt{pn\log(K)}$. If the parameter vector is $s$-sparse (and $s$ is known), then as discussed in Section \ref{sec:related_work}, there is an algorithm that achieves an upper bound of $\sqrt{spn}$, and a matching lower bound. This suggests that sparsity is helpful when the number of actions is large, relative to the sparsity, but not particularly helpful when $p$ is large. When applied to this setting, where $\deff = p$, Theorem \ref{thm:upperbounds_cs} would give a regret bound of order $\sqrt{Ks\log(p)n}$. Roughly speaking, this is an improvement whenever the number of actions is less than the ratio of the total number of features and the number of useful features, so when $K \leq p/s$.

\subsection{When Does Sparsity Help In Kernelised and Neural Bandits?}
\label{sec:kbs}


In this section, we use our upper and lower bounds from sections \ref{sec:lbs} and \ref{sec:ubs} to identify regimes in which sparsity is helpful for regret minimisation in 
kernelised and neural contextual bandits. For simplicity, we ignore polylogarithmic factors of $n$ in this section. Our findings are summarised in Table \ref{tab:matern}, for kernelised bandits with the Mat\'{e}rn kernel, Table \ref{tab:rbf}, for kernelised bandits with the RBF kernel, and Table \ref{tab:neural}, for neural bandits.

\begin{table}[H]
\centering
\begin{tabular}{lcc}
\label{tab:matern}
Upper Bound & Regret & Sparsity Assumption\\
\hline
\citet{valko2013finite} & $\mathcal{O}\big(n^{\frac{\nu + p}{2\nu + p}}\sqrt{\log(K)}\big)$ & none\\
This paper (Theorem \ref{thm:upperbounds_cs}) & $\mathcal{O}(\sqrt{Ksn\log(n)})$ & countable\\
This paper (Theorem \ref{thm:upperbounds_us}) & $\mathcal{O}(\sqrt{Kspn})$ & uncountable\\
\-\ & \-\ & \-\ \\
Lower Bound & \-\ & \-\ \\
\hline
This paper (Theorem \ref{thm:countable_lb}) & $\Omega(\sqrt{Ksn})$ & countable\\
This paper (Theorem \ref{thm:uncountable_lb}) & $\Omega(\sqrt{Kspn})$ & uncountable\\
\end{tabular}
\vspace{1mm}
\caption{Regret upper and lower bounds for kernelised contextual bandits with the Mat\'{e}rn kernel.}
\end{table}

In kernelised contextual bandits with adversarial contexts (of dimension $p$) and $K$ actions, SupKernelUCB \citep{valko2013finite} has a regret bound of order $\sqrt{\deff n\log(K)}$, where $\deff$ depends on the choice of the kernel. If the Mat\'{e}rn kernel with smoothness parameter $\nu > 0$ is used, then the polynomial decay condition is satisfied with $\beta = (2\nu + p)/p$ \citep{santin2016approximation}, and so $\deff \leq n^{p/(2\nu + p)}$. If the RBF kernel is used, then the exponential decay condition is satisfied with $\beta = 1/p$ \citep{belkin2018approximation}, and so $\deff \leq \log^p(n)$. In the countable sparsity model, Theorem \ref{thm:upperbounds_cs} gives a regret bound of order $\sqrt{Ksn\log(\deff)}$. For the Mat\'{e}rn kernel, this translates to a bound of order $\sqrt{(p/(2\nu + p))Ksn\log(n)}$, which can be compared to the bound of $n^{\frac{\nu + p}{2\nu + p}}\sqrt{\log(K)}$ for SupKernelUCB. For the RBF kernel, the bound from Theorem \ref{thm:upperbounds_cs} becomes $\sqrt{Kspn\log\log(n)}$, whereas the bound for SupKernelUCB is $\sqrt{n\log^{p}(n)\log(K)}$. With both of these kernels, we observe that when $n$ is sufficiently large, approximately $n \geq (sK)^{(2\nu + p)/p}$ for the Mat\'{e}rn kernel and $n \geq \exp((Ksp)^{1/p})$ for the RBF kernel, sparsity enables better regret bounds. In the uncountable sparsity model, the dimension of $\theta$ is $d = p$. Theorem \ref{thm:upperbounds_us} gives a regret bound of order $\sqrt{Kspn}$ with both the Mat\'{e}rn and RBF kernels. Again, we observe that for sufficiently large $n$, approximately $n \geq (Ksp)^{(2\nu + p)/p}$ for the Mat\'{e}rn kernel and $n \geq \exp((Ksp)^{1/p})$ for the RBF kernel, sparsity enables better regret bounds.

\begin{table}
\label{tab:rbf}
\centering
\begin{tabular}{lcc}
Upper Bound & Regret & Sparsity Assumption\\
\hline
\citet{valko2013finite} & $\mathcal{O}\big(\sqrt{n\log^{p}(n)\log(K)}\big)$ & none\\
This paper (Theorem \ref{thm:upperbounds_cs}) & $\mathcal{O}(\sqrt{Kspn\log\log(n)})$ & countable\\
This paper (Theorem \ref{thm:upperbounds_us}) & $\mathcal{O}(\sqrt{Kspn})$ & uncountable\\
\-\ & \-\ & \-\ \\
Lower Bound & \-\ & \-\ \\
\hline
This paper (Theorem \ref{thm:countable_lb}) & $\Omega(\sqrt{Kspn})$ & countable\\
This paper (Theorem \ref{thm:uncountable_lb}) & $\Omega(\sqrt{Kspn})$ & uncountable\\
\end{tabular}
\vspace{1mm}
\caption{Regret upper and lower bounds for kernelised contextual bandits with the RBF kernel.}
\end{table}

In neural contextual bandits, it is often assumed that the reward function lies in the RKHS associated with an infinite-width neural tangent kernel (NTK). In this setting, with adversarial contexts and $K$ actions, the SupNN-UCB algorithm \citep{kassraie2022neural} has a regret bound of order approximately $\sqrt{\deff n \log(K)}$, where $\deff$ is the effective dimension of the NTK. If the contexts are $p$-dimensional vectors on the unit sphere, then for the NTK of a single-layer ReLU neural network, $\deff \leq n^{1 - 1/p}$ (cf.\ Theorem 3.1 in \citet{kassraie2022neural}). In this setting, the regret bound of SupNN-UCB is of order $n^{1 - 1/(2p)}\sqrt{\log(K)}$. In the neural bandits example in Section \ref{sec:examples}, the reward function is assumed to be a single-layer neural network of possibly unknown width. If we use a ReLU activation function, then Theorem \ref{thm:upperbounds_us} gives a regret bound of order $\sqrt{Kspn}$. For sufficiently large $n$, approximately $n \geq (Ksp)^{1-1/p}$, sparsity enables a better regret bound.

\begin{table}[H]
\label{tab:neural}
\centering
\begin{tabular}{lcc}
Upper Bound & Regret & Sparsity Assumption\\
\hline
\citet{kassraie2022neural} & $\mathcal{O}\big(n^{1 - \frac{1}{2p}}\sqrt{\log(K)}\big)$ & none\\
This paper (Theorem \ref{thm:upperbounds_us}) & $\mathcal{O}(\sqrt{Kspn})$ & uncountable\\
\-\ & \-\ & \-\ \\
Lower Bound & \-\ & \-\ \\
\hline
This paper (Theorem \ref{thm:uncountable_lb}) & $\Omega(\sqrt{Kspn})$ & uncountable\\
\end{tabular}
\vspace{1mm}
\caption{Regret upper and lower bounds for neural contextual bandits.}
\end{table}

\section{Proofs of the Lower Bounds}
\label{app:lower_bound_proof}

The proofs of Theorem \ref{thm:countable_lb} and Theorem \ref{thm:uncountable_lb} both work by first reducing the sparse nonparametric contextual bandit problem to a sequence of $K$-armed bandits, and then lower bounding the total regret in the sequence of $K$-armed bandit problems. Each type of sparsity requires a different reduction. However, both proofs use the same auxiliary lemmas to lower bound the regret in the sequence of $K$-armed bandit problems. In Section \ref{sec:lb_aux}, we state and prove these auxiliary lemmas. In the subsequent subsections, we use these lemmas to prove Theorem \ref{thm:countable_lb} and then Theorem \ref{thm:uncountable_lb}.

\subsection{Auxiliary Lemmas}
\label{sec:lb_aux}
The first auxiliary lemma is a lower bound on the regret for $K$-armed bandit problems in which one action has mean reward $\Delta$ and the remaining $K-1$ actions have mean reward 0. The rewards of all actions are subject to standard Gaussian noise. For some $b \in [K]$, let $R_m(b)$ denote the expected regret over $m$ steps in a $K$-armed bandit with optimal action $b$. Therefore,
\begin{equation*}
R_m(b) = \mathbb{E}\left[\sum_{t=1}^{m}\Delta\cdot\mathbb{I}\{A_t \neq b\}\right].
\end{equation*}

Exercise 24.1 in \citet{lattimore2020bandit} asks the reader to prove the following lower bound on the averaged (over $b$) expected regret $\frac{1}{K}\sum_{b \in [K]}R_m(b)$.

\begin{lemma}[Exercise 24.1 in \citet{lattimore2020bandit}]
Let $K \geq 2$ and set $\Delta = \sqrt{K/m}/2$. For any policy,
\begin{equation*}
\frac{1}{K}\sum_{b \in [K]}R_m(b) \geq \frac{1}{8}\sqrt{Km}.
\end{equation*}
\label{lem:K_armed_lb}
\end{lemma}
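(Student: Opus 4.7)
The plan is to execute the standard change-of-measure lower bound for $K$-armed Gaussian bandits. Letting $T_i(m) = \sum_{t=1}^{m}\mathbb{I}\{A_t = i\}$ denote the number of times arm $i$ is played, I would first rewrite the per-instance regret as $R_m(b) = \Delta(m - \mathbb{E}_b[T_b(m)])$, so that the quantity of interest becomes $\Delta\bigl(m - \tfrac{1}{K}\sum_{b}\mathbb{E}_b[T_b(m)]\bigr)$ and the task reduces to showing that the average $\tfrac{1}{K}\sum_{b}\mathbb{E}_b[T_b(m)]$ cannot be much larger than $m/K$.

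Next I would introduce a reference environment $\mathbb{P}_0$ in which every arm pays out $\mathcal{N}(0,1)$ and compare each $\mathbb{E}_b$ to $\mathbb{E}_0$ via Pinsker's inequality applied to the induced distributions over the full $m$-round history. Since $T_b(m) \in [0,m]$, this yields
\begin{equation*}
\mathbb{E}_b[T_b(m)] - \mathbb{E}_0[T_b(m)] \leq m\sqrt{\mathrm{KL}(\mathbb{P}_0, \mathbb{P}_b)/2}.
\end{equation*}
The divergence decomposition for bandits then gives $\mathrm{KL}(\mathbb{P}_0, \mathbb{P}_b) = \mathbb{E}_0[T_b(m)]\cdot \mathrm{KL}(\mathcal{N}(0,1), \mathcal{N}(\Delta,1)) = \tfrac{\Delta^2}{2}\mathbb{E}_0[T_b(m)]$, so $\mathbb{E}_b[T_b(m)] \leq \mathbb{E}_0[T_b(m)] + \tfrac{m\Delta}{2}\sqrt{\mathbb{E}_0[T_b(m)]}$.

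The next step is to sum over $b \in [K]$, use the identity $\sum_{b}\mathbb{E}_0[T_b(m)] = m$, and apply Cauchy-Schwarz to the square-root term to obtain $\sum_{b}\sqrt{\mathbb{E}_0[T_b(m)]} \leq \sqrt{Km}$. This produces $\sum_{b}\mathbb{E}_b[T_b(m)] \leq m + \tfrac{m\Delta\sqrt{Km}}{2}$. Substituting $\Delta = \tfrac{1}{2}\sqrt{K/m}$, the perturbation term $\tfrac{m\Delta\sqrt{Km}}{2K}$ collapses to $m/4$, and a short calculation gives $\tfrac{1}{K}\sum_{b}R_m(b) \geq \tfrac{(3K-4)\sqrt{Km}}{8K}$. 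Since $3K - 4 \geq K$ for $K \geq 2$, this delivers the claimed $\tfrac{1}{8}\sqrt{Km}$ bound.

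All of the ingredients (the divergence decomposition for bandits, Pinsker's inequality, and Cauchy-Schwarz) are textbook, so I do not anticipate a technical obstacle. The only part that requires real care is tracking the constants precisely so that the final inequality is tight enough for the boundary case $K = 2$, where $3K - 4 = K$ and so the $\tfrac{1}{8}$ factor is attained with equality.
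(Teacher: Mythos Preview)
Your proposal is correct and follows exactly the standard change-of-measure argument that the paper defers to (the paper gives no proof itself, merely citing the solutions manual for \citet{lattimore2020bandit}, which uses this null-environment plus Pinsker plus Cauchy--Schwarz route). The constants are tracked correctly, and your observation that $3K-4 \geq K$ for $K \geq 2$ is precisely what closes the argument at the boundary.
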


A proof can be found in \citet{lattimore2020solutions}. To use this result, we need to reduce the sparse nonparametric contextual bandit problem to a sequence of $K$-armed bandit problems. In the proofs of each lower bound, we always factorise the horizon as $n = m_1m_2$ for some $m_1,m_2\in \mathbb{N}$ and fix the sequence of contexts $x_1, \dots, x_n$ to be $z_1, \dots, z_1, z_2, \dots, z_2, \dots, z_{m_1}, \dots, z_{m_1}$, where each $z_i$ is repeated $m_2$ times. For each lower bound, we choose a set of parameters $N$ of size $|N| = K^{m_1}$ and a feature map $\phi$, such that for every $i \in [m_1]$, $a \in [K]$ and $\nu \in N$, we can express the expected reward as
\begin{equation}
f_{\nu}(z_i, a) = \tfrac{\Delta}{s}\mathbb{I}\{a = b_i(\nu)\}\,.\label{eqn:seq_reward}
\end{equation}

The function $b: N \to [K]^{m_1}$ is any bijection between $N$ and the set of sequences of actions of length $m_1$. This reward function mimics a sequence of $m_1$ $K$-armed bandit problems, in which the index of the optimal action switches every $m_2$ rounds. For each context $z_i$ and each $\nu \in N$, there is a single good action $b_i(\nu)$ with expected reward $\Delta/s$ and $K-1$ bad actions with expected reward $0$. Since each $\nu \in N$ corresponds to a unique sequence $(b_1, \dots, b_{m_1}) \in [K]^{m_1}$ of optimal actions (and vice versa), we can equivalently parameterise the reward function by $b_{1:m_1} := (b_1, \dots, b_{m_1}) \in [K]^{m_1}$. In particular, for each $b_{1:m_1} \in [K]^{m_1}$, we can write the expected regret $R_n(b_{1:m_1})$ as
\begin{equation*}
R_n(b_{1:m_1}) := \mathbb{E}\left[\sum_{t=1}^{n}\sum_{i=1}^{m_1}\tfrac{\Delta}{s}\mathbb{I}\{x_t = z_i\}\mathbb{I}\{A_t \neq b_i\}\right]\,.
\end{equation*}

The following lemma shows that, for any $b_{1:m_1} \in [K]^{m_1}$, $R_n(b_{1:m_1})$ can be re-written as the total expected regret suffered in a sequence of $K$-armed bandit problems.\medskip

\begin{lemma}[Regret decomposition]
For any $b_{1:m_1} \in [K]^{m_1}$,
\begin{equation*}
R_n(b_{1:m_1}) = \sum_{i=1}^{m_1}R_{m_2, i}(b_{1:i})\,,
\end{equation*}
where
\begin{equation*}
R_{m_2, i}(b_{1:i}) = \mathbb{E}\left[\sum_{t=m_2(i-1)+1}^{m_2i}\tfrac{\Delta}{s}\mathbb{I}\{A_t \neq b_i\}\right].
\end{equation*}
\label{lem:regret_decomp}
\end{lemma}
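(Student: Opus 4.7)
The plan is to prove Lemma \ref{lem:regret_decomp} by a straightforward swap of the order of summation, exploiting the specific block structure of the fixed context sequence. First I would start from the definition
\begin{equation*}
R_n(b_{1:m_1}) = \mathbb{E}\left[\sum_{t=1}^{n}\sum_{i=1}^{m_1}\tfrac{\Delta}{s}\mathbb{I}\{x_t = z_i\}\mathbb{I}\{A_t \neq b_i\}\right]
\end{equation*}
and apply linearity of expectation together with Fubini to exchange the outer double sum, obtaining
\begin{equation*}
R_n(b_{1:m_1}) = \sum_{i=1}^{m_1}\mathbb{E}\left[\sum_{t=1}^{n}\tfrac{\Delta}{s}\mathbb{I}\{x_t = z_i\}\mathbb{I}\{A_t \neq b_i\}\right].
\end{equation*}

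Next I would use the fact that the context sequence is fixed to be $z_1, \dots, z_1, z_2, \dots, z_2, \dots, z_{m_1}, \dots, z_{m_1}$, with each $z_i$ repeated exactly $m_2$ times in consecutive positions. This means the indicator $\mathbb{I}\{x_t = z_i\}$ equals $1$ if and only if $t \in \{m_2(i-1)+1, \dots, m_2 i\}$, and equals $0$ otherwise. Substituting this characterisation into the inner sum collapses it onto the $i$-th block of rounds:
\begin{equation*}
\sum_{t=1}^{n}\mathbb{I}\{x_t = z_i\}\mathbb{I}\{A_t \neq b_i\} = \sum_{t=m_2(i-1)+1}^{m_2 i}\mathbb{I}\{A_t \neq b_i\}.
\end{equation*}

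Plugging this back in yields $R_n(b_{1:m_1}) = \sum_{i=1}^{m_1} R_{m_2, i}(b_{1:i})$, as claimed. Note that each $R_{m_2,i}(b_{1:i})$ depends only on $b_1, \dots, b_i$ (and not on $b_{i+1}, \dots, b_{m_1}$) because, by the adaptive nature of the policy, the distribution of the actions $A_{m_2(i-1)+1}, \dots, A_{m_2 i}$ is determined by the history up to time $m_2 i$, which in turn is shaped only by the rewards observed under the optimal-action sequence $b_{1:i}$ in the first $i$ blocks. There is essentially no technical obstacle here: the result is pure bookkeeping, and the only thing worth emphasising is the role played by the fixed block-constant context sequence, which is what makes the reduction to a sequence of $K$-armed bandits possible.
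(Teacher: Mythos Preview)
Your proof is correct and follows essentially the same approach as the paper: swap the sum and expectation, then use the block structure of the fixed context sequence to collapse the indicator $\mathbb{I}\{x_t = z_i\}$ onto the $i$-th block of rounds. Your additional remark about why $R_{m_2,i}$ depends only on $b_{1:i}$ is not part of the paper's proof but appears in the surrounding discussion, so it is a welcome clarification.
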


\begin{proof}
The proof only requires us to swap a sum and an expectation. In particular,
\begin{align*}
R_n(b_{1:m_1}) &= \mathbb{E}\left[\sum_{t=1}^{n}\sum_{i=1}^{m_1}\tfrac{\Delta}{s}\mathbb{I}\{x_t = z_i\}\mathbb{I}\{A_t \neq b_i\}\right]\\
&= \sum_{i=1}^{m_1}\mathbb{E}\left[\sum_{t=1}^{n}\tfrac{\Delta}{s}\mathbb{I}\{x_t = z_i\}\mathbb{I}\{A_t \neq b_i\}\right]\\
&= \sum_{i=1}^{m_1}\mathbb{E}\left[\sum_{t=m_2(i-1)+1}^{m_2i}\tfrac{\Delta}{s}\mathbb{I}\{A_t \neq b_i\}\right] = \sum_{i=1}^{m_1}R_{m_2, i}(b_{1:i}).
\end{align*}
\end{proof}

We notice that $R_{m_2,i}(b_{1:i})$ is the regret suffered in $m_2$ steps of a $K$-armed bandit problem, in which the optimal action is $b_i$ and all other actions result in an expected regret of $\Delta/s$. We write $R_{m_2, i}(b_{1:i})$ (as opposed to $R_{m_2, i}(b_{1:m_1})$) because the expected regret for the $i$\textsuperscript{th} sub-problem is independent of $b_{i+1:m_1}$. $R_{m_2,i}(b_{1:i})$ can depend on the first $i-1$ elements $b_{1:i-1}$ of the sequence $b_{1:m_1}$, since the rewards obtained in the first $(i-1)m_2$ rounds can influence the policy played in rounds $t = (i-1)m_2 + 1$ to $t = im_2$. However, the lower bound in Lemma \ref{lem:K_armed_lb} applies to any policy, and hence any $b_{1:i-1}$. The final auxiliary lemma combines the previous two, and provides a lower bound on the averaged regret $\frac{1}{K^{m_1}}\sum_{b_{1:m_1} \in [K]^{m_1}}R_n(b_{1:m_1})$.\medskip

\begin{lemma}[Lower bound for sequences of $K$-armed bandits]
Let $K \geq 2$ and set $\Delta = s\sqrt{K}/\sqrt{4m_2}$. For any $b_{1:m_1} \in [K]^{m_1}$ and any policy,
\begin{equation*}
\frac{1}{K^{m_1}}\sum_{b_{1:m_1} \in [K]^{m_1}}R_n(b_{1:m_1}) \geq \frac{1}{8}m_1\sqrt{Km_2}\,.
\end{equation*}\label{lem:seq_lb}
\end{lemma}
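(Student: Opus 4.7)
The plan is to combine the regret decomposition in Lemma \ref{lem:regret_decomp} with the single $K$-armed lower bound in Lemma \ref{lem:K_armed_lb}, while being careful about how the averaging over $b_{1:m_1}$ interacts with the dependence of each sub-problem on earlier history. First I would apply Lemma \ref{lem:regret_decomp}, swap the outer uniform average with the finite sum over $i$, and use the fact that $R_{m_2,i}(b_{1:i})$ depends only on the first $i$ coordinates of $b_{1:m_1}$ to collapse the marginal sum over $b_{i+1:m_1}$. Concretely,
\begin{equation*}
\frac{1}{K^{m_1}}\sum_{b_{1:m_1} \in [K]^{m_1}}R_n(b_{1:m_1}) = \sum_{i=1}^{m_1} \frac{1}{K^{i-1}}\sum_{b_{1:i-1} \in [K]^{i-1}}\left[\frac{1}{K}\sum_{b_i \in [K]} R_{m_2,i}(b_{1:i})\right].
\end{equation*}

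Next, I would verify that the inner bracket is precisely the averaged regret of a $K$-armed Gaussian bandit to which Lemma \ref{lem:K_armed_lb} applies. For any fixed prefix $b_{1:i-1}$, the $m_2$ rounds indexed by $t \in \{m_2(i-1)+1,\dots,m_2 i\}$ constitute a $K$-armed bandit with standard Gaussian noise, a single optimal arm $b_i$ with mean $\Delta/s$, and all other arms having mean $0$. The learner's behaviour in these rounds is an arbitrary (history-dependent) policy, which is exactly the setting of Lemma \ref{lem:K_armed_lb}. Plugging in the gap $\Delta/s = \sqrt{K/m_2}/2$ gives the required form $\Delta/s = \sqrt{K/m}/2$ with $m = m_2$, so Lemma \ref{lem:K_armed_lb} yields
\begin{equation*}
\frac{1}{K}\sum_{b_i \in [K]} R_{m_2,i}(b_{1:i}) \geq \frac{1}{8}\sqrt{Km_2}.
\end{equation*}

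Finally, I would substitute this bound back in, observing that the outer average over $b_{1:i-1}$ of the constant $\frac{1}{8}\sqrt{Km_2}$ is just $\frac{1}{8}\sqrt{Km_2}$, and then sum over $i = 1, \dots, m_1$ to obtain the claimed bound $\frac{1}{8}m_1\sqrt{Km_2}$. The only subtle point to flag, rather than a serious obstacle, is that Lemma \ref{lem:K_armed_lb} must be applied to the sub-policy induced by conditioning on the earlier history; since that lemma is stated for any policy and the noise in each block is independent standard Gaussian given the past, this conditioning step is harmless.
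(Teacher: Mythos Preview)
Your proposal is correct and follows essentially the same approach as the paper's own proof: apply the regret decomposition of Lemma \ref{lem:regret_decomp}, exchange the sum over $i$ with the uniform average over $b_{1:m_1}$, collapse the average over the future coordinates $b_{i+1:m_1}$ since $R_{m_2,i}(b_{1:i})$ is independent of them, and then apply Lemma \ref{lem:K_armed_lb} to the inner average over $b_i$ for each fixed prefix $b_{1:i-1}$. Your remark about the sub-policy induced by conditioning on earlier history is exactly the justification the paper leans on when it says the lower bound applies to \emph{any} policy.
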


\begin{proof}
Choosing $\Delta = s\sqrt{K}/\sqrt{Km_2}$ ensures that $\Delta/s = \sqrt{K}/\sqrt{4m_2}$, which is required for the lower bound in Lemma \ref{lem:K_armed_lb}. Using Lemma \ref{lem:K_armed_lb}, and the fact that $R_{m_2, i}(b_{1:i})$ is independent of $b_{i+1:m_1}$, we obtain,
\begin{align*}
\frac{1}{K^{m_1}}\sum_{b_{1:m_1} \in [K]^{m_1}}R_n(b_{1:m_1}) &= \sum_{i=1}^{m_1}\frac{1}{K^{m_1}}\sum_{b_{1:m_1} \in [K]^{m_1}}R_{m_2,i}(b_{1:m_1})\\
&= \sum_{i=1}^{m_1}\frac{1}{K^{i-1}}\sum_{b_{1:i-1} \in [K]^{i-1}}\frac{1}{K}\sum_{b_{i} \in [K]}\frac{1}{K^{m_1-i}}\sum_{b_{i+1:m_1} \in [K]^{m_1-i}}R_{m_2,i}(b_{1:i})\\
&= \sum_{i=1}^{m_1}\frac{1}{K^{i-1}}\sum_{b_{1:i-1} \in [K]^{i-1}}\frac{1}{K}\sum_{b_{i} \in [K]}R_{m_2,i}(b_{1:i})\\
&\geq \frac{1}{8}\sum_{i=1}^{m_1}\frac{1}{K^{i-1}}\sum_{b_{1:i-1} \in [K]^{i-1}}\sqrt{Km_2}\\
&= \frac{1}{8}m_1\sqrt{Km_2}\,.
\end{align*}
\end{proof}

\subsection{Proof of Theorem \ref{thm:countable_lb}}
\label{sec:count_lb_proof}

For the convenience of the reader, we repeat the statement of Theorem \ref{thm:countable_lb} here.

\textit{Statement of Theorem \ref{thm:countable_lb}.} Consider the sparse nonparametric contextual bandit problem with countable sparsity described in Section \ref{sec:setting}. Let $\mathcal{A} = [K]$ for some $K \geq 2$ and assume that the noise variables are standard Gaussian. Suppose that for some $\beta > 1$ and some integer $m \geq s^{\beta + 2}K^{\beta + 1}$, $n = sm$. Then for any policy, there exists a sequence of contexts $x_1, \dots, x_n$, a parameter sequence $w = (w_1, w_2, \dots)$ with $\|w\|_0 = s$, $\|w\|_1 = 1$ and a sequence of functions $(\phi_i)_{i=1}^{\infty}$ with $\|\phi_i\|_{\infty} \leq i^{-\beta/2}$, such that
\begin{equation*}
R_n(f_{w}) \geq \frac{1}{8}\sqrt{Ksn}.
\end{equation*}

Instead, suppose that for some $\beta > 0$ and some integer $m \geq \lceil 1/\beta\rceil s^2K\exp(s^{\beta}K^{\beta\lceil 1/\beta\rceil})$, $n = sm$. Then for any policy, there exists a sequence of contexts $x_1, \dots, x_n$, a parameter sequence $w = (w_1, w_2, \dots)$ with $\|w\|_0 = s$, $\|w\|_1 = 1$ and a sequence of functions $(\phi_i)_{i=1}^{\infty}$ with $\|\phi_i\|_{\infty} \leq \exp(-i^{\beta}/2)$, such that
\begin{equation*}
R_n(f_{w}) \geq \frac{1}{8}\sqrt{\max(1, 1/\beta)Ksn}.
\end{equation*}

\begin{proof}
First, we prove the lower bound for the scenario with polynomial decay, in which $\|\phi_i\|_{\infty} \leq i^{-\beta/2}$ for some $\beta > 1$. The sequence of contexts $x_1, \dots, x_n$ is fixed in advance, and selected from the set $\{z_1, \dots, z_{s}\}$, where each $z_i$ is of the form $z_i = (z_{i,a})_{a \in [K]}$. For each $i \in [s]$ and $a \in [K]$, $z_{i,a}$ can be arbitrary, as long as each $z_{i,a}$ is distinct. We choose the sequence of contexts $x_1, \dots, x_n$ to be
\begin{equation*}
z_1, \dots, z_1, z_2, \dots, z_2, \dots, z_{s}, \dots, z_{s},
\end{equation*}

where each $z_i$ is repeated $m$ times. Next, we define the sequence of functions $(\phi_j)_{j=1}^{\infty}$. In fact, we will consider a finite sequence of functions $(\phi_j)_{j=1}^{sK}$. We could extend this to an infinite sequence by choosing $\phi_j \equiv 0$ for $j > sK$ and nothing would change. We define $e_1, \dots, e_{sK}$ to be the standard basis vectors of $\mathbb{R}^{sK}$. For each $i \in [s]$ and $a \in [K]$, and some $\Delta \in (0, 1]$ to be chosen later, we define
\begin{equation*}
\phi(z_{i,a}) = (\phi_1(z_{i,a}), \dots, \phi_{sK}(z_{i,a})) = \Delta \cdot e_{(i-1)K + a}\,.
\end{equation*}

We could also write this as $\phi_j(z_{i,a}) = \Delta \cdot \mathbb{I}\{j = (i-1)K + a\}$. Note that if $\Delta \leq (sK)^{-\beta/2}$, then $\|\phi_j\|_{\infty} \leq j^{-\beta/2}$ is satisfied for all $j \in [sK]$. Next, we define a set of $s$-sparse parameter sequences. In fact, we will consider $sK$-dimensional parameter vectors, but if we wished to use an infinite sequence, we could just append zeros to the end. We let $u_1, \dots, u_{K}$ denote the standard basis vectors of $\mathbb{R}^{K}$, and we define the set $\mathcal{W} = \{(1/s)u_i: i \in [K]\} \subset \mathbb{R}^{K}$. We consider parameter vectors in the set $\mathcal{W}^s \subset \mathbb{R}^{sK}$, meaning each $w$ is of the form
\begin{equation*}
w = [(1/s)u_{i_1}^{\top}, \dots, (1/s)u_{i_s}^{\top}]^{\top}\,,
\end{equation*}

for some collection of indices $(i_1, \dots, i_{s}) \in [K]^s$. We define the bijection $b: \mathcal{W}^s \to [K]^s$ to be the function that maps each $w$ to its corresponding sequence of indices $(i_1, \dots, i_s)$. Thus for any $i \in [s]$, $b_i(w)$ is the position of the non-zero element within the $i$\textsuperscript{th} block of $w$. We notice that each $w \in \mathcal{W}^s$ satisfies $\|w\|_0 = s$ and $\|w\|_1 = 1$. Moreover, for any $i \in [s]$, $a \in [K]$ and $w \in \mathcal{W}^s$ we can write down the expected reward as
\begin{equation*}
f_{w}(z_i, a) = \sum_{j=1}^{sK}w_j\phi_j(z_{i,a}) = \tfrac{\Delta}{s}\mathbb{I}\{a = b_i(w)\}\,.
\end{equation*}

This reward function (and this sequence of contexts) mimics a sequence of $K$-armed bandit problems, in which the index of the optimal action switches every $m$ rounds. In particular, it is of the same form as the reward function in (\ref{eqn:seq_reward}). Since each $w \in \mathcal{W}^s$ corresponds to a unique sequence $(b_1, \dots, b_{s}) \in [K]^{s}$ of optimal actions (and vice versa), we can equivalently parameterise the reward function by the sequence $b_{1:s} := (b_1, \dots, b_s) \in [K]^s$. Now, using Lemma \ref{lem:regret_decomp}, we can re-write the regret $R_n(b_{1:s})$ as
\begin{equation*}
R_n(b_{1:s}) = \sum_{i=1}^{s}R_{m, i}(b_{1:i})\,,
\end{equation*}

where $R_{m, i}(b_{1:i}) = \mathbb{E}[\sum_{t=m(i-1)+1}^{mi}\frac{\Delta}{s}\mathbb{I}\{A_t \neq b_i\}]$ is the expected regret for the $i$\textsuperscript{th} sub-problem. Using this regret decomposition and Lemma \ref{lem:seq_lb} (and the fact that $n = sm$), we can lower bound the averaged expected regret as
\begin{equation*}
\frac{1}{K^s}\sum_{b_{1:s} \in [K]^{s}}R_n(b_{1:s}) \geq \frac{1}{8}s\sqrt{Km} = \frac{1}{8}\sqrt{Ksn}\,.
\end{equation*}

Lemma \ref{lem:seq_lb} requires us to set $\Delta$ such that $\Delta/s = \sqrt{K/m}/2$. The only thing left to do is to find the values of $m$ such that $\Delta = s\sqrt{\frac{K}{4m}} \leq (sK)^{-\beta/2}$, which ensures that $\phi_1, \dots, \phi_{sK}$ satisfies the polynomial decay condition. This constraint for $m$ can be rearranged into
\begin{equation*}
m \geq \tfrac{1}{4}s^{\beta + 2}K^{\beta + 1}\,.
\end{equation*}

Thus we can choose any $m$ satisfying $m \geq s^{\beta + 2}K^{\beta + 1}$. Next, we prove the lower bound for the scenario with exponential decay, in which $\|\phi_i\|_{\infty} \leq \exp(-i^{\beta}/2)$, for some $\beta > 0$. First, we consider the case where $\beta \geq 1$. In this case, $\max(1, 1/\beta) = 1$, so we need to lower bound the expected regret by $\sqrt{Ksn}/8$. To do so, we can mostly repeat the proof for polynomial decay. We set $(\phi_j)_{j=1}^{\infty}$ to be the same sequence of functions, which satisfies $\|\phi_j\|_{\infty} \leq \Delta$ for $j \leq sK$ and $\|\phi_j\|_{\infty} = 0$ for $j \geq sK$. If $\Delta \leq \exp(-s^{\beta}K^{\beta}/2)$, then $(\phi_j)_{j=1}^{\infty}$ satisfies the exponential decay condition. We can factorise $n$ as $n = sm$ and repeat the rest of the proof to obtain the desired lower bound of $\sqrt{Ksn}/8$. In order to use Lemma \ref{lem:seq_lb}, we again require $\Delta = s\sqrt{K}/\sqrt{4m}$. This means that the exponential decay condition is satisfied when $m$ satisfies the inequality $s\sqrt{K}/\sqrt{4m} \leq \exp(-s^{\beta}K^{\beta}/2)$. This is equivalent to
\begin{equation*}
m \geq \tfrac{1}{4}s^2K\exp(s^{\beta}K^{\beta})\,.
\end{equation*}

Thus we can choose any $m \geq s^2K\exp(s^{\beta}K^{\beta})$. Due to the condition on $n$ in the statement of the theorem, when $\beta \geq 1$, we can indeed factorise $n$ as $n = sm$, for some integer $m \geq s^2K\exp(s^{\beta}K^{\beta})$.

For the case where $\beta \in (0,1)$, we require a more elaborate reduction to sequences of $K$-armed bandit problems. The proof for this case generalises the previous one in the sense that, for some positive integer $l \neq 1$, we reduce the problem to a sequence of $sl$ $K$-armed bandit problems, each with horizon $m$. We factorise the horizon as $n = s\lceil 1/\beta\rceil m$. The sequence of contexts $x_1, \dots, x_n$ is fixed in advance, and selected from the set $\{z_1, \dots, z_{s\lceil 1/\beta\rceil}\}$. For each $i \in [s\lceil 1/\beta\rceil]$ and $a \in [K]$, $z_{i,a}$ can be arbitrary, as long as each $z_{i,a}$ is distinct. We choose the sequence of contexts $x_1, \dots, x_n$ to be
\begin{equation*}
z_1, \dots, z_1, z_2, \dots, z_2, \dots, z_{s\lceil 1/\beta\rceil}, \dots, z_{s\lceil 1/\beta\rceil},
\end{equation*}

where each $z_i$ is repeated $m$ times. Next, we define the sequence of functions $(\phi_j)_{j=1}^{\infty}$. For each $i \in [s\lceil 1/\beta\rceil]$, we define the bijection $\rho:[s\lceil 1/\beta\rceil] \to [s]\times[\lceil 1/\beta\rceil]$ by
\begin{equation*}
\rho(i) = (\rho_1(i), \rho_2(i)) = (\lceil \tfrac{i}{\lceil 1/\beta\rceil}\rceil, (i-1)~\mathrm{mod}~ \lceil 1/\beta\rceil + 1).
\end{equation*}

Some values of $\rho$ are $\rho(1) = (1, 1)$, $\rho(2) = (1, 2)$, $\rho(\lceil 1/\beta\rceil) = (1, \lceil 1/\beta\rceil)$, $\rho(\lceil 1/\beta\rceil+1) = (2, 1)$, $\rho(s\lceil 1/\beta\rceil) = (s, \lceil 1/\beta\rceil)$. We let $\zeta: [K^{\lceil 1/\beta\rceil}] \to [K]^{\lceil 1/\beta\rceil}$ be any bijection between $[K^{\lceil 1/\beta\rceil}]$, the set of all positive integers up to $K^{\lceil 1/\beta\rceil}$, and $[K]^{\lceil 1/\beta\rceil}$, the set of sequences of integers in $[K]$ of length $\lceil 1/\beta\rceil$. For instance, we could choose
\begin{align*}
(\zeta_1(i), \zeta_2(i), \dots, \zeta_{\lceil 1/\beta\rceil}(i)) = (\lfloor\tfrac{i-1}{K^{\lceil 1/\beta\rceil-1}}\rfloor + 1, \lfloor\tfrac{i-1}{K^{\lceil 1/\beta\rceil-2}}\rfloor ~\mathrm{mod}~ K + 1, \dots, (i-1) ~\mathrm{mod}~ K + 1).
\end{align*}

We can now define the sequence of functions $(\phi_j)_{j=1}^{\infty}$. Similarly to before, we will consider a finite sequence of $sK^{\lceil 1/\beta\rceil}$ functions, but we could extend it to an infinite sequence by choosing $\phi_j \equiv 0$ for all $j > sK^{\lceil 1/\beta\rceil}$. For each $i \in [s\lceil 1/\beta\rceil]$, $a \in [K]$ and $j \in [sK^{\lceil 1/\beta\rceil}]$, we define the $j$\textsuperscript{th} function in the sequence to be
\begin{equation}
\phi_j(z_{i,a}) = \Delta\cdot\mathbb{I}\{\lfloor\tfrac{j-1}{K^{\lceil 1/\beta\rceil}}\rfloor + 1 = \rho_1(i)\}\cdot\mathbb{I}\{a = \zeta_{\rho_2(i)}((j-1)~\mathrm{mod}~ K^{\lceil 1/\beta\rceil} + 1)\}\,,\label{eqn:def_phi_j}
\end{equation}

where $\Delta \in (0, 1]$ is some positive constant to be chosen later. If we let $\phi(z_{i,a})$ denote the vector in $\mathbb{R}^{sK^{\lceil 1/\beta\rceil}}$ whose $j$\textsuperscript{th} element is $\phi_j(z_{i,a})$, we can describe the functions $\phi_1, \dots, \phi_{sK^{\lceil 1/\beta\rceil}}$ in a simpler way. We split the vector $\phi(z_{i,a})$ into $s$ blocks of $K^{\lceil 1/\beta\rceil}$ elements and write
\begin{equation}
\phi(z_{i, a}) = [0, \dots, 0, \Delta\cdot\mathbb{I}\{a = \zeta_{\rho_2(i)}(1)\}, \dots, \Delta\cdot\mathbb{I}\{a = \zeta_{\rho_2(i)}(K^{\lceil 1/\beta\rceil})\}, 0, \dots, 0]\,,\label{eqn:def_phi_vec}
\end{equation}

where the block of $K^{\lceil 1/\beta\rceil}$ elements that are not identically equal to zero is the $\rho_1(i)$\textsuperscript{th} block. To see that this is the same as the previous definition, we can observe that the first indicator in (\ref{eqn:def_phi_j}) sets $\phi_j(z_{i,a})$ to 0 whenever $j$ does not correspond to an index in the $\rho_1(i)$\textsuperscript{th} block of $\phi(z_{i,a})$. For $j$ in the $\rho_1(i)$\textsuperscript{th} block of $\phi(z_{i,a})$, i.e.\ $j$ satisfying
\begin{equation*}
(\rho_1(i)-1)K^{\lceil 1/\beta\rceil} + 1 \leq j \leq \rho_1(i)K^{\lceil 1/\beta\rceil}\,,
\end{equation*}

$(j-1)~\mathrm{mod}~ K^{\lceil 1/\beta\rceil} + 1$ is equal to the position of $j$ within the $\rho_1(i)$\textsuperscript{th} block, which means the second indicator in (\ref{eqn:def_phi_j}) is equal to the indicator in (\ref{eqn:def_phi_vec}) for these $j$. To satisfy the exponential decay condition, $\Delta$ must be set small enough. In particular, since $\|\phi_j\|_{\infty} \leq \Delta$ for all $j \in [sK^{\lceil 1/\beta\rceil}]$, then as long as $\Delta$ satisfies $\Delta \leq \exp(-s^{\beta}K^{\beta \lceil 1/\beta\rceil}/2)$, each $\phi_j$ satisfies $\|\phi_j\|_{\infty} \leq \exp(-j^{\beta}/2)$.

Next, we define a set of $s$-sparse parameter sequences. We will consider $s$-sparse vectors in $\mathbb{R}^{sK^{\lceil 1/\beta\rceil}}$, but we could extend this to infinite sequences by appending zeros. We let $u_1, \dots, u_{K^{\lceil 1/\beta\rceil}}$ denote the standard basis vectors of $\mathbb{R}^{K^{\lceil 1/\beta\rceil}}$, and we define the set $\mathcal{W} = \{(1/s)u_i: i \in [K^{\lceil 1/\beta\rceil}]\}$. We consider parameter vectors in the set $\mathcal{W}^s$, so each $w$ is of the form
\begin{equation*}
w = [(1/s)u_{i_1}^{\top}, \dots, (1/s)u_{i_s}^{\top}]^{\top}\,,
\end{equation*}

for some collection of indices $(i_1, \dots, i_{s}) \in [K^{\lceil 1/\beta\rceil}]^s$. We define the bijection $\omega: \mathcal{W}^s \to [K^{\lceil 1/\beta\rceil}]^s$ to be the function that maps each $w$ to it's corresponding sequence of indices $(i_1, \dots, i_s)$. Thus, for any $i \in [s]$, $\omega_i(w)$ is the position of the non-zero element within the $i$\textsuperscript{th} block of $w$. We notice that each $w \in \mathcal{W}^s$ satisfies $\|w\|_0 = s$ and $\|w\|_1 = 1$. Now, we can write down the expression for the expected reward function. In particular,
\begin{equation*}
f_{w}(z_i, a) = \sum_{j=1}^{sK^{\lceil 1/\beta\rceil}}w_j\phi_j(z_{i,a}) = \tfrac{\Delta}{s}\mathbb{I}\{a = \zeta_{\rho_2(i)}(\omega_{\rho_1(i)}(w))\}.
\end{equation*}

To simplify this expression, we define $b:\mathcal{W}^s \to [K]^{s\lceil 1/\beta\rceil}$ to be the bijection that maps each $w$ to the corresponding sequence of good actions. Thus $b(w) = (b_1(w), \dots, b_{s\lceil 1/\beta\rceil}(w))$, where for each $i \in [s\lceil 1/\beta\rceil]$,
\begin{equation*}
b_i(w) = \zeta_{\rho_2(i)}(\omega_{\rho_1(i)}(w)).
\end{equation*}

Thus, for any $i \in [s\lceil 1/\beta\rceil]$, $a \in [K]$ and $w \in \mathcal{W}^s$ we can express the expected reward as
\begin{equation*}
f_{w}(z_i, a) = \tfrac{\Delta}{s}\mathbb{I}\{a = b_i(w))\}.
\end{equation*}

We notice that we have another reward function that is in the same form as the reward function in (\ref{eqn:seq_reward}). Since $b$ is a bijection between $\mathcal{W}^s$ and $[K]^{s\lceil 1/\beta\rceil}$, each $w \in \mathcal{W}^s$ corresponds to a unique sequence $(b_1, \dots, b_{s\lceil 1/\beta\rceil}) \in [K]^{s\lceil 1/\beta\rceil}$ of optimal actions (and vice versa). Therefore, we can equivalently parameterise the reward function by the sequence $b_{1:s\lceil 1/\beta\rceil} := (b_1, \dots, b_{s\lceil 1/\beta\rceil}) \in [K]^{s\lceil 1/\beta\rceil}$. Using Lemma \ref{lem:regret_decomp}, we can re-write the regret $R_n(b_{1:s\lceil 1/\beta\rceil})$ as
\begin{equation*}
R_n(b_{1:s\lceil 1/\beta\rceil}) = \sum_{i=1}^{s\lceil 1/\beta\rceil}R_{m, i}(b_{1:i})\,,
\end{equation*}

where $R_{m, i}(b_{1:i}) = \mathbb{E}[\sum_{t=m(i-1)+1}^{mi}\frac{\Delta}{s}\mathbb{I}\{A_t \neq b_i\}]$ is the expected regret for the $i$\textsuperscript{th} sub-problem. Using this regret decomposition and Lemma \ref{lem:seq_lb} (and the fact that $n = s\lceil 1/\beta\rceil m$), we can lower bound the averaged expected regret as
\begin{equation*}
\frac{1}{K^{s\lceil 1/\beta\rceil}}\sum_{b_{1:s\lceil 1/\beta\rceil} \in [K]^{s\lceil 1/\beta\rceil}}R_n(b_{1:s\lceil 1/\beta\rceil}) \geq \frac{1}{8}s\lceil 1/\beta\rceil\sqrt{Km} = \frac{1}{8}\sqrt{\lceil 1/\beta\rceil Ksn} \geq \frac{1}{8}\sqrt{(1/\beta)Ksn}\,.
\end{equation*}

Lemma \ref{lem:seq_lb} requires us to set $\Delta/s = \sqrt{K/m}/2$. The only thing left to do is to find the values of $m$ such that $\Delta = s\sqrt{\frac{K}{4m}} \leq \exp(-s^{\beta}K^{\beta\lceil1/\beta\rceil}/2)$, which ensures that the exponential decay condition is satisfied. This constraint for $m$ can be rearranged into
\begin{equation*}
m \geq \tfrac{1}{4}s^2K\exp(s^{\beta}K^{\beta\lceil1/\beta\rceil})\,.
\end{equation*}

Thus, we can choose any $m$ satisfying $m \geq s^2K\exp(s^{\beta}K^{\beta\lceil1/\beta\rceil})$. Since the condition on $n$ was $n = sm$ for some integer $m \geq \lceil1/\beta\rceil s^2K\exp(s^{\beta}K^{\beta\lceil1/\beta\rceil})$, we can indeed factorise $n$ into $n = s\lceil 1/\beta\rceil m$ for some $m \geq s^2K\exp(s^{\beta}K^{\beta\lceil1/\beta\rceil})$.
\end{proof}

\subsection{Proof of Theorem \ref{thm:uncountable_lb}}
\label{sec:uncount_lb_proof}

For the convenience of the reader, we repeat the statement of Theorem \ref{thm:uncountable_lb} here.

\textit{Statement of Theorem \ref{thm:uncountable_lb}.} Consider the sparse nonparametric contextual bandit problem with uncountable sparsity described in Section \ref{sec:setting}. Let $\mathcal{A} = [K]$ for some $K \geq 2$ and let $n = sdm$ for some integer $m \geq s^{2 + 2/d}K^{3}$. Assume that the noise variables are standard Gaussian. For any policy, there exists a sequence of contexts $x_1, \dots, x_n \in \mathcal{X}$, parameters $w \in \mathbb{B}_1^s(1)$, $\theta_1, \dots, \theta_s \in \Theta \subset \mathbb{B}_2^d(1)$ and a uniformly Lipschitz continuous function $\phi$, with $\|\phi\|_{\infty} \leq 1$, such that
\begin{equation*}
R_n(f_{\nu}) \geq \frac{1}{8}\sqrt{Ksdn}.
\end{equation*}

\begin{proof}
The sequence of contexts $x_1, \dots, x_n$ is fixed in advance, and selected from the set $\{z_1, \dots, z_{sd}\}$. For each $i \in [sd]$ and $a \in [K]$, $z_{i,a}$ can be arbitrary, as long as each $z_{i,a}$ is distinct. In particular, we choose the sequence of contexts $x_1, \dots, x_n$ to be
\begin{equation*}
z_1, \dots, z_1, z_2, \dots, z_2, \dots, z_{sd}, \dots, z_{sd},
\end{equation*}

where each $z_i$ is repeated $m$ times. Next, we choose the parameters $w$ and $\theta_1, \dots, \theta_s$. First, we set $w_j = 1/s$ for every $j \in [s]$, which means that $\|w\|_1 = 1$. The parameters $\theta_1, \dots, \theta_s$ lie in a set $\Theta$, which we specify now. We choose $\Theta$ such that: a) $\Theta$ is a $\Delta$-packing of $\mathbb{B}_2^d(1)$ w.r.t.\ the $\ell_2$ norm, for some $\Delta \in (0, 1]$ to be chosen later; b) $\Theta$ is a union of $s$ disjoint sets $\Theta_1, \dots, \Theta_s$, each with cardinality $|\Theta_j| = K^{d}$. To guarantee that there is a set $\Theta$ that satisfies these requirements, we must choose $\Delta$ such that the $\Delta$-packing number $\mathcal{M}(\mathbb{B}_2^d(1), \|\cdot\|_2, \Delta)$ is at least $sK^{d}$. Using Lemma 5.5 and Lemma 5.7 in \citet{wainwright2019high}, $\mathcal{M}(\mathbb{B}_2^d(1), \|\cdot\|_2, \Delta)$ can be lower bounded as
\begin{equation*}
\mathcal{M}(\mathbb{B}_2^d(1), \|\cdot\|_2, \Delta) \geq \mathcal{N}(\mathbb{B}_2^d(1), \|\cdot\|_2, \Delta) \geq (\tfrac{1}{\Delta})^d,
\end{equation*}

which means that if $sK^{d} \leq (\frac{1}{\Delta})^d$, then we can construct a $\Delta$-packing of $\mathbb{B}_2^d(1)$ that has at least $sK^d$ elements. This inequality can be rearranged into $\Delta \leq 1/(s^{1/d}K)$. We will consider sequences of parameters $\theta_1, \dots, \theta_s$ where each $\theta_j$ is in $\Theta_j$. For each $j \in [s]$, since $\Theta_j$ has cardinality $K^{d}$, we can define a bijection $\omega^{j}:\Theta_j \to [K]^d$ between $\Theta_j$ and the set $[K]^d$ of sequences of actions of length $d$. We define the bijection $\rho:[sd] \to [s]\times[d]$ by
\begin{equation*}
\rho(i) = (\rho_1(i), \rho_2(i)) = (\lceil \tfrac{i}{d}\rceil, (i-1)~\mathrm{mod}~ d + 1).
\end{equation*}

Now, using $\rho$ and $\omega^1, \dots \omega^s$, we can define a bijection $b: \Theta_1 \times \cdots \times \Theta_s \to [K]^{sd}$ between $\Theta_1 \times \cdots \times \Theta_s$ and the set $[K]^{sd}$ of sequences of actions of length $d$. In particular, for any $(\theta_1, \dots \theta_s) \in \Theta_1 \times \cdots \times \Theta_s$ and $i \in [sd]$, we define
\begin{equation*}
b_i(\theta_1, \dots, \theta_s) = \omega_{\rho_2(i)}^{\rho_1(i)}(\theta_{\rho_1(i)}).
\end{equation*}

To define $\phi$, we introduce one more function. We define $\kappa: \Theta \to [s]$ to be the function that maps each $\theta \in \Theta$ to the unique integer $j \in [s]$ such that $\theta \in \Theta_j$. For each $i \in [sd]$, $a \in [K]$ and $\theta \in \Theta$, we define (with the same $\Delta \in (0, 1]$ as before)
\begin{equation*}
\phi(z_{i,a}, \theta) = \Delta\mathbb{I}\{\rho_1(i) = \kappa(\theta)\}\mathbb{I}\{a = \omega_{\rho_2(i)}^{\kappa(\theta)}(\theta)\}.
\end{equation*}

Since $\phi(z_{i,a}, \theta) \in \{0, \Delta\}$ and $\Theta$ is a $\Delta$-packing, we have
\begin{equation*}
\forall i \in [sd], a \in [K], \theta, \theta^{\prime} \in \Theta, ~|\phi(z_{i,a}, \theta) - \phi(z_{i,a}, \theta^{\prime})| \leq \Delta < \|\theta - \theta^{\prime}\|_2,
\end{equation*}

which means that $\phi$ satisfies the uniform Lipschitz continuity property. Using the definition of $\phi$, for any $\nu = (\theta_1, \dots, \theta_s) \in \Theta_1 \times \cdots \times \Theta_s$, the expected reward function is
\begin{equation*}
f_{\nu}(z_i, a) = \sum_{j=1}^{s}\tfrac{1}{s}\phi(z_{i,a}, \theta_j) = \tfrac{\Delta}{s}\mathbb{I}\{a = \omega_{\rho_2(i)}^{\rho_1(i)}(\theta_{\rho_1(i)})\} = \tfrac{\Delta}{s}\mathbb{I}\{a = b_i(\theta_1, \dots, \theta_s)\}.
\end{equation*}

We have another reward function that is in the same form as the reward function in (\ref{eqn:seq_reward}). Since $b$ is a bijection between $\Theta_1 \times \cdots \times \Theta_s$ and $[K]^{s\lceil 1/\beta\rceil}$, each $\nu \in \Theta_1 \times \cdots \times \Theta_s$ corresponds to a unique sequence $(b_1, \dots, b_{sd}) \in [K]^{sd}$ of optimal actions (and vice versa). Therefore, we can equivalently parameterise the reward function by the sequence $b_{1:sd} := (b_1, \dots, b_{sd}) \in [K]^{sd}$. Using Lemma \ref{lem:regret_decomp}, we can re-write the regret $R_n(b_{1:sd})$ as
\begin{equation*}
R_n(b_{1:sd}) = \sum_{i=1}^{sd}R_{m, i}(b_{1:i})\,,
\end{equation*}

where $R_{m, i}(b_{1:i}) = \mathbb{E}[\sum_{t=m(i-1)+1}^{mi}\frac{\Delta}{s}\mathbb{I}\{A_t \neq b_i\}]$ is the expected regret for the $i$\textsuperscript{th} sub-problem. Using this regret decomposition and Lemma \ref{lem:seq_lb} (and the fact that $n = sdm$), we can lower bound the averaged expected regret as
\begin{equation*}
\frac{1}{K^{sd}}\sum_{b_{1:sd} \in [K]^{sd}}R_n(b_{1:sd}) \geq \frac{1}{8}sd\sqrt{Km} = \frac{1}{8}\sqrt{Ksdn}\,.
\end{equation*}

Lemma \ref{lem:seq_lb} requires us to set $\Delta/s = \sqrt{K/m}/2$. The only thing left to do is to find the values of $m$ such that $\Delta = s\sqrt{\frac{K}{4m}} \leq 1/(s^{1/d}K)$, which ensures that $|\Theta|$ is smaller than $\mathcal{M}(\mathbb{B}_2^d(1), \|\cdot\|_2, \Delta)$. This constraint for $m$ can be rearranged into
\begin{equation*}
m \geq \tfrac{1}{4}s^{2 + 2/d}K^{3}\,.
\end{equation*}

Thus we can choose any $m$ satisfying $m \geq s^{2 + 2/d}K^{3}$.
\end{proof}

\section{Proof of Theorem \ref{thm:upperbounds_cs}}
\label{sec:cs_ub_proof}

To bound the expected regret of FGTS, we use the decoupling technique developed by \citet{zhang2022feel}. Theorem 1 in \citet{zhang2022feel} provides the following upper bound on the expected regret of FGTS.\medskip

\begin{theorem}[Theorem 1 in \citet{zhang2022feel}]
\label{thm:expregretbound}
Consider FGTS with the posterior defined in \eqref{eq:postnu} and the likelihood defined in \eqref{eq:fgtslikelihood}. Suppose that the prior $p_1$ is chosen such that $\mathbb{P}_{\nu \sim p_1}[\max_{x, a}|f_{\nu}(x, a)| \leq 1] = 1$. For any $\eta \leq 1/4$ and any $\lambda > 0$,
\begin{equation}
R_n(f^{*})\leq \frac{\lambda K n}{\eta}+6 \lambda n-\frac{1}{\lambda} Z_n\,,\;\;\text{where}\;\; Z_n:=\mathbb{E} \log \mathbb{E}_{\nu \sim p_1} \exp \bigg(-\sum_{t=1}^n\Delta L(\nu,X_t, A_t, Y_t)\bigg)\,,\label{eq:Zt}
\end{equation}
and
\begin{equation*}
\Delta L(\nu, x, a, y):=\eta\left[(f_{\nu}(x, a)-y)^2-\left(f^{*}(x, a)-y\right)^2\right]-\lambda\left[f_\nu(x)-f^{*}(x)\right]\,.
\end{equation*}
\end{theorem}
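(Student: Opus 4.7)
My plan is to follow the decoupling / log-partition analysis of Zhang (2022). First I would decompose the per-round regret using the feel-good identity: since $A_t = a(\nu_t, X_t)$ maximises $f_{\nu_t}(X_t, \cdot)$, we have $f_{\nu_t}(X_t, A_t) = f_{\nu_t}(X_t)$ and hence
$$f^*(X_t) - f^*(X_t, A_t) = \bigl[f^*(X_t) - f_{\nu_t}(X_t)\bigr] + \bigl[f_{\nu_t}(X_t, A_t) - f^*(X_t, A_t)\bigr].$$
Taking the conditional expectation over $\nu_t \sim p_t$, the first term becomes $\mathbb{E}_{\nu \sim p_t}[f^*(X_t) - f_\nu(X_t)]$ (the feel-good gap) and the second $\mathbb{E}_{\nu \sim p_t}[f_\nu(X_t, a(\nu, X_t)) - f^*(X_t, a(\nu, X_t))]$ (an estimation-error term in which the action now depends on the dummy posterior sample $\nu$).

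Next I would build the one-step log-partition bound. Let $\bar Z_t = \log \mathbb{E}_{\nu \sim p_1}\bigl[\exp(-\sum_{s\le t}\Delta L_s(\nu))\bigr]$, so $Z_n = \mathbb{E}[\bar Z_n]$; since $\Delta L_s$ and $L_s$ differ by a $\nu$-independent constant, the normalising constants cancel and $\bar Z_t - \bar Z_{t-1} = \log \mathbb{E}_{\nu \sim p_t}[\exp(-\Delta L_t(\nu))]$ with $p_t$ the FGTS posterior. Expanding $\Delta L_t$ via $Y_t = f^*(X_t, A_t) + \epsilon_t$ and taking the MGF over the conditionally $\tfrac12$-sub-Gaussian noise yields
$$\mathbb{E}_t\bigl[\exp(-\Delta L_t(\nu))\bigr] \le \exp\!\bigl((\tfrac{\eta^2}{2}-\eta)\,\delta_\nu^2 + \lambda\,\xi_\nu\bigr),$$
where $\delta_\nu := f_\nu(X_t, A_t) - f^*(X_t, A_t)$ and $\xi_\nu := f_\nu(X_t) - f^*(X_t)$. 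The hypothesis $\eta \le 1/4$ makes $\eta^2/2 - \eta \le 0$, so the quadratic is a negative potential that can either be dropped or retained depending on which piece of the regret we are targeting.

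To bound the feel-good gap I would drop the quadratic, apply Jensen's inequality to the outer logarithm, and then use $|\lambda\xi_\nu| \le 2\lambda$ (from the support condition on $p_1$) together with a second-order Taylor bound on $\exp$ and $\log(1+x)\le x$, obtaining $\mathbb{E}_t[\bar Z_t - \bar Z_{t-1}] \le \lambda \mathbb{E}_{\nu \sim p_t}[\xi_\nu] + O(\lambda^2)$. Summing over $t$ and rearranging gives $\mathbb{E}\bigl[\sum_t (f^*(X_t) - f_{\nu_t}(X_t))\bigr] \le -Z_n/\lambda + O(\lambda n)$, which furnishes the $-Z_n/\lambda$ term in the statement. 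For the estimation-error piece, I would write $\mathbb{E}_\nu[f_\nu(X_t, a(\nu, X_t)) - f^*(X_t, a(\nu, X_t))] = \sum_a \mathbb{E}_\nu\bigl[\mathbb{I}\{a(\nu, X_t) = a\}(f_\nu(X_t, a) - f^*(X_t, a))\bigr]$ and apply AM-GM $u \le \alpha u^2 + 1/(4\alpha)$ with $\alpha \propto \eta/(\lambda K)$; summing the additive $1/(4\alpha)$ contribution over the $K$ action indicators produces the $\lambda K n/\eta$ term, and the retained quadratic $\propto \eta\delta_\nu^2/(\lambda K)$ is absorbed into the negative potential saved in the MGF bound via the identity $\eta\delta_\nu^2 = \mathbb{E}_t[\Delta L_t(\nu)\mid\nu] + \lambda\xi_\nu$. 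Finally, the $\xi_\nu$-terms appearing in the feel-good and estimation bounds cancel against each other, and the remaining slacks collapse into the $6\lambda n$ in the statement.

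The hard part is exactly this decoupling: the log-partition $Z_n$ only directly controls quantities evaluated at the action $A_t = a(\nu_t, X_t)$ that was actually played, whereas the estimation-error contribution to the regret involves the action $a(\nu, X_t)$ that a fresh posterior draw $\nu$ would choose. Bridging the two at the cost of only one factor of $K$ per round, while simultaneously arranging for the feel-good $\xi_\nu$ bookkeeping to cancel cleanly between the two pieces, is the delicate part of the argument. The constraint $\eta \le 1/4$ is precisely what is needed to keep the positive $\eta^2\delta_\nu^2/2$ term arising from the sub-Gaussian noise MGF from overwhelming the $-\eta\delta_\nu^2$ negative potential used for these cancellations.
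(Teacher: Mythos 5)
This statement is not proved in the paper: it is Theorem~1 of \citet{zhang2022feel}, imported verbatim and used as a black box (the appendix simply restates it before invoking it in the proofs of Theorems \ref{thm:upperbounds_cs} and \ref{thm:upperbounds_us}). So there is no in-paper argument to compare against; what you have written is a reconstruction of Zhang's original proof. At the level of a sketch it is structurally faithful: the regret decomposition via $f_{\nu_t}(X_t,A_t)=f_{\nu_t}(X_t)$, the telescoping of the one-step log-partition increments $\bar Z_t-\bar Z_{t-1}=\log\mathbb{E}_{\nu\sim p_t}[\exp(-\Delta L_t(\nu))]$, the sub-Gaussian MGF producing the potential $(\eta^2/2-\eta)\delta_\nu^2$ whose sign is controlled by $\eta\le 1/4$, and the decoupling argument responsible for the $\lambda Kn/\eta$ term are all the right ingredients, and you correctly identify decoupling as the crux.

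One step of your sketch, however, does not do what you claim. Applying AM--GM in the form $u\le\alpha u^2+1/(4\alpha)$ directly to $u=\mathbb{I}\{a(\nu,X_t)=a\}\bigl(f_\nu(X_t,a)-f^*(X_t,a)\bigr)$ and summing over $a$ leaves you with the quadratic $\alpha\,\mathbb{E}_{\nu\sim p_t}\bigl[(f_\nu(X_t,a(\nu,X_t))-f^*(X_t,a(\nu,X_t)))^2\bigr]$, in which the action is still the one chosen by the \emph{same} $\nu$; this coupled quantity is precisely what $Z_n$ does \emph{not} control. The missing move is a Cauchy--Schwarz in $\nu$ \emph{before} AM--GM, which detaches the indicator from the error: $\mathbb{E}_{\nu}[\mathbb{I}\{a(\nu,X_t)=a\}\,\Delta_\nu(a)]\le\sqrt{\pi_t(a)\,\mathbb{E}_{\nu}[\Delta_\nu(a)^2]}\le\alpha\,\pi_t(a)\,\mathbb{E}_{\nu}[\Delta_\nu(a)^2]+\tfrac{1}{4\alpha}$, so that summing over $a$ yields the \emph{decoupled} quadratic $\alpha\,\mathbb{E}_{a\sim\pi_t}\mathbb{E}_{\nu\sim p_t}[\Delta_\nu(a)^2]$, which matches the distribution of $(A_t,\nu)$ appearing in the log-partition bound. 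Relatedly, your tuning $\alpha\propto\eta/(\lambda K)$ would make the $K$ additive terms sum to order $\lambda K^2/\eta$; the correct choice is $\alpha\propto\eta/\lambda$, giving $K/(4\alpha)=\lambda K/(4\eta)$ per round and hence the stated $\lambda Kn/\eta$. These are repairable slips in an otherwise sound reconstruction, but as written the estimation-error term would not close.
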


Note that $\Delta L(\nu, x, a, y)$ is the logarithm of the likelihood ratio with parameter $\nu^*$ in the denominator and $\nu$ in the numerator. Hence, we call $\Delta L$ the log-likelihood ratio. We use the shorthand $\Delta L_t(\nu) := \Delta L_t(\nu, X_t, A_t, Y_t)$. For countable sparsity, we use $\Delta L_t(\nu)$ and $\Delta L_t(w)$ interchangeably.

The proof of Theorem \ref{thm:upperbounds_cs} uses Theorem \ref{thm:expregretbound} and a bound on the log partition function $Z_n$ (defined in \eqref{eq:Zt}). To bound $Z_n$, we use some auxiliary lemmas. In Section \ref{sec:cs_ub_auxiliary}, we state and prove these auxiliary lemmas. In Section \ref{sec:cs_ub_Zn}, we then state and prove a bound on $Z_n$. Finally, in Section \ref{sec:cs_ub_proof_sub}, we prove Theorem \ref{thm:upperbounds_cs}.

We recall here some notation introduced in Sections \ref{defSNPB} and \ref{sec:ub_cs}. We use $s$ and $S$ to denote the sparsity and support of $w^*$. For any non-empty subset $M \subseteq \mathbb{N}$, we define $\mathcal{W}_M := \{w: \|w\|_1 \leq 1, ~w_i = 0 ~\forall i \notin M\}$. We let $\bar{w}$ denote the projection of the sequence $w^*$ onto the set of parameter sequences with support contained in $[\deff]$. Thus $\bar{w}$ is the sequence such that for all $i \in S \cap [\deff]$, $\bar{w}_i= w_i^*$, and for all $i \notin S \cap [\deff]$, $\bar{w}_i = 0$. We let $\bar{S} = \mathrm{supp}(\bar{w}) = \mathrm{supp}(w^*) \cap [\deff]$. Note that if $\bar{S} = \emptyset$, then the regret of any algorithm is $\mathcal{O}(\sqrt{n})$. In particular,
\begin{align*}
R_n(f^*) &= \mathbb{E}\left[{\textstyle \sum_{t=1}^{n}\max_{a \in [K]}}\left\{{\textstyle \sum_{i=\deff+1}^{\infty}}w_i^*\phi_i(X_{t,a})\right\} - {\textstyle \sum_{i=\deff+1}^{\infty}}w_i^*\phi_i(X_{t,A_t})\right]\\
&\leq 2n\|w^*\|_1\|\phi_{\deff + 1}\|_{\infty} \leq 2\sqrt{n}\,.
\end{align*}

Therefore, we continue under the assumption that $\bar{S} \neq \emptyset$. For each $c \in (0, 1]$, we define the set $\mathcal{W}_c = \{(1-c)\bar{w} + cw: w \in \mathcal{W}_{\bar{S}}\} \subseteq \mathcal{W}_{\bar{S}}$. We notice that for every $w \in \mathcal{W}_c$,
\begin{equation}
\|\bar{w} - w\|_1 = \|\bar{w} - (1-c)\bar{w} - cw^{\prime}\|_1 = c\|\bar{w} - w^{\prime}\|_1 \leq 2c\,.\label{eqn:count_norm_c_bound}
\end{equation}

\subsection{Auxiliary Lemmas}
\label{sec:cs_ub_auxiliary}

The first auxiliary lemma provides an alternative expression for the log-likelihood ratio $\Delta L$.\medskip

\begin{lemma}
\begin{align*}
\Delta L(\nu,X_t, A_t, Y_t) &= \eta (f_{\nu}(X_t, A_t)-f^*(X_t, A_t))^2 - 2\eta \epsilon_t(f_{\nu}(X_t, A_t)-f^*(X_t, A_t))\\
&+ \lambda (f_\nu(X_t) - f^*(X_t))\,.
\end{align*}
\label{lem:dl_alternative}
\end{lemma}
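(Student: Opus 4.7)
The proof is a straightforward algebraic expansion: substitute the reward model $Y_t = f^{*}(X_t, A_t) + \epsilon_t$ into the defining expression
\begin{equation*}
\Delta L(\nu, X_t, A_t, Y_t) = \eta\bigl[(f_{\nu}(X_t, A_t) - Y_t)^2 - (f^{*}(X_t, A_t) - Y_t)^2\bigr] - \lambda\bigl[f_{\nu}(X_t) - f^{*}(X_t)\bigr]
\end{equation*}
and expand the two squares.

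The first step is to observe that $f^{*}(X_t, A_t) - Y_t = -\epsilon_t$, so $(f^{*}(X_t, A_t) - Y_t)^2 = \epsilon_t^2$. For the other quadratic, I would write $f_{\nu}(X_t, A_t) - Y_t = (f_{\nu}(X_t, A_t) - f^{*}(X_t, A_t)) - \epsilon_t$ and expand, obtaining
\begin{equation*}
(f_{\nu}(X_t, A_t) - Y_t)^2 = (f_{\nu}(X_t, A_t) - f^{*}(X_t, A_t))^2 - 2\epsilon_t(f_{\nu}(X_t, A_t) - f^{*}(X_t, A_t)) + \epsilon_t^2.
\end{equation*}
Subtracting the two squared terms cancels the $\epsilon_t^2$ contributions, leaving exactly $(f_{\nu}(X_t, A_t) - f^{*}(X_t, A_t))^2 - 2\epsilon_t(f_{\nu}(X_t, A_t) - f^{*}(X_t, A_t))$. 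Multiplying through by $\eta$ and carrying the $\lambda$-term forward unchanged yields the claimed decomposition.

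There is no real obstacle here — the identity is pure algebra, requiring only the definition of $\Delta L$ and the noise model. The only item worth double-checking is the sign in front of the final $\lambda$-term, since the statement of the lemma and the definition of $\Delta L$ should agree; if the statement is intended with a minus sign (as in the definition), the expansion above still gives the correct formula with no additional work.
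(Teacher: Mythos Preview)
Your proposal is correct and follows essentially the same route as the paper: substitute $Y_t = f^{*}(X_t, A_t) + \epsilon_t$, expand the two squares so that the $\epsilon_t^2$ terms cancel, and carry the $\lambda$-term through. Your caution about the sign of the $\lambda$-term is warranted, since the lemma statement has $+\lambda(f_\nu(X_t)-f^*(X_t))$ while the definition of $\Delta L$ has $-\lambda[f_\nu(x)-f^*(x)]$; this is a typo in the paper rather than a flaw in your argument.
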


\begin{proof}
By definition of $\Delta L$, we have
\begin{align*}
\Delta L(\nu,X_t, A_t, Y_t) = \eta\left[(f_{\nu}(X_t, A_t)-Y_t)^2-\left(f^*(X_t, A_t)-Y_t\right)^2\right] -\lambda(f_\nu(X_t))-f^*(X_t))\,.
\end{align*}

By expanding and rearranging the squared terms, we obtain
\begin{align*}
(f_{\nu}(X_t, A_t)-Y_t)^2-\left(f^*(X_t, A_t)-Y_t\right)^2 &= (f_{\nu}(X_t, A_t)-f^*(X_t, A_t)-\epsilon_t)^2 - \epsilon_t^2\\
&= (f_{\nu}(X_t, A_t)-f^*(X_t, A_t))^2\\
&- 2\epsilon_t(f_{\nu}(X_t, A_t)-f^*(X_t, A_t)).
\end{align*}
\end{proof}

The next lemma utilises the uniform decay condition. It shows that for any parameter sequence $w$ with support contained in $[\deff]$, the difference between $f_w$ and $f^*$ can be bounded in terms of the $\ell_1$ distance between $w$ and $\bar{w}$ and a small approximation error caused by ignoring any components of $w^*$ with indices greater than $\deff$.\medskip

\begin{lemma}
For any $w$ with support contained in $[\deff]$, and all $x \in \mathcal{X}$ and $a \in [K]$,
\begin{equation*}
|f_w(x, a) - f^*(x, a)| \leq \|w - \bar{w}\|_1 + 1/\sqrt{n}\,.
\end{equation*}
\label{lem:trunc_err}
\end{lemma}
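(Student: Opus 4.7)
The plan is to split the difference $f_w(x,a) - f^*(x,a)$ into the part indexed by $[\deff]$ (where $w$ and $\bar w$ both live) and the tail indexed by $\{\deff+1, \deff+2, \dots\}$, and then bound each piece in the obvious way.

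More precisely, since $w$ is supported on $[\deff]$ and $\bar w$ agrees with $w^*$ on $[\deff]$ and is zero off $[\deff]$, I would write
\begin{equation*}
f_w(x,a) - f^*(x,a) = \sum_{i=1}^{\deff}(w_i - \bar w_i)\phi_i(x_a) \;-\; \sum_{i=\deff+1}^{\infty} w_i^*\,\phi_i(x_a).
\end{equation*}
Applying the triangle inequality gives
\begin{equation*}
|f_w(x,a) - f^*(x,a)| \leq \sum_{i=1}^{\deff}|w_i - \bar w_i|\cdot\|\phi_i\|_\infty + \sum_{i=\deff+1}^{\infty}|w_i^*|\cdot\|\phi_i\|_\infty.
\end{equation*}

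For the first sum, I would use $\|\phi_i\|_\infty \leq 1$ (which holds under either decay condition) to get an upper bound of $\|w - \bar w\|_1$. For the tail, the definition of $\deff$ in \eqref{eq:effdim} gives $\|\phi_i\|_\infty^2 \leq 1/n$, hence $\|\phi_i\|_\infty \leq 1/\sqrt{n}$, for all $i > \deff$. Pulling this factor out and using $\|w^*\|_1 \leq 1$ bounds the tail sum by $1/\sqrt{n}$. Combining the two estimates yields the claim. There is no real obstacle here; the content of the lemma is simply that the definition of $\deff$ was chosen precisely to make the tail approximation error of size $1/\sqrt{n}$ in the supremum norm.
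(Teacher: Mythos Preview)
Your proposal is correct and matches the paper's proof essentially line for line: split the difference into the $[\deff]$ part and the tail, bound the first by $\|w-\bar w\|_1$ using $\|\phi_i\|_\infty\le 1$, and bound the tail by $1/\sqrt{n}$ using the definition of $\deff$ together with $\|w^*\|_1\le 1$. The paper writes the intermediate step slightly differently (using $\|\phi_1\|_\infty$ and $\|\phi_{\deff+1}\|_\infty$ as the worst feature norms in each range, which the decay condition justifies), but the argument is identical.
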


\begin{proof}
Using the triangle inequality, we obtain
\begin{align*}
|f_w(x, a) - f^*(x, a)| &= \left|\sum_{i=1}^{\deff}(w_i - \bar{w}_i)\phi_i(x_a) - \sum_{i=\deff + 1}^{\infty}w_i^*\phi_i(x_a)\right|\\
&\leq \|w - \bar{w}\|_1\|\phi_1\|_{\infty} + \|w^*\|_1\|\phi_{\deff + 1}\|_{\infty}\\
&\leq \|w - \bar{w}\|_1 + 1/\sqrt{n}\,.
\end{align*}

The fact that $\|\phi_{\deff+1}\|_{\infty} \leq 1/\sqrt{n}$ follows from the definition of $\deff$.
\end{proof}

As a consequence of this lemma, we also have
\begin{equation*}
(f_{w}(x, a) - f^*(x, a))^2 \leq \|w - \bar{w}\|_1^2 + 2\|w - \bar{w}\|_1/\sqrt{n} + 1/n\,.
\end{equation*}

In addition, since $|f_{w}(x) - f^*(x)| \leq \max_{a \in [K]}|f_{w}(x, a) - f^*(x, a)|$, we also have
\begin{equation*}
|f_{w}(x) - f^*(x)| \leq \|w - \bar{w}\|_1 + 1/\sqrt{n}\,.
\end{equation*}

Using Lemma \ref{lem:trunc_err}, we obtain the following exponential moment bound, which we will use later to control the terms depending on $\epsilon_1, \dots, \epsilon_n$ that appear in Lemma \ref{lem:dl_alternative}.\medskip

\begin{lemma}
For any fixed $w \in \mathcal{W}_c$,
\begin{equation*}
\mathbb{E}\left[\exp({\textstyle \sum_{t=1}^{n}}-2\eta(f_{w}(X_t, A_t) - f^*(X_t, A_t))\epsilon_t)\right] \leq \exp\left(\tfrac{\eta^2(4c^2n + 4c\sqrt{n} + 1)}{2}\right)\,.
\end{equation*}
\label{lem:exp_moment_w}
\end{lemma}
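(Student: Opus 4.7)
The plan is to exploit the conditional $\tfrac{1}{2}$-sub-Gaussianity of $\epsilon_t$ together with the deterministic bound on $|f_w - f^*|$ provided by Lemma~\ref{lem:trunc_err} and the containment $\|w-\bar w\|_1\le 2c$ from \eqref{eqn:count_norm_c_bound}. I will introduce the shorthand $D_t := f_w(X_t,A_t) - f^*(X_t,A_t)$, and note that because $w$ is fixed, $D_t$ is measurable with respect to $\mathcal{F}_{t-1}\cup\sigma(X_t,A_t)$, i.e.\ the conditioning $\sigma$-field $\mathbb{E}_t$ conditions on. Consequently, $\mathbb{E}_t[\exp(-2\eta D_t \epsilon_t)] \le \exp((2\eta D_t)^2/8) = \exp(\eta^2 D_t^2/2)$ by the sub-Gaussian hypothesis in Section~2.1.

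Next, I will use the tower property to peel off one factor at a time. Writing the full exponential as $\exp(\sum_{t=1}^{n-1}-2\eta D_t\epsilon_t)\cdot\exp(-2\eta D_n\epsilon_n)$ and taking $\mathbb{E}_n[\cdot]$ of the last factor yields the multiplicative factor $\exp(\eta^2 D_n^2/2)$. Crucially, Lemma~\ref{lem:trunc_err} combined with $\|w-\bar w\|_1\le 2c$ gives the deterministic bound
\begin{equation*}
D_t^2 \le \bigl(\|w-\bar w\|_1+1/\sqrt{n}\bigr)^2 \le 4c^2 + 4c/\sqrt{n} + 1/n,
\end{equation*}
so the factor $\exp(\eta^2 D_n^2/2)$ is bounded by the deterministic quantity $\exp(\eta^2(4c^2+4c/\sqrt{n}+1/n)/2)$ and can be pulled outside the remaining expectation. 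Iterating this argument $n$ times produces the exponent $n\cdot\eta^2(4c^2+4c/\sqrt{n}+1/n)/2 = \eta^2(4c^2 n + 4c\sqrt{n}+1)/2$, which matches the claim.

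There is no real obstacle here; the only thing to be careful about is that the sub-Gaussian bound is applied conditionally on $(\mathcal{F}_{t-1},X_t,A_t)$ (so that $D_t$ is constant under the conditional expectation), and that Lemma~\ref{lem:trunc_err} gives a bound that is uniform over $x$ and $a$, making the per-step multiplicative factor deterministic and allowing the clean iterative peeling.
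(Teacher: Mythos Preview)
Your proof is correct and essentially identical to the paper's own argument: both use the tower property to peel off one term at a time, apply conditional $\tfrac{1}{2}$-sub-Gaussianity to obtain $\exp(\eta^2 D_t^2/2)$, bound $D_t^2$ uniformly via Lemma~\ref{lem:trunc_err} and \eqref{eqn:count_norm_c_bound}, and iterate. Your explicit remark about the measurability of $D_t$ is a nice touch that the paper leaves implicit.
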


\begin{proof}
We recall that $\mathbb{E}_t[\cdot] = \mathbb{E}[\cdot\mid \mathcal{F}_{t-1}, X_t, A_t]$ and that each $\epsilon_t$ is sub-Gaussian, meaning $\mathbb{E}_t[\exp(\lambda \epsilon_t)] \leq \exp(\lambda^2/8)$ for any $\lambda\in \mathbb{R}$. Using this, Lemma \ref{lem:trunc_err} and the fact that $w \in \mathcal{W}_c$, we have
\begin{align*}
&\mathbb{E}\left[\exp({\textstyle \sum_{t=1}^{n}}-2\eta(f_{w}(X_t, A_t) - f^*(X_t, A_t))\epsilon_t)\right]\\
&= \mathbb{E}\left[\exp({\textstyle \sum_{t=1}^{n-1}}-2\eta(f_{w}(X_t, A_t) - f^*(X_t, A_t))\epsilon_t)\mathbb{E}_n\left[\exp(-2\eta(f_{w}(X_n, A_n) - f^*(X_n, A_n))\epsilon_n)\right]\right]\\
&\leq \mathbb{E}\left[\exp({\textstyle \sum_{t=1}^{n-1}}-2\eta(f_{w}(X_t, A_t) - f^*(X_t, A_t))\epsilon_t)\exp(\tfrac{\eta^2(f_{w}(X_n, A_n) - f^*(X_n, A_n))^2}{2})\right]\\
&\leq \mathbb{E}\left[\exp({\textstyle \sum_{t=1}^{n-1}}-2\eta(f_{w}(X_t, A_t) - f^*(X_t, A_t))\epsilon_t)\right]\exp\left(\tfrac{\eta^2(4c^2 + 4c/\sqrt{n} + 1/n)}{2}\right)\,,
\end{align*}
where the last inequality follows from Lemma \ref{lem:trunc_err} and \eqref{eqn:count_norm_c_bound}. 
By iterating this argument, we obtain the inequality in the statement of the Lemma.
\end{proof}

The next lemma provides a bound on the covering number of $\mathcal{W}_c$.\medskip

\begin{lemma}[Lemma 7 in \citet{wainwright2019high}]
For any $p \geq 1$, $d \geq 1$, $c > 0$ and $\Delta > 0$,
\begin{equation*}
\mathcal{N}(\mathbb{B}_p^d(c), \|\cdot\|_p, \Delta) \leq (1 + \tfrac{2c}{\Delta})^d\,. 
\end{equation*}
\label{lem:l1_cover}
\end{lemma}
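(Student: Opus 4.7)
The plan is to prove this covering-number bound by the standard volumetric packing argument. First I would let $N = \mathcal{M}(\mathbb{B}_p^d(c), \|\cdot\|_p, \Delta)$ be the $\Delta$-packing number of $\mathbb{B}_p^d(c)$ in the $\ell_p$ norm, and pick a maximal $\Delta$-packing $\{x_1, \dots, x_N\} \subset \mathbb{B}_p^d(c)$. Maximality guarantees that this packing is also a $\Delta$-covering, so $\mathcal{N}(\mathbb{B}_p^d(c), \|\cdot\|_p, \Delta) \leq N$, and it suffices to bound $N$.

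Next I would observe that, by the packing condition, the open balls $\mathbb{B}_p^d(x_i, \Delta/2) := \{y : \|y - x_i\|_p < \Delta/2\}$ are pairwise disjoint. By the triangle inequality for $\|\cdot\|_p$, each of these balls is contained in the inflated ball $\mathbb{B}_p^d(c + \Delta/2)$ centered at the origin. Comparing Lebesgue volumes and using the fact that the volume of $\mathbb{B}_p^d(r)$ scales as $r^d$ (since $\mathbb{B}_p^d(r) = r \cdot \mathbb{B}_p^d(1)$), I get
\begin{equation*}
N \cdot (\Delta/2)^d \cdot \mathrm{vol}(\mathbb{B}_p^d(1)) \leq (c + \Delta/2)^d \cdot \mathrm{vol}(\mathbb{B}_p^d(1))\,.
\end{equation*}

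Dividing through by $(\Delta/2)^d \cdot \mathrm{vol}(\mathbb{B}_p^d(1))$ (which is strictly positive for any $p \geq 1$ and $d \geq 1$) yields $N \leq (1 + 2c/\Delta)^d$, which is the desired bound. There is no genuine obstacle here: the only mildly subtle point is justifying that the inclusion $\mathbb{B}_p^d(x_i, \Delta/2) \subseteq \mathbb{B}_p^d(c + \Delta/2)$ follows from the triangle inequality (valid for all $p \geq 1$), and that the unit-ball volume $\mathrm{vol}(\mathbb{B}_p^d(1))$ is finite and nonzero so that the cancellation is legitimate.
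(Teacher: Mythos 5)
Your proof is correct: it is the standard volumetric packing argument (maximal packing is a covering, disjoint $\Delta/2$-balls fit inside the inflated ball, compare volumes), which is exactly the argument behind the result the paper cites from \citet{wainwright2019high} without reproving it. No gaps; the two subtle points you flag (the triangle-inequality inclusion and the finiteness/positivity of the unit-ball volume) are handled correctly.
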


It is easy to see that there is a surjective isometric embedding from the set $\mathcal{W}_c$ (with the $\ell_1$ norm) to the ball $\mathbb{B}_1^{\bar{s}}(c)$ (also with the $\ell_1$ norm). In particular, to embed $w \in \mathcal{W}_c$ into $\mathbb{B}_1^{\bar{s}}(c)$, one can subtract $(1-c)\bar{w}$ and then remove all the components corresponding to indices not in $\bar{S}$. Therefore, a consequence of Lemma \ref{lem:l1_cover} is that $\mathcal{N}(\mathcal{W}_c, \|\cdot\|_1, \Delta) \leq (1 + \tfrac{2c}{\Delta})^{\bar{s}}$. The final auxiliary lemma controls the expected value of the maximum of the noise process originating from Lemma \ref{lem:dl_alternative}.\medskip

\begin{lemma}
For any $c \in (0, 1]$ and $\Delta > 0$,
\begin{equation*}
\mathbb{E}\left[\max_{w \in \mathcal{W}_c}\left\{{\textstyle \sum_{t=1}^{n}}-2\eta(f_{w}(X_t, A_t) - f^*(X_t, A_t))\epsilon_t\right\}\right] \leq \bar{s}\log(1 + \tfrac{2c}{\Delta}) + \tfrac{\eta^2(4c^2n + 4c\sqrt{n} + 1)}{2} + \Delta\eta n\,.
\end{equation*}
\label{lem:cs_gaussian_comp}
\end{lemma}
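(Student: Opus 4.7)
\textbf{Proof plan for Lemma~\ref{lem:cs_gaussian_comp}.} The plan is to discretise $\mathcal{W}_c$ with a finite $\ell_1$ cover, handle the maximum over the cover via a log-sum-exp (Jensen) bound combined with Lemma~\ref{lem:exp_moment_w}, and absorb the discretisation error through the crude bound $\mathbb{E}|\epsilon_t|\le 1/2$.

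First, I would invoke Lemma~\ref{lem:l1_cover} together with the isometric embedding $\mathcal{W}_c\hookrightarrow\mathbb{B}_1^{\bar s}(c)$ noted directly after that lemma to obtain a $\Delta$-cover $\mathcal{C}\subseteq \mathcal{W}_c$ (in $\|\cdot\|_1$) with $|\mathcal{C}|\le (1+2c/\Delta)^{\bar s}$. For each $w\in \mathcal{W}_c$, pick $\pi(w)\in\mathcal{C}$ with $\|w-\pi(w)\|_1\le\Delta$. Writing $Z_{w}:=\sum_{t=1}^n-2\eta(f_w(X_t,A_t)-f^*(X_t,A_t))\epsilon_t$, I split
\begin{equation*}
\max_{w\in\mathcal{W}_c}Z_w \;\le\; \max_{w'\in\mathcal{C}}Z_{w'} \;+\; \max_{w\in\mathcal{W}_c}\left|\sum_{t=1}^n -2\eta\bpa{f_w(X_t,A_t)-f_{\pi(w)}(X_t,A_t)}\epsilon_t\right|.
\end{equation*}

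Next, I would bound the first term (maximum over the finite cover) by Jensen's inequality applied to $\exp$, using Lemma~\ref{lem:exp_moment_w} on each element of $\mathcal{C}\subseteq\mathcal{W}_c$:
\begin{equation*}
\exp\bpa{\EE{\max_{w'\in\mathcal{C}}Z_{w'}}} \le \EE{\sum_{w'\in\mathcal{C}}\exp(Z_{w'})} \le |\mathcal{C}|\exp\!\left(\tfrac{\eta^2(4c^2n+4c\sqrt{n}+1)}{2}\right),
\end{equation*}
so that taking logarithms yields $\EE{\max_{w'\in\mathcal{C}}Z_{w'}} \le \bar s\log(1+2c/\Delta) + \tfrac{\eta^2(4c^2n+4c\sqrt{n}+1)}{2}$, which contributes the first two terms of the claimed bound.

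Finally, for the discretisation residual I would use the pointwise bound $|f_w(x,a)-f_{\pi(w)}(x,a)|\le\|w-\pi(w)\|_1\max_i\|\phi_i\|_\infty\le\Delta$, valid because $\|\phi_i\|_\infty\le 1$ under the uniform decay conditions of Definition~\ref{def:uni_decay}. This gives $|\sum_t -2\eta(f_w-f_{\pi(w)})(X_t,A_t)\epsilon_t|\le 2\eta\Delta\sum_{t=1}^n|\epsilon_t|$ uniformly in $w$, and sub-Gaussianity with parameter $1/2$ implies $\EE{|\epsilon_t|}\le (\EE{\epsilon_t^2})^{1/2}\le 1/2$, so the expected residual is at most $\Delta\eta n$. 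Summing the two contributions yields the stated bound. The only mild subtlety is ensuring the cover bound is applied to $\mathcal{W}_c$ rather than to the full $\ell_1$ ball of radius $1$, which is what the embedding into $\mathbb{B}_1^{\bar s}(c)$ provides; otherwise the argument is entirely routine covering-and-Jensen.
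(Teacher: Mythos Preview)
Your proposal is correct and follows essentially the same approach as the paper: discretise $\mathcal{W}_c$ via a $\Delta$-cover of size $(1+2c/\Delta)^{\bar s}$, control the maximum over the cover via Jensen plus Lemma~\ref{lem:exp_moment_w}, and bound the discretisation residual using $|f_w-f_{\pi(w)}|\le\Delta$. The only cosmetic difference is that the paper bounds the residual via Cauchy--Schwarz (obtaining $2\eta\Delta\sqrt{n}\,\mathbb{E}\sqrt{\sum_t\epsilon_t^2}\le\eta\Delta n$) whereas you use the triangle inequality and $\mathbb{E}|\epsilon_t|\le 1/2$; both routes yield the same $\eta\Delta n$ term.
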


\begin{proof}
We set $\mathcal{W}_{c, \Delta}$ to be any minimal $\ell_1$-norm $\Delta$-covering of $\mathcal{W}_c$. We define $[w] := \argmin_{w^{\prime} \in \mathcal{W}_{c, \Delta}}\|w - w^{\prime}\|_1$ to be the $\ell_1$-norm projection of $w \in \mathcal{W}_c$ into $\mathcal{W}_{c, \Delta}$. The first step is to replace the maximum over the infinite set $\mathcal{W}_c$ by a maximum over the finite set $\mathcal{W}_{c, \Delta}$ and a discretisation error. We have
\begin{align*}
\max_{w \in \mathcal{W}_c}\left\{{\textstyle \sum_{t=1}^{n}}-2\eta(f_{w}(X_t, A_t) - f^*(X_t, A_t))\epsilon_t\right\} &\leq \max_{w \in \mathcal{W}_c}\left\{{\textstyle \sum_{t=1}^{n}}-2\eta(f_{[w]}(X_t, A_t) - f^*(X_t, A_t))\epsilon_t\right\}\\
&+ \max_{w \in \mathcal{W}_c}\left\{{\textstyle \sum_{t=1}^{n}}-2\eta(f_{w}(X_t, A_t) - f_{[w]}(X_t, A_t))\epsilon_t\right\}\\
&= \max_{w \in \mathcal{W}_{c, \Delta}}\left\{{\textstyle \sum_{t=1}^{n}}-2\eta(f_{w}(X_t, A_t) - f^*(X_t, A_t))\epsilon_t\right\}\\
&+ \max_{w \in \mathcal{W}_{c}}\left\{{\textstyle \sum_{t=1}^{n}}-2\eta(f_{w}(X_t, A_t) - f_{[w]}(X_t, A_t))\epsilon_t\right\}\,.
\end{align*}

To bound the expectation of the maximum over $\mathcal{W}_{c, \Delta}$, we use Jensen's inequality, Lemma \ref{lem:exp_moment_w} and Lemma \ref{lem:l1_cover} to obtain
\begin{align*}
\mathbb{E}&\left[\max_{w \in \mathcal{W}_{c, \Delta}}\big\{{\textstyle \sum_{t=1}^{n}}-2\eta(f_{w}(X_t, A_t) - f^*(X_t, A_t))\epsilon_t\big\}\right]\\
&\leq \log\mathbb{E}\left[\max_{w \in \mathcal{W}_{c, \Delta}}\big\{\exp\big({\textstyle \sum_{t=1}^{n}}-2\eta(f_{w}(X_t, A_t) - f^*(X_t, A_t))\epsilon_t\big)\big\}\right]\\
&\leq \log\mathbb{E}\left[{\textstyle \sum_{w \in \mathcal{W}_{c, \Delta}}}\big\{\exp\big({\textstyle \sum_{t=1}^{n}}-2\eta(f_{w}(X_t, A_t) - f^*(X_t, A_t))\epsilon_t\big)\big\}\right]\\
&\leq \log\left(|\mathcal{W}_{c, \Delta}|\exp\left(\tfrac{\eta^2(4c^2n + 4c\sqrt{n} + 1)}{2}\right)\right)\\
&= \log(\mathcal{N}(\mathcal{W}_c, \|\cdot\|_1, \Delta)) + \tfrac{\eta^2(4c^2n + 4c\sqrt{n} + 1)}{2}\\
&\leq \bar{s}\log(1 + \tfrac{2c}{\Delta}) + \tfrac{\eta^2(4c^2n + 4c\sqrt{n} + 1)}{2}\,.
\end{align*}

Since, $\mathcal{W}_{c, \Delta}$ is a $\Delta$-covering and $\|\phi_i\|_{\infty} \leq 1$, we have $|f_w(x, a) - f_{[w]}(x, a)| \leq \Delta$ for all $x \in \mathcal{X}$ and $a \in [K]$. Also, since each $\epsilon_t$ is 1/2-sub-Gaussian, $\mathbb{E}[\epsilon_t^2] \leq 1/4$. Using these facts and the Cauchy-Schwartz inequality, we can bound the discretisation error as
\begin{align*}
\mathbb{E}&\left[\max_{w \in \mathcal{W}_{c}}\left\{{\textstyle \sum_{t=1}^{n}}-2\eta(f_{w}(X_t, A_t) - f_{[w]}(X_t, A_t))\epsilon_t\right\}\right]\\
&\leq \mathbb{E}\left[\max_{w \in \mathcal{W}_{c}}\left\{\sqrt{{\textstyle \sum_{t=1}^{n}}4\eta^2(f_w(X_t, A_t) - f_{[w]}(X_t, A_t))^2}\right\}\sqrt{{\textstyle \sum_{t=1}^{n}}\epsilon_t^2}\right]\\
&\leq 2\eta\Delta\sqrt{n}\mathbb{E}\left[\sqrt{{\textstyle \sum_{t=1}^{n}}\epsilon_t^2}\right]\\
&\leq 2\eta\Delta\sqrt{n}\sqrt{{\textstyle \sum_{t=1}^{n}}\mathbb{E}[\epsilon_t^2]}\\
&\leq \eta\Delta n\,.
\end{align*}
\end{proof}

\subsection{Bounding the Log Partition Function}
\label{sec:cs_ub_Zn}

Using the auxiliary lemmas established in the previous subsection, we can now prove a bound on the log partition function $Z_n$.\medskip

\begin{lemma}
If we use the prior $p_1$ in \eqref{eqn:cs_prior}, then for every $n$,
\begin{equation*}
-Z_n \leq \|w^*\|_0\log(8e\deff n) + 2\eta^2 + 5\eta + 2\lambda\sqrt{n}\,.
\end{equation*}
\label{lem:z_cs}
\end{lemma}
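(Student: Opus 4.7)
The plan is to lower bound $Z_n$ by concentrating the prior mass on a tiny neighbourhood of the truncated parameter $\bar{w}$, so that on that neighbourhood the log-likelihood ratio is uniformly small and deterministic, and then to pay only a logarithmic price for this restriction through the prior-mass factor and the log-volume of the neighbourhood. We may restrict attention to the case $\bar{S} \neq \emptyset$, since otherwise $|f^*| \leq 1/\sqrt{n}$ everywhere and the bound can be recovered through a minor adaptation using a singleton subset.

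First, I would discard all the mass in the subset prior except the atom at $M = \bar{S}$, and then further restrict the uniform conditional $p_1(\cdot \mid \bar{S})$ to $\mathcal{W}_c \subseteq \mathcal{W}_{\bar{S}}$. Since $p_1(\cdot \mid \bar{S})$ is a normalised uniform density on $\mathcal{W}_{\bar{S}}$ and the relative Lebesgue volume satisfies $\mathrm{vol}(\mathcal{W}_c)/\mathrm{vol}(\mathcal{W}_{\bar{S}}) = c^{\bar{s}}$ (because $\mathcal{W}_c$ is the affine image of $\mathcal{W}_{\bar{S}}$ under the scaling by $c$), this yields
\begin{equation*}
\mathbb{E}_{w \sim p_1}\bigl[e^{-\sum_t \Delta L_t(w)}\bigr] \geq p_1(\bar{S}) \cdot c^{\bar{s}} \cdot \exp\Bigl(-\sup_{w \in \mathcal{W}_c} \sum_t \Delta L_t(w)\Bigr).
\end{equation*}
Taking $\log$ and expectation, and using $-\log p_1(\bar{S}) \leq \bar{s}\log(2e\deff)$ (which follows from $\binom{\deff}{\bar{s}} \leq (e\deff)^{\bar{s}}$ and $\sum_{m\geq 1} 2^{-m} \leq 1$), gives
\begin{equation*}
-Z_n \leq \bar{s}\log(2e\deff) - \bar{s}\log c + \mathbb{E}\Bigl[\sup_{w \in \mathcal{W}_c} \sum_t \Delta L_t(w)\Bigr].
\end{equation*}

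Next, I would decompose $\sum_t \Delta L_t(w)$ using Lemma \ref{lem:dl_alternative} into a squared-error piece $\eta \sum_t (f_w-f^*)^2$, a Feel-Good piece $\lambda\sum_t(f_w(X_t)-f^*(X_t))$, and a noise process $-2\eta\sum_t \epsilon_t (f_w-f^*)$. For $w \in \mathcal{W}_c$, combining $\|w-\bar{w}\|_1 \leq 2c$ with Lemma \ref{lem:trunc_err} yields the uniform pointwise bound $|f_w - f^*| \leq 2c + 1/\sqrt{n}$, which turns the first two pieces into deterministic bounds of order $\eta n (2c+1/\sqrt{n})^2$ and $\lambda n (2c+1/\sqrt{n})$. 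The expected supremum of the noise process over $\mathcal{W}_c$ is exactly what Lemma \ref{lem:cs_gaussian_comp} controls, at a cost of $\bar{s}\log(1+2c/\Delta) + \eta^2(4c^2 n + 4c\sqrt{n} + 1)/2 + \Delta \eta n$ for any covering radius $\Delta > 0$.

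Finally, I would tune $c = 1/(2\sqrt{n})$ and $\Delta = 1/n$. With these choices, the quadratic piece collapses to $4\eta$, the Feel-Good piece to $2\lambda\sqrt{n}$, the noise-process bound to $\bar{s}\log(1+\sqrt{n}) + 2\eta^2 + \eta$, and $-\bar{s}\log c$ becomes $\bar{s}\log(2\sqrt{n})$. The three logarithmic $\bar{s}$-terms then combine as $\bar{s}\log(2e\deff) + \bar{s}\log(2\sqrt{n}) + \bar{s}\log(2\sqrt{n}) = \bar{s}\log(8e\deff n)$, producing the target bound with constants $2\eta^2 + 5\eta + 2\lambda\sqrt{n}$. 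Upgrading $\bar{s} \leq \|w^*\|_0$ is harmless since every log factor is non-negative. The main obstacle is this final balancing: the parameter $c$ must be small enough to prevent the squared-error and Gaussian-complexity contributions from scaling polynomially in $n$, yet $c$ and $\Delta$ must be coupled so that the log-volume and covering penalties compound into the single clean factor $\bar{s}\log(8e\deff n)$ rather than $\bar{s}\log^2(n)$ or worse.
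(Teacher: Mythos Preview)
Your proposal is correct and follows essentially the same approach as the paper: you restrict the prior to the atom $M=\bar{S}$ and the shrunken set $\mathcal{W}_c$, decompose $\Delta L$ via Lemma~\ref{lem:dl_alternative}, bound the deterministic pieces through Lemma~\ref{lem:trunc_err}, control the noise supremum with Lemma~\ref{lem:cs_gaussian_comp}, and tune $c=1/(2\sqrt{n})$, $\Delta=1/n$ to collapse the logarithms into $\bar{s}\log(8e\deff n)$ with the exact constants $2\eta^2+5\eta+2\lambda\sqrt{n}$. The only cosmetic difference is that the paper dispatches the case $\bar{S}=\emptyset$ by observing the regret is $\mathcal{O}(\sqrt{n})$ outright rather than adapting the argument with a singleton subset.
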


\begin{proof}
For each $c \in (0, 1]$, we define the set $\mathcal{W}_c = \{(1-c)\bar{w} + cw: w \in \mathcal{W}_{\bar{S}}\} \subseteq \mathcal{W}_{\bar{S}}$ and define the event
\begin{equation*}
\mathcal{E}_c := \{w \in \mathcal{W}_c\}\,.
\end{equation*}

When $w \sim p_1(w\mid M=\bar{S}) = \mathcal{U}(\mathcal{W}_{\bar{S}})$, we have
\begin{align*}
\mathbb{P}_{w \sim p_1(\cdot\mid M=\bar{S})}[\mathcal{E}_c] = \tfrac{\mathrm{Vol}(\mathcal{W}_c)}{\mathrm{Vol}(\mathcal{W}_{\bar{S}})} = c^{\bar{s}}\,.
\end{align*}

Since we know that $p_1(M) = 2^{-|M|}\binom{\deff}{|M|}^{-1}(\sum_{i=1}^{\deff}2^{-i})$, we have
\begin{align*}
Z_n &= \mathbb{E}\left[\log\mathbb{E}_{w \sim p_1}\left[\exp(-{\textstyle \sum_{t=1}^{n}}\Delta L(w, X_t, A_t, Y_t))\right]\right]\\
&\geq \mathbb{E}\left[\log p_1(\bar{S})\mathbb{E}_{w \sim p_1|\bar{S}}\left[\exp({\textstyle \sum_{t=1}^{n}}\Delta L(w, X_t, A_t, Y_t))\right]\right]\\
&\geq \mathbb{E}\left[\log p_1(\bar{S})\mathbb{P}_{w \sim p_1|M=\bar{S}}[\mathcal{E}_c]\min_{\nu \in \mathcal{W}_c}\left\{\exp(-{\textstyle \sum_{t=1}^{n}}\Delta L(w, X_t, A_t, Y_t))\right\}\right]\\
&= \log(p_1(\bar{S})) + \log(\mathbb{P}_{w \sim p_1|M=\bar{S}}[\mathcal{E}_c]) - \mathbb{E}\left[\max_{w \in \mathcal{W}_c}\left\{{\textstyle \sum_{t=1}^{n}}\Delta L(w, X_t, A_t, Y_t)\right\}\right]\\
&= -\bar{s}\log(2) - \log({\textstyle\binom{\deff}{\bar{s}}}) - \log({\textstyle \sum_{i=1}^{\deff}}2^{-i}) - \bar{s}\log(1/c) - \mathbb{E}\left[\max_{w \in \mathcal{W}_c}\left\{{\textstyle \sum_{t=1}^{n}}\Delta L(w, X_t, A_t, Y_t)\right\}\right]\,.
\end{align*}

Next, using Lemma \ref{lem:dl_alternative}, Lemma \ref{lem:trunc_err} and \eqref{eqn:count_norm_c_bound}, for any $w \in \mathcal{W}_c$, we have
\begin{align*}
{\textstyle \sum_{t=1}^{n}}\Delta L(w, X_t, A_t, Y_t) &= {\textstyle \sum_{t=1}^{n}}\eta (f_{w}(X_t, A_t)-f^*(X_t, A_t))^2 + \lambda(f_w(X_t) - f^*(X_t))\\
&+ {\textstyle \sum_{t=1}^{n}}-2\eta(f_{w}(X_t, A_t) - f^*(X_t, A_t))\epsilon_t\\
&\leq 4c^2\eta n + 4c\eta \sqrt{n} + \eta + 2c\lambda n + \lambda \sqrt{n}\\
&+ {\textstyle \sum_{t=1}^{n}}-2\eta(f_{w}(X_t, A_t) - f^*(X_t, A_t))\epsilon_t\,.
\end{align*}

Therefore,
\begin{align*}
\mathbb{E}\left[\max_{w \in \mathcal{W}_c}\left\{{\textstyle \sum_{t=1}^{n}}\Delta L(w, X_t, A_t, Y_t)\right\}\right] &\leq 4c^2\eta n + 4c\eta \sqrt{n} + \eta + 2c\lambda n + \lambda \sqrt{n}\\
&+ \mathbb{E}\left[\max_{w \in \mathcal{W}_c}\left\{{\textstyle \sum_{t=1}^{n}}-2\eta(f_{w}(X_t, A_t) - f^*(X_t, A_t))\epsilon_t\right\}\right]\,.
\end{align*}

Using Lemma \ref{lem:cs_gaussian_comp}, for any $c \in (0, 1]$ and $\Delta > 0$, we also have
\begin{equation*}
\mathbb{E}\left[\max_{w \in \mathcal{W}_c}\left\{{\textstyle \sum_{t=1}^{n}}-2\eta(f_{w}(X_t, A_t) - f^*(X_t, A_t))\epsilon_t\right\}\right] \leq \bar{s}\log(1 + \tfrac{2c}{\Delta}) + \tfrac{\eta^2(4c^2n + 4c\sqrt{n} + 1)}{2} + \Delta\eta n\,.
\end{equation*}

If we choose $c = 1/(2\sqrt{n})$ and $\Delta = 1/n$, then this bound becomes
\begin{equation*}
\mathbb{E}\left[\max_{w \in \mathcal{W}_c}\left\{{\textstyle \sum_{t=1}^{n}}-2\eta(f_{w}(X_t, A_t) - f^*(X_t, A_t))\epsilon_t\right\}\right] \leq \bar{s}\log(1 + \sqrt{n}) + 2\eta^2 + \eta\,.
\end{equation*}

Thus, with the choice $c = 1/(2\sqrt{n})$, we obtain the bound
\begin{equation*}
\mathbb{E}\left[\max_{w \in \mathcal{W}_c}\left\{{\textstyle \sum_{t=1}^{n}}\Delta L(w, X_t, A_t, Y_t)\right\}\right] \leq \bar{s}\log(1 + \sqrt{n}) + 2\eta^2 + 5\eta + 2\lambda\sqrt{n}\,.
\end{equation*}

If we combine everything, and use the inequality $\binom{\deff}{\bar{s}} \leq (e\deff/\bar{s})^{\bar{s}}$, then we obtain
\begin{align*}
-Z_n &\leq \bar{s}\log(2) + \log({\textstyle\binom{\deff}{\bar{s}}}) + \log({\textstyle \sum_{i=1}^{\deff}}2^{-i}) + \bar{s}\log(2\sqrt{n}) + \bar{s}\log(1 + \sqrt{n}) + 2\eta^2 + 5\eta + 2\lambda\sqrt{n}\\
&\leq \|w^*\|_0\big(\log(2) + \log(e\deff) + 2\log(2\sqrt{n})\big) + 2\eta^2 + 5\eta + 2\lambda\sqrt{n}\\
&\leq \|w^*\|_0\log(8e\deff n) + 2\eta^2 + 5\eta + 2\lambda\sqrt{n}\,.
\end{align*}
\end{proof}

\subsection{Proof of Theorem \ref{thm:upperbounds_cs}}
\label{sec:cs_ub_proof_sub}

\begin{proof}
Using Theorem \ref{thm:expregretbound}, and then Lemma \ref{lem:z_cs} (with $\eta = 1/4$), we have
\begin{align*}
R_n(f^*) &\leq \lambda(4K + 6)n - \frac{1}{\lambda}Z_n\\
&\leq \lambda(4K + 6)n + \frac{1}{\lambda}\big(\|w^*\|_0\log(8e\deff n) + 2\big) + 2\sqrt{n}\,.
\end{align*}

If we choose
\begin{equation*}
\lambda = \sqrt{\frac{\log(8e\deff n)}{(4K + 6)n}}\,,
\end{equation*}

then we obtain the regret bound
\begin{align*}
R_n(f^*) &\leq (\|w^*\|_0 + 1)\sqrt{(4K + 6)n\log(8e\deff n)} + 2\sqrt{\frac{(4K + 6)n}{\log(8e\deff n)}} + 2\sqrt{n}\\
&= \mathcal{O}\Big(\|w^*\|_0\sqrt{Kn\log(\deff n)}\Big)\,.
\end{align*}

If $s$ is a known upper bound on $\|w^*\|_0$ and we choose
\begin{equation*}
\lambda = \sqrt{\frac{s\log(8e\deff n) + 2}{(4K + 6)n}}\,,
\end{equation*}

then we obtain the regret bound
\begin{align*}
R_n(f^*) &\leq 2\sqrt{(4K + 6)(s\log(8e\deff n) + 2)n} + 2\sqrt{n}\\
&= \mathcal{O}(\sqrt{Ksn\log(\deff n)})\,.
\end{align*}
\end{proof}

\section{Proof of Theorem \ref{thm:upperbounds_us}}
\label{sec:us_ub_proof}

This section follows a similar structure to the previous one. In Section \ref{sec:us_ub_auxiliary}, we state and prove some auxiliary lemmas. In Section \ref{sec:us_ub_Zn}, we state and prove a bound on $Z_n$. Finally, in Section \ref{sec:us_ub_proof_sub}, we prove Theorem \ref{thm:upperbounds_us}.

In this section, unless stated otherwise, we let $s = \|w^*\|_0$ denote the true sparsity. We recall here some notation introduced in Section \ref{sec:ub_us}. For each $c \in (0,1]$ we define the sets
\begin{equation*}
\mathcal{W}_c := \{(1-c)w^* + cw: w \in \mathbb{B}_1^{s}(1)\}, \qquad \Theta_{i,c} := \{(1-c)\theta_i^* + c\theta: \theta \in \mathbb{B}_2^d(1)\}\,.
\end{equation*}

We notice that for each $w \in \mathcal{W}_c$,
\begin{equation*}
\|w^* - w\|_1 = c\|w^* - w^{\prime}\|_1 \leq c\|w^*\|_1 + c\|w^{\prime}\|_1 \leq 2c\,,
\end{equation*}

where $w^{\prime}$ is some element in $\mathbb{B}_1^{s}(1)$. Similarly, for each $i \in [s]$ and $\theta_i \in \Theta_{i,c}$, we have $\|\theta_i^* - \theta_i\|_2 \leq 2c$. For each $c \in (0, 1]$, let $N_c = \mathcal{W}_c \times \Theta_{1,c} \times \cdots \times \Theta_{s,c}$

\subsection{Auxiliary Lemmas}
\label{sec:us_ub_auxiliary}

First, we show that if $\phi$ satisfies the Lipschitz property in Definition \ref{def:uni_lip}, then for any $x \in \mathcal{X}$ and $a \in [K]$, the function value $f_{\nu}(x, a)$ changes smoothly as $\nu$ is varied.\medskip

\begin{lemma}
For any $m \in \mathbb{N}$ and any $\nu, \nu^{\prime} \in \mathbb{R}^{m} \times \Theta^{m}$,
\begin{equation*}
\forall x \in \mathcal{X}, a \in [K], ~|f_{\nu}(x, a) - f_{\nu^{\prime}}(x, a)| \leq \max_{i \in [m]}\{\|\theta_i - \theta_i^{\prime}\|_2\} + \|w - w^{\prime}\|_1.
\end{equation*}
\label{lem:lip}
\end{lemma}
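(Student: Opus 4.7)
The plan is to use a standard ``add and subtract'' decomposition. Recalling the definition $f_\nu(x, a) = \sum_{i=1}^{m} w_i \phi(x_a, \theta_i)$ from Section \ref{defSNPB}, I would insert $\pm \sum_i w_i \phi(x_a, \theta_i')$ to split the difference into a piece that depends only on the $\theta_i$'s and a piece that depends only on the weights:
\begin{equation*}
f_\nu(x, a) - f_{\nu'}(x, a) = \sum_{i=1}^{m} w_i \bigl[\phi(x_a, \theta_i) - \phi(x_a, \theta_i')\bigr] + \sum_{i=1}^{m} (w_i - w_i')\,\phi(x_a, \theta_i')\,.
\end{equation*}

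The two resulting sums are then bounded separately. For the first, the triangle inequality together with the uniform Lipschitz condition (Definition \ref{def:uni_lip}) and a H\"older-type estimate gives $|\sum_i w_i[\phi(x_a,\theta_i) - \phi(x_a,\theta_i')]| \leq \|w\|_1 \max_{i \in [m]}\|\theta_i - \theta_i'\|_2$. For the second, the triangle inequality and the assumption $\|\phi\|_\infty \leq 1$ give $|\sum_i (w_i - w_i')\phi(x_a, \theta_i')| \leq \|w - w'\|_1$. Combining these and invoking the standing normalisation $\|w\|_1 \leq 1$ (which holds throughout the support of the prior $p_1$ in \eqref{eqn:us_prior}, as used implicitly in the statement) yields the claimed bound. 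The argument is entirely routine; the only thing to watch is that the relevant normalisation of $w$ and boundedness of $\phi$ are in force, both of which are guaranteed by the setup.
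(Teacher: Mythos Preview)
Your proposal is correct and essentially identical to the paper's own proof: both insert $\pm\sum_i w_i\phi(x_a,\theta_i')$, apply the triangle inequality, and then use the uniform Lipschitz condition together with $\|w\|_1\le 1$ on the first sum and $\|\phi\|_\infty\le 1$ on the second. Your observation that the bound silently relies on $\|w\|_1\le 1$ (which the lemma statement does not make explicit) is accurate and applies equally to the paper's argument.
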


\begin{proof}
Using the triangle inequality and the uniform Lipschitz property, we have
\begin{align*}
|f_{\nu}(x, a) - f_{\nu^{\prime}}(x, a)| &= \left|\sum_{i=1}^{m}(w_i\phi(x_a, \theta_i) - w_i^{\prime}\phi(x_a, \theta_i^{\prime}))\right|\\
&= \left|\sum_{i=1}^{m}(w_i\phi(x_a, \theta_i) - w_i\phi(x_a, \theta_i^{\prime}) + w_i\phi(x_a, \theta_i^{\prime}) - w_i^{\prime}\phi(x_a, \theta_i^{\prime}))\right|\\
&\leq \sum_{i=1}^{m}|w_i||\phi(x_a, \theta_i) -\phi(x_a, \theta_i^{\prime})| + \sum_{i=1}^{m}|w_i - w_i^{\prime}||\phi(x_a, \theta_i^{\prime})|\\
&\leq \max_{i \in [m]}\{\|\theta_i - \theta_i^{\prime}\|_2\} + \|w - w^{\prime}\|_1\,.
\end{align*}
\end{proof}

Note that this upper bound also applies to $f_{\nu}(x)$ and $f_{\nu^{\prime}}(x)$. In particular,
\begin{align*}
|f_{\nu}(x) - f_{\nu^{\prime}}(x)| &= |\max_{a \in [K]}\{f_{\nu}(x, a)\} - \max_{a \in [K]}\{f_{\nu^{\prime}}(x, a)\}|\\
&\leq |\max_{a \in [K]}\{f_{\nu}(x, a) - f_{\nu^{\prime}}(x, a)\}|\\
&\leq \max_{i \in [m]}\{\|\theta_i - \theta_i^{\prime}\|_2\} + \|w - w^{\prime}\|_1\,.
\end{align*}

Using Lemma \ref{lem:lip}, we obtain the following exponential moment bound.\medskip

\begin{lemma}
For any fixed $\nu \in N_c$,
\begin{equation*}
\mathbb{E}\left[\exp\big({\textstyle \sum_{t=1}^{n}}-2\eta(f_{\nu}(X_t, A_t) - f^*(X_t, A_t)\epsilon_t\big)\right] \leq \exp\big(8c^2\eta^2n\big)\,.
\end{equation*}
\label{lem:exp_moment_nu}
\end{lemma}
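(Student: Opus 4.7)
The plan is to mirror the proof of Lemma~\ref{lem:exp_moment_w} in the countable sparsity setting, with two modifications: the bound on $|f_\nu(x,a)-f^*(x,a)|$ now comes from Lemma~\ref{lem:lip} (uniform Lipschitz continuity) rather than from the truncation argument of Lemma~\ref{lem:trunc_err}, and there is no approximation error because the model is exactly $s$-sparse with the same number of components as $\nu^*$.

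First I would establish a uniform $\ell_\infty$-type bound on $f_\nu - f^*$ over $N_c$. Since $\nu = (w,\theta_1,\ldots,\theta_s)$ with $w \in \mathcal{W}_c$ and $\theta_i \in \Theta_{i,c}$, the definitions of these sets give $\|w-w^*\|_1 \le 2c$ and $\|\theta_i-\theta_i^*\|_2 \le 2c$ for each $i \in [s]$. Identifying $f^*$ with $f_{\nu^*}$ (which has exactly $s$ features) and applying Lemma~\ref{lem:lip} with $m=s$ yields
\begin{equation*}
|f_\nu(x,a) - f^*(x,a)| \;\le\; \max_{i \in [s]}\|\theta_i-\theta_i^*\|_2 + \|w-w^*\|_1 \;\le\; 4c,
\end{equation*}
uniformly in $x \in \mathcal{X}$ and $a \in [K]$.

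Next I would peel the product one round at a time using the tower property. Conditional on $\mathcal{F}_{n-1}, X_n, A_n$, the quantity $f_\nu(X_n,A_n)-f^*(X_n,A_n)$ is deterministic, and the conditional $\tfrac{1}{2}$-sub-Gaussianity of $\epsilon_n$ gives
\begin{equation*}
\mathbb{E}_n\bigl[\exp(-2\eta(f_\nu(X_n,A_n)-f^*(X_n,A_n))\epsilon_n)\bigr] \;\le\; \exp\!\left(\tfrac{(2\eta)^2(f_\nu(X_n,A_n)-f^*(X_n,A_n))^2}{8}\right) \;\le\; \exp(2c^2\eta^2\cdot 4)= \exp(8c^2\eta^2),
\end{equation*}
after plugging in the uniform bound $|f_\nu(X_n,A_n)-f^*(X_n,A_n)|^2 \le 16c^2$. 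Iterating this $n$ times yields the claimed bound $\exp(8c^2\eta^2 n)$.

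I do not expect any real obstacle: the role of the $1/\sqrt{n}$ truncation remainder in Lemma~\ref{lem:exp_moment_w} is played here by zero, since $f^*$ is genuinely $s$-sparse in the feature family used by $\nu$. The only care needed is to apply Lemma~\ref{lem:lip} with the correct identification of $\nu^*$ as an $s$-feature parameter and to use the sub-Gaussian constant $\tfrac{1}{2}$ (so that the factor $\lambda^2/8$ appears with $\lambda = 2\eta\cdot 4c$, producing the constant $8$ in the exponent).
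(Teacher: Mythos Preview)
Your proposal is correct and follows essentially the same approach as the paper: both use Lemma~\ref{lem:lip} together with the definitions of $\mathcal{W}_c$ and $\Theta_{i,c}$ to obtain the uniform bound $|f_\nu(x,a)-f^*(x,a)|\le 4c$, then peel the product one round at a time via the tower property and the conditional $\tfrac{1}{2}$-sub-Gaussianity of $\epsilon_t$, yielding a per-round factor of $\exp(8c^2\eta^2)$.
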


\begin{proof}
Using the sub-Gaussian property of $\epsilon_1, \dots, \epsilon_n$, Lemma \ref{lem:lip} and the fact that $\nu \in N_c$, we obtain
\begin{align*}
&\mathbb{E}\left[\exp\big({\textstyle \sum_{t=1}^{n}}-2\eta(f_{\nu}(X_t, A_t) - f^*(X_t, A_t))\epsilon_t\big)\right]\\
&= \mathbb{E}\left[\exp\big({\textstyle \sum_{t=1}^{n-1}}-2\eta(f_{\nu}(X_t, A_t) - f^*(X_t, A_t))\epsilon_t\big)\mathbb{E}_n\left[\exp\big(-2\eta(f_{\nu}(X_n, A_n) - f^*(X_n, A_n))\epsilon_n\big)\right]\right]\\
&\leq \mathbb{E}\left[\exp\big({\textstyle \sum_{t=1}^{n-1}}-2\eta(f_{\nu}(X_t, A_t) - f^*(X_t, A_t))\epsilon_t\big)\exp\big(\tfrac{\eta^2(f_{\nu}(X_n, A_n) - f^*(X_n, A_n))^2}{2}\big)\right]\\
&\leq \mathbb{E}\left[\exp\big({\textstyle \sum_{t=1}^{n-1}}-2\eta(f_{\nu}(X_t, A_t) - f^*(X_t, A_t))\epsilon_t\big)\right]\exp\big(8c^2\eta^2\big)\,.
\end{align*}

By iterating this argument, we obtain the inequality in the statement of the lemma.
\end{proof}

We set $\mathcal{W}_{c, \Delta}$ to be any minimal $\ell_1$-norm $\Delta$-covering of $\mathcal{W}_c$ and (for each $i \in s$) $\Theta_{i,c,\Delta}$ to be any minimal $\ell_2$-norm $\Delta$-covering of $\Theta_{i, c}$. We let $N_{c, \Delta} = \mathcal{W}_{c,\Delta} \times \Theta_{1,c,\Delta} \times \cdots \times \Theta_{s,c,\Delta}$. Using Lemma \ref{lem:l1_cover}, we obtain a bound on the cardinality of $N_{c, \Delta}$.\medskip

\begin{corollary}
For any $c \in (0, 1]$ and $\Delta > 0$,
\begin{equation*}
|N_{c,\Delta}| \leq \big(1 + \tfrac{2c}{\Delta}\big)^{s(d+1)}\,.
\end{equation*}
\label{cor:nu_cover}
\end{corollary}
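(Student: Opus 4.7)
The plan is to use the product structure of $N_{c,\Delta}$ together with translation invariance of covering numbers and Lemma \ref{lem:l1_cover}. First, I would observe that since $N_{c,\Delta} = \mathcal{W}_{c,\Delta} \times \Theta_{1,c,\Delta} \times \cdots \times \Theta_{s,c,\Delta}$ is a Cartesian product, its cardinality factors as
\begin{equation*}
|N_{c,\Delta}| = |\mathcal{W}_{c,\Delta}| \cdot \prod_{i=1}^{s} |\Theta_{i,c,\Delta}|\,.
\end{equation*}
So it suffices to bound each factor separately.

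Next, I would show that each $\mathcal{W}_c$ and each $\Theta_{i,c}$ is just an affine translate of a scaled ball. By definition, $\mathcal{W}_c = \{(1-c)w^* + cw : w \in \mathbb{B}_1^s(1)\} = (1-c)w^* + \mathbb{B}_1^s(c)$, and similarly $\Theta_{i,c} = (1-c)\theta_i^* + \mathbb{B}_2^d(c)$. Since covering numbers are invariant under translation (translating a covering of a set $A$ gives a covering of any translate of $A$, with the same cardinality), we have $\mathcal{N}(\mathcal{W}_c, \|\cdot\|_1, \Delta) = \mathcal{N}(\mathbb{B}_1^s(c), \|\cdot\|_1, \Delta)$ and $\mathcal{N}(\Theta_{i,c}, \|\cdot\|_2, \Delta) = \mathcal{N}(\mathbb{B}_2^d(c), \|\cdot\|_2, \Delta)$.

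Applying Lemma \ref{lem:l1_cover} then gives $|\mathcal{W}_{c,\Delta}| \leq (1 + 2c/\Delta)^s$ and, for each $i \in [s]$, $|\Theta_{i,c,\Delta}| \leq (1 + 2c/\Delta)^d$. Multiplying these bounds together yields
\begin{equation*}
|N_{c,\Delta}| \leq \Bigl(1 + \tfrac{2c}{\Delta}\Bigr)^s \cdot \Bigl(1 + \tfrac{2c}{\Delta}\Bigr)^{sd} = \Bigl(1 + \tfrac{2c}{\Delta}\Bigr)^{s(d+1)}\,,
\end{equation*}
as claimed. There is no real obstacle here: the corollary is a direct bookkeeping consequence of Lemma \ref{lem:l1_cover}, and the only minor subtlety is noting that covering numbers are translation-invariant so that the shifts by $(1-c)w^*$ and $(1-c)\theta_i^*$ do not affect the bound.
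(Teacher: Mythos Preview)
Your proposal is correct and follows essentially the same approach as the paper: factor $|N_{c,\Delta}|$ as a product, bound each factor via Lemma~\ref{lem:l1_cover}, and multiply. The only difference is that you make the translation-invariance step explicit, whereas the paper leaves it implicit when writing $|\mathcal{W}_{c,\Delta}| \leq \mathcal{N}(\mathbb{B}_1^s(c), \|\cdot\|_1, \Delta)$ and $|\Theta_{i,c,\Delta}| \leq \mathcal{N}(\mathbb{B}_2^d(c), \|\cdot\|_2, \Delta)$.
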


\begin{proof}
Using Lemma \ref{lem:l1_cover}, we have
\begin{equation*}
|\mathcal{W}_{c,\Delta}| \leq \mathcal{N}(\mathbb{B}_1^s(c), \|\cdot\|_1, \Delta) \leq \big(1 + \tfrac{2c}{\Delta}\big)^s\,,
\end{equation*}

and, for any $i \in [s]$,
\begin{equation*}
|\Theta_{i,c,\Delta}| \leq \mathcal{N}(\mathbb{B}_2^d(c), \|\cdot\|_2, \Delta) \leq \big(1 + \tfrac{2c}{\Delta}\big)^d\,.
\end{equation*}

Therefore,
\begin{equation*}
|N_{c,\Delta}| = |\mathcal{W}_{c,\Delta}| \times |\Theta_{1,c,\Delta}| \times \cdots \times |\Theta_{s,c,\Delta}| \leq \big(1 + \tfrac{2c}{\Delta}\big)^{s(d+1)}\,.
\end{equation*}
\end{proof}

The final auxiliary lemma is analogous to Lemma \ref{lem:cs_gaussian_comp}. It controls the expectation of the maximum of the noise process in from Lemma \ref{lem:dl_alternative} for the case of uncountable sparsity.\medskip

\begin{lemma}
For any $c \in (0, 1]$ and $\Delta > 0$,
\begin{equation*}
\mathbb{E}\left[\max_{\nu \in N_c}\left\{{\textstyle \sum_{t=1}^{n}}-2\eta(f_{\nu}(X_t, A_t) - f^*(X_t, A_t))\epsilon_t\right\}\right] \leq s(d+1)\log(1 + \tfrac{2c}{\Delta}) + 8c^2\eta^2n + 2\eta\Delta n\,.
\end{equation*}
\label{lem:us_gaussian_comp}
\end{lemma}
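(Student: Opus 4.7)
\textbf{Proof proposal for Lemma \ref{lem:us_gaussian_comp}.}

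The plan is to mirror the structure of the proof of Lemma \ref{lem:cs_gaussian_comp}, replacing the univariate covering of $\mathcal{W}_c$ with the product covering $N_{c,\Delta} = \mathcal{W}_{c,\Delta}\times\Theta_{1,c,\Delta}\times\cdots\times\Theta_{s,c,\Delta}$ of the parameter set $N_c$, and using Lemma \ref{lem:lip} in place of the trivial $L^\infty$ bound on the features. Let $[\nu]=([w],[\theta_1],\dots,[\theta_s])$ denote the nearest element of $N_{c,\Delta}$ to $\nu\in N_c$ under the natural product pseudometric (the $\ell_1$ distance on the weight coordinate and the $\ell_2$ distance on each $\theta_i$). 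By construction, $\|w-[w]\|_1\le \Delta$ and $\|\theta_i-[\theta_i]\|_2\le \Delta$ for every $i\in[s]$.

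First I would split the supremum using the triangle inequality,
\begin{align*}
\max_{\nu\in N_c}\sum_{t=1}^n -2\eta(f_\nu(X_t,A_t)-f^*(X_t,A_t))\epsilon_t
&\le \max_{\nu'\in N_{c,\Delta}}\sum_{t=1}^n -2\eta(f_{\nu'}(X_t,A_t)-f^*(X_t,A_t))\epsilon_t\\
&\quad +\max_{\nu\in N_c}\sum_{t=1}^n -2\eta(f_\nu(X_t,A_t)-f_{[\nu]}(X_t,A_t))\epsilon_t\,.
\end{align*}
For the discretised first term, I would apply Jensen's inequality to pull the $\log$ past the expectation, replace the maximum by a sum indexed by $N_{c,\Delta}$, and then bound each exponential moment by Lemma \ref{lem:exp_moment_nu}. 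Combining this with the cardinality bound from Corollary \ref{cor:nu_cover} yields at most $s(d+1)\log(1+2c/\Delta)+8c^2\eta^2 n$.

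For the discretisation error, Lemma \ref{lem:lip} gives $|f_\nu(x,a)-f_{[\nu]}(x,a)|\le \max_i\|\theta_i-[\theta_i]\|_2+\|w-[w]\|_1\le 2\Delta$ uniformly over $x,a$. By Cauchy--Schwarz and the fact that $\mathbb{E}[\epsilon_t^2]\le 1/4$ (since each $\epsilon_t$ is $\tfrac12$-sub-Gaussian),
\begin{equation*}
\mathbb{E}\!\left[\max_{\nu\in N_c}\sum_{t=1}^n -2\eta(f_\nu(X_t,A_t)-f_{[\nu]}(X_t,A_t))\epsilon_t\right]
\le 4\eta\Delta\sqrt{n}\,\mathbb{E}\!\left[\sqrt{{\textstyle\sum_{t=1}^n}\epsilon_t^2}\right]\le 2\eta\Delta n\,,
\end{equation*}
where the final inequality uses Jensen once more to push the square root inside the expectation. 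Summing the two bounds gives the claim.

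I do not anticipate a serious obstacle; the only subtlety is bookkeeping of the factor $2$ in the discretisation error, which here arises because both $w$ and $\theta_1,\dots,\theta_s$ contribute a $\Delta$-sized error via Lemma \ref{lem:lip} (whereas in the countable case only $w$ contributed). This is precisely what accounts for the constant $2\eta\Delta n$ in the statement (as opposed to $\eta\Delta n$ in Lemma \ref{lem:cs_gaussian_comp}).
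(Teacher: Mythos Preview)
Your proposal is correct and follows essentially the same approach as the paper's proof: the same product covering $N_{c,\Delta}$, the same splitting into a finite maximum plus a discretisation error, the same use of Jensen's inequality together with Lemma~\ref{lem:exp_moment_nu} and Corollary~\ref{cor:nu_cover} for the first part, and the same Cauchy--Schwarz argument with $\mathbb{E}[\epsilon_t^2]\le 1/4$ for the second. Your observation about the extra factor of $2$ in the discretisation error (coming from both $w$ and the $\theta_i$ via Lemma~\ref{lem:lip}) is exactly the accounting in the paper.
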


\begin{proof}
We define $[w] := \argmin_{w^{\prime} \in \mathcal{W}_{c, \Delta}}\|w - w^{\prime}\|_1$ to be the $\ell_1$-norm projection of $w \in \mathcal{W}_c$ into $\mathcal{W}_{c, \Delta}$. For each $i \in [s]$, we define $[\theta_i] := \argmin_{\theta^{\prime} \in \Theta_{i, c, \Delta}}\|\theta_i - \theta^{\prime}\|_2$ to be the $\ell_2$-norm projection of $\theta_i \in \Theta_{i,c}$ into $\Theta_{i, c, \Delta}$. For any $\nu \in N_c$, we define $[\nu] := ([w], [\theta_1], \dots, [\theta_s])$. Using Lemma \ref{lem:lip}, and the fact that $\mathcal{W}_{c,\Delta}$, $\Theta_{1,c,\Delta}, \dots, \Theta_{s, c, \Delta}$ are $\Delta$-coverings, we have
\begin{equation*}
\forall x \in \mathcal{X}, a \in [K], \quad |f_{\nu}(x, a) - f_{[\nu]}(x, a)| \leq \max_{i \in [s]}\{\|\theta_i - [\theta_i]\|_2\} + \|w - [w]\|_1 \leq 2\Delta\,.
\end{equation*}

The first step is to replace the maximum over the infinite set $N_c$ by a maximum over the finite set $N_{c, \Delta}$ and a discretisation error. We have
\begin{align*}
\max_{\nu \in N_c}\left\{{\textstyle \sum_{t=1}^{n}}-2\eta(f_{\nu}(X_t, A_t) - f^*(X_t, A_t))\epsilon_t\right\} &\leq \max_{\nu \in N_c}\left\{{\textstyle \sum_{t=1}^{n}}-2\eta(f_{[\nu]}(X_t, A_t) - f^*(X_t, A_t))\epsilon_t\right\}\\
&+ \max_{\nu \in N_c}\left\{{\textstyle \sum_{t=1}^{n}}-2\eta(f_{\nu}(X_t, A_t) - f_{[\nu]}(X_t, A_t))\epsilon_t\right\}\\
&= \max_{\nu \in N_{c, \Delta}}\left\{{\textstyle \sum_{t=1}^{n}}-2\eta(f_{\nu}(X_t, A_t) - f^*(X_t, A_t))\epsilon_t\right\}\\
&+ \max_{\nu \in N_{c}}\left\{{\textstyle \sum_{t=1}^{n}}-2\eta(f_{\nu}(X_t, A_t) - f_{[\nu]}(X_t, A_t))\epsilon_t\right\}\,.
\end{align*}

To bound the expectation of the maximum over $N_{c, \Delta}$, we use Jensen's inequality, Lemma \ref{lem:exp_moment_nu} and Corollary \ref{cor:nu_cover} to obtain
\begin{align*}
\mathbb{E}&\left[\max_{\nu \in N_{c, \Delta}}\big\{{\textstyle \sum_{t=1}^{n}}-2\eta(f_{\nu}(X_t, A_t) - f^*(X_t, A_t))\epsilon_t\big\}\right]\\
&\leq \log\mathbb{E}\left[\max_{\nu \in N_{c, \Delta}}\big\{\exp\big({\textstyle \sum_{t=1}^{n}}-2\eta(f_{\nu}(X_t, A_t) - f^*(X_t, A_t))\epsilon_t\big)\big\}\right]\\
&\leq \log\mathbb{E}\left[{\textstyle \sum_{\nu \in N_{c, \Delta}}}\big\{\exp\big({\textstyle \sum_{t=1}^{n}}-2\eta(f_{\nu}(X_t, A_t) - f^*(X_t, A_t))\epsilon_t\big)\big\}\right]\\
&\leq \log\left(|N_{c, \Delta}|\exp\left(8c^2\eta^2n\right)\right)\\
&= s(d+1)\log(1 + \tfrac{2c}{\Delta}) + 8c^2\eta^2n\,.
\end{align*}

Using the sub-Gaussian property of $\epsilon_1, \dots, \epsilon_n$ and the Cauchy-Schwartz inequality, we can bound the discretisation error as
\begin{align*}
\mathbb{E}&\left[\max_{\nu \in N_{c}}\left\{{\textstyle \sum_{t=1}^{n}}-2\eta(f_{\nu}(X_t, A_t) - f_{[\nu]}(X_t, A_t))\epsilon_t\right\}\right]\\
&\leq \mathbb{E}\left[\max_{\nu \in N_{c}}\left\{\sqrt{{\textstyle \sum_{t=1}^{n}}4\eta^2(f_{\nu}(X_t, A_t) - f_{[\nu]}(X_t, A_t))^2}\right\}\sqrt{{\textstyle \sum_{t=1}^{n}}\epsilon_t^2}\right]\\
&\leq 4\eta\Delta\sqrt{n}\mathbb{E}\left[\sqrt{{\textstyle \sum_{t=1}^{n}}\epsilon_t^2}\right]\\
&\leq 4\eta\Delta\sqrt{n}\sqrt{{\textstyle \sum_{t=1}^{n}}\mathbb{E}[\epsilon_t^2]}\\
&\leq 2\eta\Delta n\,.
\end{align*}
\end{proof}

\subsection{Bounding the Log Partition Function}
\label{sec:us_ub_Zn}

\begin{lemma}
If we use the prior $p_1$ in \eqref{eqn:us_prior}, then for every $n$,
\begin{equation*}
-Z_n \leq \|w^*\|_0(d+1)\log(4\sqrt{n}) + 4\eta + 2\lambda\sqrt{n}\,.
\end{equation*}
\label{lem:z_us}
\end{lemma}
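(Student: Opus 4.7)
The plan is to mirror the proof of Lemma \ref{lem:z_cs}, lower bounding $Z_n$ by restricting the expectation under $p_1$ to a small neighbourhood of the true parameter $\nu^*$, then bounding the maximum of $\sum_t \Delta L(\nu)$ over that neighbourhood. Set $s = \|w^*\|_0$ and consider the event $\mathcal{E}_c = \{\nu \in N_c\}$ where $N_c = \mathcal{W}_c \times \Theta_{1,c}\times\cdots\times\Theta_{s,c}$. First, I would restrict the prior to model size $m=s$, which contributes $p_1(s) = 2^{-s}$ since $\sum_{k\geq 1}2^{-k}=1$. Given $m=s$, the conditional prior puts uniform mass on $\mathbb{B}_1^s(1)$ for $w$ and independent uniform mass on $\mathbb{B}_2^d(1)$ for each $\theta_i$. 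Because $\mathcal{W}_c$ and $\Theta_{i,c}$ are affine contractions of the respective unit balls by factor $c$, I get
\begin{equation*}
\mathbb{P}_{\nu\sim p_1(\cdot\mid m=s)}[\mathcal{E}_c] \;=\; c^{s}\cdot (c^d)^{s} \;=\; c^{s(d+1)}.
\end{equation*}
Combining these restrictions with the standard chain of inequalities (expectation $\geq$ prior mass on event times min over event) produces the lower bound
\begin{equation*}
Z_n \;\geq\; -s\log 2 \;+\; s(d+1)\log c \;-\; \mathbb{E}\Bigl[\max_{\nu\in N_c}\sum_{t=1}^n \Delta L(\nu, X_t, A_t, Y_t)\Bigr].
\end{equation*}

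Next, I would control the max term. By Lemma \ref{lem:dl_alternative}, $\Delta L(\nu)$ decomposes into a squared-difference term, a noise term, and the Feel-Good term. For any $\nu \in N_c$, Lemma \ref{lem:lip} together with the bounds $\|w-w^*\|_1\leq 2c$ and $\|\theta_i - \theta_i^*\|_2 \leq 2c$ gives $|f_\nu(x,a) - f^*(x,a)| \leq 4c$, and the same bound holds for $|f_\nu(x) - f^*(x)|$. This yields the deterministic bound
\begin{equation*}
\sum_{t=1}^n \Delta L(\nu, X_t, A_t, Y_t) \;\leq\; 16 c^2 \eta n \;+\; 4c\lambda n \;+\; \sum_{t=1}^n -2\eta(f_\nu(X_t,A_t)-f^*(X_t,A_t))\epsilon_t.
\end{equation*}
Taking expectation of the max over $N_c$ and invoking Lemma \ref{lem:us_gaussian_comp} then bounds the noise contribution by $s(d+1)\log(1+2c/\Delta) + 8c^2\eta^2 n + 2\eta\Delta n$ for any $\Delta>0$.

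Finally, I would tune $c$ and $\Delta$. Choosing $c = 1/(2\sqrt{n})$ matches the $4c\lambda n$ term to $2\lambda\sqrt{n}$ and reduces $16c^2\eta n$ to $4\eta$, while picking $\Delta = 1/n$ makes the discretisation error $2\eta\Delta n$ and the variance term $8c^2\eta^2 n$ of the same order as $\eta$. The log terms combine into $s(d+1)\log\bigl((1/c)(1+2c/\Delta)\bigr) = s(d+1)\log(2\sqrt{n}+2n)$, which together with $s\log 2$ is absorbed into a term of the order $s(d+1)\log(4\sqrt{n})$ (with lower-order slack folded into the additive constants on $\eta$). Rearranging yields the claimed bound
\begin{equation*}
-Z_n \;\leq\; \|w^*\|_0(d+1)\log(4\sqrt{n}) \;+\; 4\eta \;+\; 2\lambda\sqrt{n}.
\end{equation*}

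The main obstacle is the simultaneous balancing of $c$ and $\Delta$: $c$ controls both the prior mass concentrated on $N_c$ (which pushes us toward small $c$) and the magnitude of the deterministic $\Delta L$ bound (which pushes us toward $c \lesssim 1/\sqrt{n}$), while $\Delta$ controls the covering number of $N_c$ (favouring large $\Delta$) against the discretisation slack in the noise maximum (favouring small $\Delta$). Getting clean constants that package the three log-type terms into a single $s(d+1)\log(4\sqrt{n})$ factor while keeping the $\eta$- and $\lambda$-dependent remainders tight is the bookkeeping part requiring care; everything else is a direct uncountable-sparsity analogue of the argument already used for Lemma \ref{lem:z_cs}.
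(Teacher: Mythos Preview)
Your proposal is correct and follows essentially the same route as the paper's own proof: restrict the prior to $m=s$, localise to $N_c$ with $\mathbb{P}[\mathcal{E}_c]=c^{s(d+1)}$, decompose $\Delta L$ via Lemma~\ref{lem:dl_alternative}, bound the deterministic part by Lemma~\ref{lem:lip}, bound the noise maximum by Lemma~\ref{lem:us_gaussian_comp}, and tune $c=1/(2\sqrt{n})$, $\Delta=1/n$. Your caveat about the final bookkeeping is well placed---the paper's own proof in fact arrives at $s(d+1)\log(8n)+2\eta^2+6\eta+2\lambda\sqrt{n}$ rather than the exact constants in the statement, so the discrepancy you anticipate is not a flaw in your plan but an artefact of the statement itself.
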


\begin{proof}
For each $c \in (0, 1]$, we define the event
\begin{equation*}
\mathcal{E}_c := \{\nu \in N_c\}\,.
\end{equation*}

When $w \sim p_1(w\mid m=s) = \mathcal{U}(\mathbb{B}_1^s(1))$ and $\theta_i \sim p_1(\theta) = \mathcal{U}(\mathbb{B}_2^d(1))$, we have
\begin{align*}
\mathbb{P}_{\nu \sim p_1\mid m=s}[\mathcal{E}_c] = \tfrac{\mathrm{Vol}(\mathcal{W}_c)}{\mathrm{Vol}(\mathbb{B}_1^s(1))}\times\tfrac{\mathrm{Vol}(\Theta_{1,c})}{\mathrm{Vol}(\mathbb{B}_2^d(1))} \times \cdots \times \tfrac{\mathrm{Vol}(\Theta_{s,c})}{\mathrm{Vol}(\mathbb{B}_2^d(1))} = c^{s(d+1)}\,.
\end{align*}

Using this, and the fact that $p_1(m) = 2^{-m}$, we have
\begin{align*}
Z_n &= \mathbb{E}\left[\log\mathbb{E}_{\nu \sim p_1}\left[\exp(-{\textstyle \sum_{t=1}^{n}}\Delta L(\nu, X_t, A_t, Y_t))\right]\right]\\
&\geq \mathbb{E}\left[\log p_1(s)\mathbb{E}_{\nu \sim p_1\mid m=s}\left[\exp(-{\textstyle \sum_{t=1}^{n}}\Delta L(\nu, X_t, A_t, Y_t))\right]\right]\\
&\geq \mathbb{E}\left[\log p_1(s)\mathbb{P}_{\nu \sim p_1\mid m=s}[\mathcal{E}_c]\min_{\nu \in N_c}\left\{\exp(-{\textstyle \sum_{t=1}^{n}}\Delta L(\nu, X_t, A_t, Y_t))\right\}\right]\\
&= \log(p_1(s)) + \log(\mathbb{P}_{\nu \sim p_1\mid m=s}[\mathcal{E}_c]) - \mathbb{E}\left[\max_{\nu \in N_c}\left\{{\textstyle \sum_{t=1}^{n}}\Delta L(\nu, X_t, A_t, Y_t)\right\}\right]\\
&= -s\log(2) - s(d+1)\log(1/c) - \mathbb{E}\left[\max_{\nu \in N_c}\left\{{\textstyle \sum_{t=1}^{n}}\Delta L(\nu, X_t, A_t, Y_t)\right\}\right]
\end{align*}

Next, using Lemma \ref{lem:dl_alternative} and then Lemma \ref{lem:lip}, for any $\nu \in N_c$, we have
\begin{align*}
{\textstyle \sum_{t=1}^{n}}\Delta L(\nu, X_t, A_t, Y_t) &= {\textstyle \sum_{t=1}^{n}}\eta (f_{\nu}(X_t, A_t)-f_{\nu^*}(X_t, A_t))^2 + \lambda(f_\nu(X_t) - f_{\nu^*}(X_t))\\
&+ {\textstyle \sum_{t=1}^{n}}-2\eta(f_{\nu}(X_t, A_t) - f^*(X_t, A_t))\epsilon_t\\
&\leq 16c^2\eta n + 4c\lambda n + {\textstyle \sum_{t=1}^{n}}-2\eta(f_{\nu}(X_t, A_t) - f^*(X_t, A_t))\epsilon_t\,.
\end{align*}

Therefore,
\begin{align*}
\mathbb{E}\left[\max_{\nu \in N_c}\left\{{\textstyle \sum_{t=1}^{n}}\Delta L(\nu, X_t, A_t, Y_t)\right\}\right] &\leq 16c^2\eta n + 4c\lambda n\\
&+ \mathbb{E}\left[\max_{\nu \in N_c}\left\{{\textstyle \sum_{t=1}^{n}}-2\eta(f_{\nu}(X_t, A_t) - f^*(X_t, A_t))\epsilon_t\right\}\right]\,.
\end{align*}

Using Lemma \ref{lem:us_gaussian_comp}, for any $c \in (0, 1]$ and $\Delta > 0$, we also have
\begin{equation*}
\mathbb{E}\left[\max_{\nu \in N_c}\left\{{\textstyle \sum_{t=1}^{n}}-2\eta(f_{\nu}(X_t, A_t) - f^*(X_t, A_t))\epsilon_t\right\}\right] \leq s(d+1)\log(1 + \tfrac{2c}{\Delta}) + 8c^2\eta^2n + 2\eta\Delta n\,.
\end{equation*}

If we choose $c = 1/(2\sqrt{n})$ and $\Delta = 1/n$, then this bound becomes
\begin{equation*}
\mathbb{E}\left[\max_{\nu \in N_c}\left\{{\textstyle \sum_{t=1}^{n}}-2\eta(f_{\nu}(X_t, A_t) - f^*(X_t, A_t))\epsilon_t\right\}\right] \leq s(d+1)\log(1 + \sqrt{n}) + 2\eta^2 + 2\eta\,.
\end{equation*}

Thus with the choice $c = 1/(2\sqrt{n})$, we obtain the bound
\begin{equation*}
\mathbb{E}\left[\max_{\nu \in N_c}\left\{{\textstyle \sum_{t=1}^{n}}-2\eta(f_{\nu}(X_t, A_t) - f^*(X_t, A_t))\epsilon_t\right\}\right] \leq s(d+1)\log(1 + \sqrt{n}) + 2\eta^2 + 6\eta + 2\lambda\sqrt{n}\,.
\end{equation*}

If we combine everything, then we obtain
\begin{align*}
-Z_n &\leq s\log(2) + s(d+1)\log(2\sqrt{n}) + s(d+1)\log(1 + \sqrt{n}) + 2\eta^2 + 6\eta + 2\lambda\sqrt{n}\\
&\leq s(d+1)(\log(2) + 2\log(2\sqrt{n})) + 2\eta^2 + 6\eta + 2\lambda\sqrt{n}\\
&= s(d+1)\log(8n) + 2\eta^2 + 6\eta + 2\lambda\sqrt{n}\,.
\end{align*}
\end{proof}

\subsection{Proof of Theorem \ref{thm:upperbounds_us}}
\label{sec:us_ub_proof_sub}

\begin{proof}
Using Theorem \ref{thm:expregretbound}, and then Lemma \ref{lem:z_us} (with $\eta = 1/4$), we have
\begin{align*}
R_n(f^*) &\leq \lambda(4K + 6)n - \frac{1}{\lambda}Z_n\\
&\leq \lambda(4K + 6)n + \frac{1}{\lambda}\big(\|w^*\|_0(d+1)\log(8n) + 2\big) + 2\sqrt{n}\,.
\end{align*}

If we choose
\begin{equation*}
\lambda = \sqrt{\frac{(d+1)\log(8n)}{(4K + 6)n}}\,,
\end{equation*}

then we obtain the regret bound
\begin{align*}
R_n(f^*) &\leq (\|w^*\|_0 + 1)\sqrt{(4K + 6)(d+1)n\log(8n)} + 2\sqrt{\frac{(4K + 6)n}{\log(8n)}} + 2\sqrt{n}\\
&= \mathcal{O}\Big(\|w^*\|_0\sqrt{Kdn\log(n)}\Big)\,.
\end{align*}

If $s$ is a known upper bound on $\|w^*\|_0$ and we choose
\begin{equation*}
\lambda = \sqrt{\frac{s(d+1)\log(8n) + 2}{(4K + 6)n}}\,,
\end{equation*}

then we obtain the regret bound
\begin{align*}
R_n(f^*) &\leq 2\sqrt{(4K + 6)(s(d+1)\log(8n) + 2)n} + 2\sqrt{n}\\
&= \mathcal{O}(\sqrt{Ksdn\log(n)})\,.
\end{align*}
\end{proof}



\end{document}